\theoremstyle{plain}
\newtheorem{theorem}{Theorem}[section]
\newtheorem{lemma}[theorem]{Lemma}
\newtheorem{corollary}[theorem]{Corollary}
\theoremstyle{definition}
\newtheorem{definition}[theorem]{Definition}
\theoremstyle{remark}
\icmltitlerunning{Towards a general theory for scalable masked Transformers}
\begin{document}

\twocolumn[
\icmltitle{From block-Toeplitz matrices to differential equations on graphs: towards a general theory for scalable masked Transformers}

% It is OKAY to include author information, even for blind
% submissions: the style file will automatically remove it for you
% unless you've provided the [accepted] option to the icml2022
% package.

% List of affiliations: The first argument should be a (short)
% identifier you will use later to specify author affiliations
% Academic affiliations should list Department, University, City, Region, Country
% Industry affiliations should list Company, City, Region, Country

% You can specify symbols, otherwise they are numbered in order.
% Ideally, you should not use this facility. Affiliations will be numbered
% in order of appearance and this is the preferred way.
\icmlsetsymbol{equal}{*}

\begin{icmlauthorlist}
\icmlauthor{Krzysztof Choromanski}{equal,gbr,cu}
\icmlauthor{Han Lin}{equal,cu}
\icmlauthor{Haoxian Chen}{equal,cu}
\icmlauthor{Tianyi Zhang}{cu}
\icmlauthor{Arijit Sehanobish}{ir}
\icmlauthor{Valerii Likhosherstov}{uc}
\icmlauthor{Jack Parker-Holder}{uo}
\icmlauthor{Tamas Sarlos}{gr}
\icmlauthor{Adrian Weller}{uc,ati}
\icmlauthor{Thomas Weingarten}{g}
\end{icmlauthorlist}

\icmlaffiliation{gbr}{Google Brain Robotics}
\icmlaffiliation{cu}{Columbia University}
\icmlaffiliation{ir}{Independent Researcher}
\icmlaffiliation{uc}{University of Cambridge}
\icmlaffiliation{uo}{University of Oxford}
\icmlaffiliation{gr}{Google Research}
\icmlaffiliation{ati}{The Alan Turing Institute}
\icmlaffiliation{g}{Google}

\icmlcorrespondingauthor{Krzysztof Choromanski}{kchoro@google.com}

% You may provide any keywords that you
% find helpful for describing your paper; these are used to populate
% the "keywords" metadata in the PDF but will not be shown in the document
\icmlkeywords{Machine Learning, ICML}

\vskip 0.3in
]

% this must go after the closing bracket ] following \twocolumn[ ...

% This command actually creates the footnote in the first column
% listing the affiliations and the copyright notice.
% The command takes one argument, which is text to display at the start of the footnote.
% The \icmlEqualContribution command is standard text for equal contribution.
% Remove it (just {}) if you do not need this facility.

%\printAffiliationsAndNotice{}  % leave blank if no need to mention equal contribution
\printAffiliationsAndNotice{\icmlEqualContribution} % otherwise use the standard text.

\begin{abstract}
In this paper we provide, to the best of our knowledge, the first comprehensive approach for incorporating various masking mechanisms into Transformers architectures in a scalable way. We show that recent results on linear causal attention \cite{choromanski} and log-linear RPE-attention \cite{rpe-performers} are special cases of this general mechanism. However by casting the problem as a topological (graph-based) modulation of unmasked attention, we obtain several results unknown before, including efficient $d$-dimensional RPE-masking and graph-kernel masking. We leverage many mathematical techniques ranging from spectral analysis through dynamic programming and random walks to new algorithms for solving Markov processes on graphs. We provide a corresponding empirical evaluation.     
\end{abstract}

\vspace{-5mm}
\section{Introduction \& Related Work}
\label{sec:intro_related_work}

Transformers \cite{vaswani, gpt3, devlin} have revolutionized machine learning by reintroducing an \textit{attention mechanism} explicitly modeling complicated relationships between elementary ingredients of the ML models' inputs, e.g. words for text data, or patches/pixels for the image data \cite{vit-survey, dosovitskiy}. Crucially, attention quantifies these relationships via dynamic weights that depend on the input data. This architectural solution is the strength and at the same time the weakness of Transformer models.
An attention matrix scales quadratically in the length of the input sequence, making corresponding computations prohibitively expensive for longer inputs. 

Several solutions were proposed to address this limitation.
Local attention \cite{local_attention, parmar2} explicitly narrows down the attention context to a fixed-size window, effectively zeroing out most attention weights. In applications where long-range attention is crucial (e.g. protein modeling), other techniques were introduced. These include: (1) pooling mechanisms compressing sequences to shorter-ones agglomerating multiple-tokens signal \cite{avsec, funnel}, (2) hashing/clustering methods sparsifying attention by giving up attention modeling for tokens from different learnable hash-buckets/clusters \cite{reformer, routing}, (3) low-rank/kernel-based methods decomposing the attention matrix \cite{masked,choromanski, elu-perf, rfatt, nystromformer} and other \cite{cosformer}.

\begin{figure*}[h]
    \includegraphics[width=.99\linewidth]{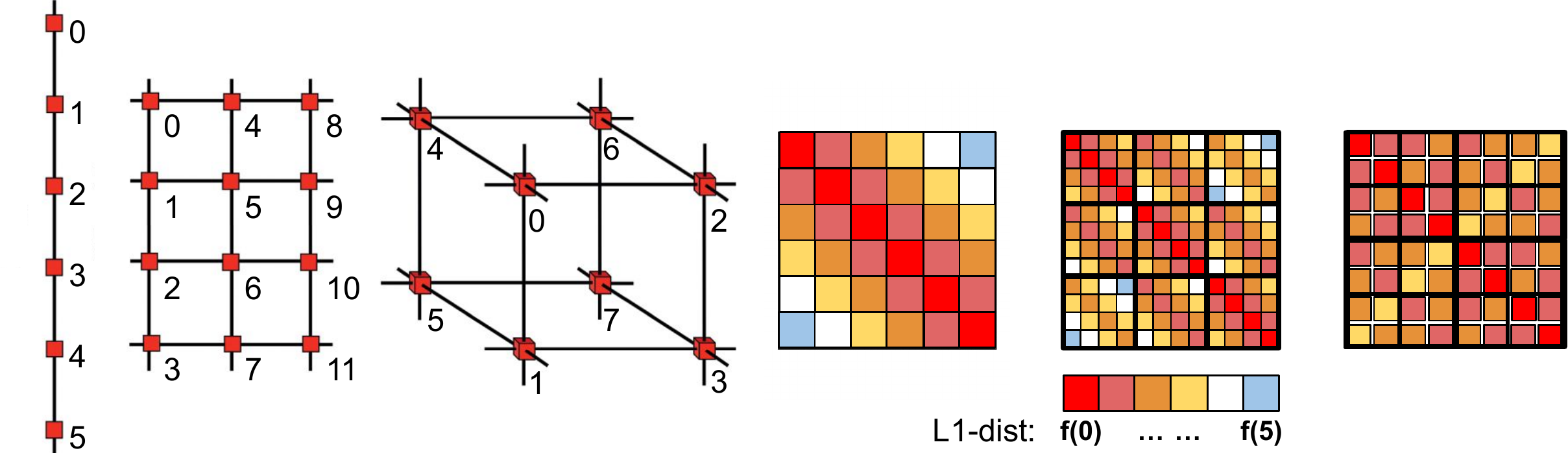}
    \caption{\small{\textbf{RPEs \& Beyond:} The $(i,j)$-entry of the regular RPE-mask is a (learnable) function $f$ of the distance $i-j$ between the ith and jth token in the input sequence that can be interpreted as a 1d-grid, thus it has the so-called \textit{Toeplitz} structure (first graph and colored-matrix in the figure). The proposed $d$-dimensional RPE acts on the $d$-dimensional grid input with the length of the shortest path $d(i,j)$ between node $i$ and $j$ in the gird replacing expression $i-j$ in the corresponding mask ($d=2$ includes image input and $d=3$, video input, see: second graph/matrix and third graph/matrix respectively). The corresponding mask is no longer Toeplitz, but is \textit{$d$-level block-Toeplitz}. Interestingly, all these matrix classes support fast matrix-vector multiplication (via Fast Fourier Transform) and thus, based on our first result, corresponding masked low-rank attention can be performed in sub-quadratic time (see Sec. \ref{sec:block-toeplitz}).}}
\label{fig:grid}
\vspace{-3mm}
\end{figure*}

Masking is a %n additional 
powerful mechanism altering the %regular 
attention matrix by incorporating structural inductive bias. Flagship examples include (1) \textit{causal attention}, applied in generative Transformers \citep{xlnet}, where the arrow of time induces token-ordering with tokens not attending to their successors in the sequences, (2) \textit{relative positional encoding} (RPE, \citealp{rpe-base}) reducing interactions between distant tokens (but via a much more general mechanism than local attention) and (3) \textit{graph attention} incorporating topological signal from the graph \cite{yingcai, velickovic}. RPE-mechanisms were shown to significantly improve speech models \cite{speech1, zhou} and masks obtained from shortest-path length matrices were recently demonstrated to close the gap between the best customized graph neural networks models and Transformers \cite{short-path}. Straightforward application of the masking mechanism requires materialization of the attention matrix and consequently - impractical  quadratic time complexity for long input sequences (or large graphs).

In this paper we aim to answer the following question:
\textit{Under which conditions can masking be incorporated into attention mechanisms in a scalable way, i.e. in sub-quadratic time complexity in the number of input tokens?}

So far this question was answered only partially. Causality was incorporated in linear time into linear low-rank attention via the so-called \textit{prefix sum mechanism} by \citet{choromanski}. The same was proven recently for the special class of %the so-called 
\textit{stochastic RPEs} \cite{liutkus}. Even more recently, a log-linear algorithm (applying Fast Fourier Transform) for incorporating general RPEs into low-rank attention was given by \citet{rpe-performers}. All these results leverage low-rank attention since so far that was the only known scalable mechanism which can approximate in particular regular dense softmax attention. %That is why 
Hence, the starting point of our analysis is also a low-rank attention model. 

Our contributions in this paper are as follows:
\vspace{-2.5mm}
\begin{enumerate}
    \item We answer the above question in Sec. \ref{sec:base_theory} by providing a surprisingly simple characterization of the efficient masking mechanisms: \textit{as long as the masking-matrix (element-wise multiplied with the regular attention matrix) supports sub-quadratic matrix-vector multiplication, the corresponding mechanism can be incorporated into low-rank attention in sub-quadratic time}. Interestingly, as we explain later, this result includes all mentioned partial results as special cases. 
    \vspace{-1.5mm}
    \item We present multiple consequences, leading in particular to novel scalable $d$-dimensional RPE mechanisms that can be applied in image and video processing (see Fig. \ref{fig:grid} and Sec. \ref{sec:block-toeplitz}), efficient implementations for the low-rank attention with \textit{padding} and \textit{packing} mechanisms (in common practical use for regular Transformers) (Sec. \ref{sec:base_theory}), and new scalable graph-based attention masking applying shortest-path signal and graph-diffusion kernels (Sec. \ref{sec:trees}, Sec. \ref{sec:graphs}). 
    \vspace{-1.5mm}
    \item Using our developed theory, we introduce a new masked-attention ML-model called \textit{graph kernel attention Transformer} (GKAT, Sec. \ref{sec:gkat}), and conduct comprehensive comparisons against \textbf{nine} other SOTA graph neural networks (GNNs) in Sec. \ref{sec:experiments}.
%\vspace{-3mm}
\end{enumerate}

We cast the masking problem as a topological (graph-based) modulation of unmasked attention, and leverage many mathematical techniques ranging from spectral analysis through dynamic programming on trees and random walks to new algorithms for solving Markov processes on graphs. 
The proofs of all theoretical results are given in the Appendix.

\vspace{-1mm}
\section{Preliminaries}
\label{sec:preliminaries}

We introduce notation used throughout the paper.

Denote by $L$ the number of input tokens. The attention used in a regular Transformer linearly projects their representations into three learnable matrices $\mathbf{Q}, \mathbf{K} \in \mathbb{R}^{L \times d_{QK}}$, $\mathbf{V} \in \mathbb{R}^{L \times d}$ called \textit{queries}, \textit{keys} and \textit{values} respectively. 

\begin{definition}[general masked attention]
\label{gen_graph_attention}
\textit{General masked softmax attention} is of the following form, where $\mathbf{N} \in \mathbb{R}^{L \times L}$ is the \textit{logits-mask}, and $\mathbf{A} \in \mathbb{R}^{L \times L}$ is the so-called \textit{masked attention matrix} (MAM):
\vspace{-1.5mm}
%\begin{gather}
\begin{align}
\begin{split}
\label{eq:attnorm1}
    \mathrm{Att}_{\mathrm{SM}}(\mathbf{Q}, \mathbf{K}, \mathbf{V},\mathbf{N}) = \mathbf{D}^{-1} \mathbf{A} \mathbf{V},  \\
    \mathbf{A} = \exp (\mathbf{N} + \mathbf{Q} \mathbf{K}^\top / \sqrt{d_{QK}}), \quad \mathbf{D} = \mathrm{diag} ( \mathbf{A} \mathbf{1}_L ). 
\vspace{-3mm}
\end{split}  
\end{align}    
%\end{gather}
Here $\exp (\cdot)$ is applied element-wise, $\mathbf{1}_L$ is the all-ones vector of length $L$, and $\mathrm{diag} (\cdot)$ is a diagonal matrix with the input vector as the diagonal. The time complexity of computing (\ref{eq:attnorm1}) is $O(L^2 d)$.
The above is a special instantiation of the \textit{general masked kernel attention} which is defined as:
\vspace{-1.5mm}
\begin{align}
\label{eq:attnorm2}
\begin{split}
    \mathrm{Att}_{\mathrm{K}}(\mathbf{Q}, \mathbf{K}, \mathbf{V},\mathbf{M}) = \mathbf{D}^{-1} \mathbf{A} \mathbf{V},  \\
    \mathbf{A} = \mathbf{M} \odot \mathcal{K}(\mathbf{Q},\mathbf{K}), \quad \mathbf{D} = \mathrm{diag} ( \mathbf{A} \mathbf{1}_L ), 
\end{split}
\end{align}
where $\odot$ denotes the element-wise (Hadamard) matrix product,  $\mathrm{K}:\mathbb{R}^{d} \times \mathbb{R}^{d} \rightarrow \mathbb{R}$ is some kernel function and $\mathcal{K}(\mathbf{Q},\mathbf{K})$ is a kernel matrix defined as: $\mathcal{K}(\mathbf{Q},\mathbf{K})_{i,j} = \mathrm{K}(\mathbf{q}_{i}^{\top},\mathbf{k}_{j}^{\top})$ for the $ith$ row $\mathbf{q}_{i}$ of $\mathbf{Q}$ and the jth row $\mathbf{k}_{j}$ of $\mathbf{K}$ respectively.
We call $\mathbf{A}^{\prime} = \mathcal{K}(\mathbf{Q},\mathbf{K})$ the unmasked attention matrix (UAM).
The softmax attention can be obtained from the kernel one by taking: $\mathrm{K}(\mathbf{x},\mathbf{y}) \overset{\mathrm{def}}{=} \mathrm{exp}(\frac{\mathbf{x}^{\top}\mathbf{y}}{\sqrt{d_{QK}}})$ (the so-called \textit{softmax kernel}) and $\mathbf{M} \overset{\mathrm{def}}{=} \mathrm{exp}(\mathbf{N})$ (element-wise exponentiation).
\end{definition}

Low-rank attention methods provide (approximate) attention computation in time linear in the length $L$ of the input sequence if no masking is applied (i.e. $\mathbf{M}$ is all-ones) and kernel $\mathrm{K}$ admits (at least in expectation) a dot-product decomposition, i.e. 
$
\mathrm{K}(\mathbf{x}, \mathbf{y}) = \mathbb{E}[\phi(\mathbf{x})^{\top}\phi(\mathbf{y})]
$
for some (usually randomized) mapping: $\phi: \mathbb{R}^{d_{QK}} \rightarrow \mathbb{R}^{m}$ (and some $m >0$).
Such a decomposition (in fact more than one!) exists in particular for the softmax kernel used in most applications of regular Transformers.
We call $\phi(\mathbf{u})$ a \textit{(random) feature map} (RFM) for $\mathbf{u} \in \mathbb{R}^{d}$. 
For $\mathbf{Q}^{\prime},\mathbf{K}^{\prime} \in \mathbb{R}^{L \times m}$ with rows given as $\phi(\mathbf{q}_{i}^{\top})^{\top}$ and $\phi(\mathbf{k}_{i}^{\top})^{\top}$ respectively,
RFM-based kernel linearization leads directly to the efficient unmasked attention mechanism of the form:
\begin{align}
\begin{split}
    \widehat{\mathrm{Att}_\mathrm{K}} (\mathbf{Q}, \mathbf{K}, \mathbf{V}) = \widehat{\mathbf{D}}^{-1} (\mathbf{Q}^{\prime}((\mathbf{K}^{\prime})^{\top} \mathbf{V})), \\
    \quad \widehat{\mathbf{D}} = \mathrm{diag} (\mathbf{Q}^{\prime}((\mathbf{K}^{\prime})^{\top} \mathbf{1}_L) ). \label{performers_attention}
\end{split}    
\end{align} 
Here $\widehat{\mathrm{Att}_{\mathrm{K}}}$ stands for the approximate attention and brackets indicate the order of computations. It is easy to see that such a mechanism is characterized by time complexity $O(Lmd)$ as opposed to $O(L^{2}d)$ for regular attention. If $m \ll L$, computational gains are obtained.

\vspace{-3mm}
\section{Fast matrix-vector product is all you need}
\label{sec:base_theory}

Our first result, a natural extension of the theoretical analysis by \citet{rpe-performers}, shows that as long as mask $\mathbf{M} \in \mathbb{R}^{L \times L}$ supports sub-quadratic matrix-vector multiplication, it can be incorporated into low-rank attention in sub-quadratic time. This is explained in Lemma \ref{first_mask_lemma}.

\begin{algorithm}[H]
\caption{General Efficient Low-Rank Masked Attention}
\textbf{Input:}  Query/key matrices: $\mathbf{Q},\mathbf{K} \in \mathbb{R}^{L \times d_{QK}}$, value matrix $\mathbf{V} \in \mathbb{R}^{L \times d}$, mask $\mathbf{M} \in \mathbb{R}^{L \times L}$, procedure $\mathrm{FastMult}_{\mathbf{M}}:\mathbb{R}^{L} \rightarrow \mathbb{R}^{L}$ calculating $\mathbf{Mx}$ (or its approximation) for the input $\mathbf{x} \in \mathbb{R}^{L}$, kernel feature map: $\phi:\mathbb{R}^{d_{QK}} \rightarrow \mathbb{R}^{m}$.  $\mathrm{vec}(\cdot)$ denotes vectorization. \; \\
\textbf{Output:} Masked low-rank attention embeddings using $\phi$. \; \\
1. Compute matrices $\mathbf{V}^{1} \in \mathbb{R}^{L \times (md)}$, $\mathbf{V}^{2} \in \mathbb{R}^{L \times m}$ with rows defined as:
$\mathbf{V}^{1}_{i:}=\mathrm{vec}(\phi(\mathbf{k}_{i}^{\top})\mathbf{v}_{i})$, $\mathbf{V}^{2}_{i:}=\phi(\mathbf{k}_{i}^{\top})^{\top}$, where $\mathbf{k}_{i}$/$\mathbf{v}_{i}$ stands for the ith row of $\mathbf{K}$/$\mathbf{V}$. \; \\
2. Take $\tilde{\mathbf{D}}^{1}=[\mathrm{FastMult}_{\mathbf{M}}(\mathbf{V}^{1}_{:1}),...,\mathrm{FastMult}_{\mathbf{M}}(\mathbf{V}^{1}_{:md})] \in \mathbb{R}^{L \times md}$, $\tilde{\mathbf{D}}^{2} = [\mathrm{FastMult}_{\mathbf{M}}(\mathbf{V}^{2}_{:1}),...,\mathrm{FastMult}_{\mathbf{M}}(\mathbf{V}^{2}_{:m})] \in \mathbb{R}^{L \times m}$ for $\mathbf{V}^{1/2}_{:i}$ denoting ith column of $\mathbf{V}^{1/2}$.\; \\
3. Output the embedding $\mathbf{r}_{i}$ of the ith tokens as:
$\mathbf{r}_{i} = \frac{\phi(\mathbf{q}_{i}^{\top})^{\top}\mathrm{devec}(\tilde{\mathbf{D}}^{1}_{i:})}{\phi(\mathbf{q}_{i}^{\top})^{\top}(\tilde{\mathbf{D}}^{2}_{i:})^{\top}}$, where $\mathbf{q}_{i}$ is the ith row of $\mathbf{Q}$ and $\mathrm{devec}(\cdot)$ devectorizes its input back to $\mathbb{R}^{m \times d}$. 
\label{alg:main}
\end{algorithm}

\begin{lemma}[Tractable Mask Lemma]
\label{first_mask_lemma}
Assume that mask $\mathbf{M} \in \mathbb{R}^{L \times L}$ from Definition \ref{gen_graph_attention} supports matrix-vector multiplication in time $T_{\mathbf{M}}(L)$. Then the general masked kernel attention algorithm with mask $\mathbf{M}$ can be implemented in time $O((T_{\mathbf{M}}(L)+L)md)$.
\end{lemma}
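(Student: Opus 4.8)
The plan is to run Algorithm~\ref{alg:main} with the supplied feature map $\phi$ and the black-box routine $\mathrm{FastMult}_{\mathbf M}$, show that it outputs the rows of the low-rank version of $\mathrm{Att}_{\mathrm K}(\mathbf Q,\mathbf K,\mathbf V,\mathbf M)$ — i.e. the one obtained via the kernel linearization $\mathcal K(\mathbf Q,\mathbf K)\approx\mathbf Q'(\mathbf K')^\top$ as in \eqref{performers_attention} — and then bound the cost of its three steps.

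\emph{Correctness.} After linearization, $\mathbf A_{i,j}=\mathbf M_{i,j}\,\phi(\mathbf q_i^\top)^\top\phi(\mathbf k_j^\top)$. Since the kernel factor $\phi(\mathbf q_i^\top)^\top\phi(\mathbf k_j^\top)$ is a scalar, the $i$-th row of $\mathbf A\mathbf V$ factors through an $m\times d$ matrix:
\[
(\mathbf A\mathbf V)_{i}=\sum_{j=1}^{L}\mathbf M_{i,j}\big(\phi(\mathbf q_i^\top)^\top\phi(\mathbf k_j^\top)\big)\mathbf v_j=\phi(\mathbf q_i^\top)^\top\mathbf G_i,\qquad \mathbf G_i:=\sum_{j=1}^{L}\mathbf M_{i,j}\,\phi(\mathbf k_j^\top)\mathbf v_j,
\]
and similarly $(\mathbf A\mathbf 1_L)_i=\phi(\mathbf q_i^\top)^\top\mathbf H_i$ with $\mathbf H_i:=\sum_j\mathbf M_{i,j}\phi(\mathbf k_j^\top)\in\mathbb R^m$. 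Now observe that applying $\mathrm{FastMult}_{\mathbf M}$ column-by-column is exactly left multiplication by $\mathbf M$, so $\tilde{\mathbf D}^1_{i:}=\sum_j\mathbf M_{i,j}\mathbf V^1_{j:}$ and $\tilde{\mathbf D}^2_{i:}=\sum_j\mathbf M_{i,j}\mathbf V^2_{j:}$. Because $\mathbf V^1_{j:}=\mathrm{vec}(\phi(\mathbf k_j^\top)\mathbf v_j)$ and $\mathrm{vec}$ is linear, $\tilde{\mathbf D}^1_{i:}=\mathrm{vec}(\mathbf G_i)$, hence $\mathrm{devec}(\tilde{\mathbf D}^1_{i:})=\mathbf G_i$; likewise $\tilde{\mathbf D}^2_{i:}=\mathbf H_i^\top$. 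Substituting into Step~3, $\mathbf r_i=\phi(\mathbf q_i^\top)^\top\mathbf G_i\big/\phi(\mathbf q_i^\top)^\top\mathbf H_i=(\mathbf A\mathbf V)_i/(\mathbf A\mathbf 1_L)_i=(\mathbf D^{-1}\mathbf A\mathbf V)_i$, the desired $i$-th output. If $\mathrm{FastMult}_{\mathbf M}$ only returns an approximation of $\mathbf M\mathbf x$, the computation is unchanged and the output is the corresponding approximation.

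\emph{Complexity.} Step~1 forms $L$ outer products $\phi(\mathbf k_i^\top)\mathbf v_i\in\mathbb R^{m\times d}$ and $L$ vectors in $\mathbb R^m$, costing $O(Lmd)$. Step~2 issues $md$ calls to $\mathrm{FastMult}_{\mathbf M}$ (one per column of $\mathbf V^1$) plus $m$ more (columns of $\mathbf V^2$), for $O\big((md+m)\,T_{\mathbf M}(L)\big)=O\big(md\,T_{\mathbf M}(L)\big)$, and writing out $\tilde{\mathbf D}^1,\tilde{\mathbf D}^2$ is $O(Lmd)$. Step~3 performs, per token $i$, one vector-matrix product $\phi(\mathbf q_i^\top)^\top\mathbf G_i$ ($O(md)$) and one length-$m$ dot product, i.e. $O(Lmd)$ total. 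Summing, the runtime is $O\big(Lmd+md\,T_{\mathbf M}(L)\big)=O\big((T_{\mathbf M}(L)+L)md\big)$.

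The only genuinely load-bearing point is the commutation in the middle of the correctness step: Hadamard-scaling by $\mathbf M$, the contraction over the key index $j$, and the rank-one factor $\phi(\mathbf k_j^\top)\mathbf v_j$ must interleave so that each of the $md$ scalar entries of $\mathbf G_i$, viewed as a function of $i$, is precisely $\mathbf M$ applied to a fixed column of $\mathbf V^1$; once a consistent $\mathrm{vec}/\mathrm{devec}$ convention is fixed this is immediate from linearity of $\mathrm{vec}$ and of matrix-vector multiplication. Everything else is bookkeeping.
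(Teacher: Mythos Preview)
Your proof is correct and follows essentially the same approach as the paper: write $\mathbf r_i$ as $\phi(\mathbf q_i^\top)^\top\mathbf G_i/\phi(\mathbf q_i^\top)^\top\mathbf H_i$, recognize that the stacked $\mathrm{vec}(\mathbf G_i)$ and $\mathbf H_i^\top$ are exactly $\mathbf M\mathbf V^1$ and $\mathbf M\mathbf V^2$, and count the $md+m=O(md)$ calls to $\mathrm{FastMult}_{\mathbf M}$ plus the $O(Lmd)$ overhead for forming $\mathbf V^1,\mathbf V^2$ and the final per-token contractions. Your write-up is in fact slightly more explicit than the paper's about the vectorization step and the per-step cost breakdown.
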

\vspace{-2.5mm}
The algorithm is given in the algorithmic box 1.
We analyze it in the Appendix, but the intuition is that, in the unmasked low-rank setting, attention embeddings could be obtained from the action of $\phi(\mathbf{q}_{i})$ on the fixed (token-independent) matrix of shape $\mathbb{R}^{m \times d}$ summarizing all the tokens, whereas in the masked case the matrix depends on each token, but can be obtained from mask-vector products.  
\vspace{-3mm}
\paragraph{Causal attention:} Note that the prefix-sum algorithm from \cite{choromanski} is a special instantiation of Algorithm 1. Indeed, causality is encoded by the lower-triangular mask $\mathbf{M}$ such that: $\mathbf{M}_{i,j} = 1$ for $j \leq i$ and $\mathbf{M}_{i,j}=0$ otherwise. Every product $\mathbf{Mx}$ is trivially a vector of prefix sums: $\mathbf{x}_{1}+...+\mathbf{x}_{i}$ for $i=1,...,L$ and thus can be computed in time $O(L)$. 
\vspace{-3mm}
\paragraph{Packing \& Padding:} Both masking mechanisms are standard Transformers' techniques used to optimize attention computation on TPUs. The former packs multiple sequences in one \textit{super-sequence}. Mask is used here to prevent cross-sequence attention. The latter adds \textit{fake} tokens at the end of the legitimate input sequence (used if input's length varies). Mask is used here to prevent attention to fake tokens. Both masks $\mathbf{M}$ trivially support linear matrix-vector multiplication (see: Fig. \ref{fig:padding_packing}) and thus both packing and padding can be incorporated into low-rank attention in time linear in $L$.

\begin{figure}[h]
    \includegraphics[width=.99\linewidth]{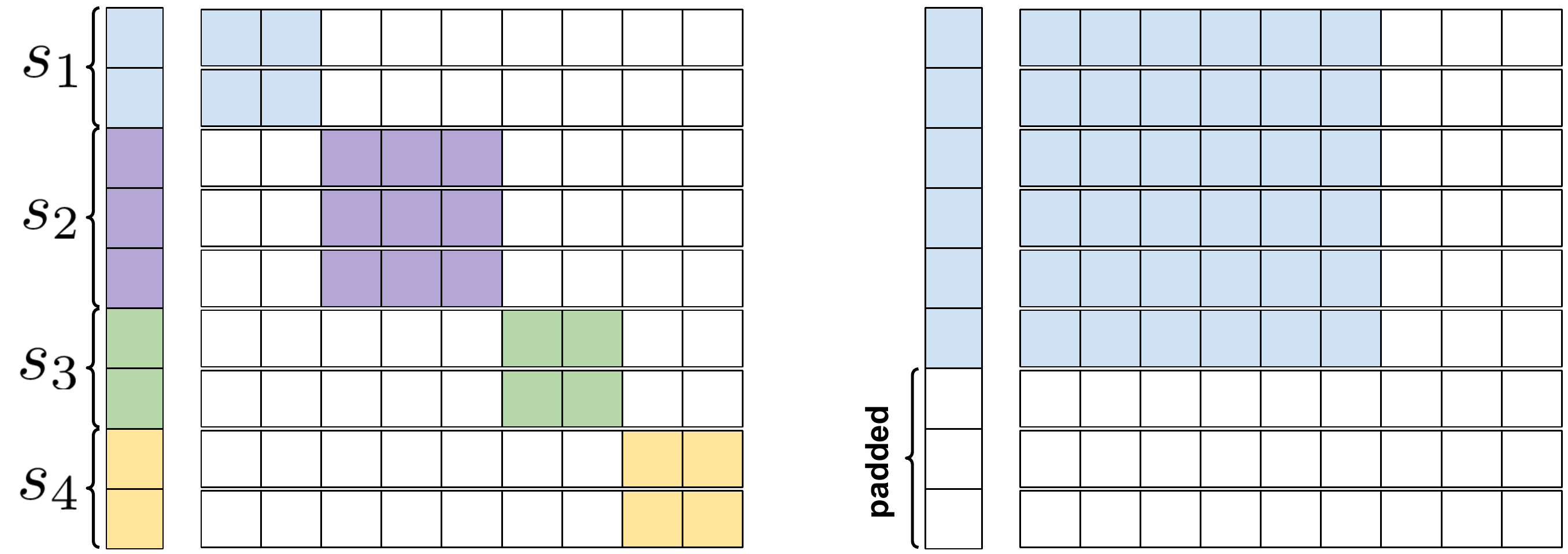}
    \caption{\small{\textbf{Left}: Padding with the super-sequence consisting of four sequences and its corresponding mask $\mathbf{M}$. \textbf{Right:} Packing with three fake padded tokens and its corresponding mask $\mathbf{M}$. For both masks, colored entries are equal to one and non-colored are equal to zero. Both masks trivially support linear matrix-vector multiplication.}}
\label{fig:padding_packing}
\end{figure}
\vspace{-4mm}
\subsection{Mask $\mathbf{M}$ as graph topology encoder}
From now on we will think about mask $\mathbf{M} \in \mathbb{R}^{L \times L}$ as a weighted adjacency matrix $\mathrm{Adj}(G)$ of some weighted graph $G=(V,E,W)$ with nodes/vertex-set $V$ of size $L$, edge set $E$ and edge-weight function $W:E \rightarrow \mathbb{R}$. Lemma \ref{first_mask_lemma} combined with this observation leads to several far-reaching conclusions regarding efficient incorporation of various masking mechanisms to Transformers, to which we devote the remaining part of our theoretical analysis. 

\subsection{$D$-dimensional Relative Positional Encodings}
\label{sec:block-toeplitz}

We need the following definition:
\begin{definition}[block-Toeplitz matrices]
\label{def:block-toeplitz}
We say that a matrix $\mathbf{M} \in \mathbb{R}^{L \times L}$ is Toeplitz (or 1-level block Toeplitz)
if there exists some $\xi:\mathbb{Z} \rightarrow \mathbb{R}$ such that $\mathbf{M}_{i,j}=\xi(i-j)$.
We say that $\mathbf{M} \in \mathbb{R}^{L \times L}$ is $d$-level block-Toeplitz for $d \geq 2$ if
$\mathbf{M}=(\mathbf{B}^{i,j})$ consists of block-matrices $\mathbf{B}^{i,j}$ taken from some set $\{\mathbf{A}_{1},...,\mathbf{A}_{r}\}$ of $(d-1)$-level block-Toeplitz matrices and if each block $\mathbf{B}^{i,j}$ is replaced with the index $k$ of its corresponding matrix $\mathbf{A}_{k}$, a Toeplitz matrix is obtained. 
\end{definition}

Consider the unweighted (i.e. all-one edge-weights) 1d-grid graph $G_{\mathrm{base}}$ (see: left graph in Fig. \ref{fig:grid}) and a complete graph (i.e. with all possible edges) $G$ obtained from it by defining each weight as $W_{{i,j}} = f(\mathrm{dist}_{G_{\mathrm{base}}}(i,j))$ for some (learnable) function $f:\mathbb{N} \rightarrow \mathbb{R}$ and where $\mathrm{dist}_{G_{\mathrm{base}}}(i,j)$ is the length of the shortest path between $i$ and $j$ in $G_{\mathrm{base}}$. If we define $\mathbf{M}=\mathrm{Adj}(G)$ then we get the regular RPE mechanism with the 1d-graph interpreted as the input sequence. 

Note that $\mathbf{M}$ defined in such a way is Toeplitz and thus supports $O(L\log(L))$ matrix-vector multiplication via Fast Fourier Transform (FFT). Thus low-rank RPE-masked attention can be conducted in $O(Ldm\log(L))$ time. This was the observation of \citet{rpe-performers}. What if we replace the 1d-grid with the d-dimensional grid and define $\mathbf{M}$ in the analogous way? The idea is to maintain the initial structure of the  topologically more complicated input, e.g. 2d-grid for images (with nodes as patches or even individual pixels) or 3d-grid for videos (with 2d-slices as different frames).

There is a particularly elegant answer to this question:
\begin{lemma}[$d$-dimensional RPEs]
\label{drpe-lemma}
Consider the generalized RPE-mechanism for the $d$-dimensional grid input defined above. Then there exists an ordering of input nodes such that $\mathbf{M}$ is a $d$-level block-Toeplitz matrix (see: Fig. \ref{fig:grid}).
\end{lemma}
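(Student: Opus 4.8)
The plan is to prove, by induction on $d$, a mildly strengthened statement: for every function $g:\mathbb{N}\cup\{0\}\to\mathbb{R}$ and all grid side-lengths $n_1,\dots,n_d$, the matrix $\mathbf{M}$ whose rows/columns are indexed by the $L=n_1\cdots n_d$ nodes $u=(u_1,\dots,u_d)$ of the $d$-dimensional grid and whose entries are $\mathbf{M}_{u,v}=g(\mathrm{dist}_{G_{\mathrm{base}}}(u,v))$ becomes $d$-level block-Toeplitz once the nodes are listed in \emph{lexicographic} order of their coordinate tuples. Here I use the standard fact that in the unweighted $d$-dimensional grid the shortest-path distance is the $\ell_1$-distance, $\mathrm{dist}_{G_{\mathrm{base}}}(u,v)=\sum_{i=1}^{d}|u_i-v_i|$. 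Taking $g=f$ (with $g(0)$ recording whatever value sits on the diagonal of $\mathbf{M}$, be it $f(0)$ or $0$) recovers exactly the $d$-dimensional RPE mask of the lemma.

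First the base case $d=1$: lexicographic order is just $1,2,\dots,n_1$ and $\mathbf{M}_{i,j}=g(|i-j|)=\xi(i-j)$ with $\xi(t):=g(|t|)$, so $\mathbf{M}$ is Toeplitz in the sense of Definition~\ref{def:block-toeplitz}.

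For the inductive step I would first note that under the lexicographic order the $L$ nodes fall into $n_1$ consecutive groups of $P:=n_2\cdots n_d$ nodes each, the $a$-th group being all nodes with first coordinate $a$, themselves listed lexicographically in the remaining coordinates. Consequently $\mathbf{M}$ is an $n_1\times n_1$ array of $P\times P$ blocks $\mathbf{B}^{a,b}$, and the entry of $\mathbf{B}^{a,b}$ indexed by $(u_2,\dots,u_d)$ and $(v_2,\dots,v_d)$ equals $g\big(|a-b|+\sum_{i=2}^{d}|u_i-v_i|\big)$. Thus each $\mathbf{B}^{a,b}$ is precisely a matrix of the same form for the $(d-1)$-dimensional grid with side-lengths $n_2,\dots,n_d$ and with $g$ replaced by the shifted function $g_{|a-b|}$, where $g_k(s):=g(k+s)$. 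The induction hypothesis (applied to this arbitrary generating function) gives that each $\mathbf{B}^{a,b}$ is $(d-1)$-level block-Toeplitz; and, crucially, $\mathbf{B}^{a,b}$ depends on $(a,b)$ only through $|a-b|$. Collecting the (at most $n_1$) distinct blocks into a set $\{\mathbf{A}_0,\dots\}$ with $\mathbf{A}_k$ the block for $|a-b|=k$, replacing $\mathbf{B}^{a,b}$ by its index turns $\mathbf{M}$ into an $n_1\times n_1$ matrix whose $(a,b)$-entry is a function of $a-b$ alone, i.e. a Toeplitz matrix; by Definition~\ref{def:block-toeplitz} this is exactly what is needed for $\mathbf{M}$ to be $d$-level block-Toeplitz.

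There is no hard analytic content here; the only points needing care concern bookkeeping against the recursive Definition~\ref{def:block-toeplitz}. Specifically, (i) the induction must be stated for an arbitrary generating function $g$ and for possibly unequal side-lengths, so that the shifted functions $g_{|a-b|}$ arising in the step are themselves covered; and (ii) one must check that the matrix of block-indices is genuinely Toeplitz even if some blocks coincide --- which holds because $|a-b|$ is a function of $a-b$, and composing it with any relabelling that identifies coincident blocks is still a function of $a-b$. As a closing remark, since a $d$-level block-Toeplitz matrix on $L$ nodes supports an $O(L\log L)$ matrix--vector product via the $d$-dimensional FFT, combining this lemma with Lemma~\ref{first_mask_lemma} yields $d$-dimensional RPE-masked low-rank attention in $O(L d m\log L)$ time.
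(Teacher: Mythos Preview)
Your proof is correct and follows essentially the same approach as the paper: both argue by induction on $d$, slicing the grid along one coordinate so that the resulting blocks are themselves $(d-1)$-dimensional RPE masks with a shifted generating function, and then observing that the block pattern is Toeplitz. Your version is a bit more explicit about strengthening the induction hypothesis to an arbitrary generating function $g$ (and arbitrary side-lengths), which is precisely the flexibility the paper uses implicitly when it writes $\mathrm{dist}(v_1,v_2)=\mathrm{dist}(v_1,v_2')+|i_1-i_2|$.
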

Since $d$-level block-Toeplitz matrices support $O(L\log(L))$ matrix-vector multiplication via FFT for any fixed constant $d$ (see \citealp{block-toeplitz}), Lemma \ref{drpe-lemma} immediately leads to efficient corresponding masked attention computation.

\subsection{More general graph-masks using shortest-paths}
\label{sec:trees}
So far $G_{\mathrm{base}}$ was assumed to have a grid structure. What if we replace it with an arbitrary weighted graph? The following natural question arises:
\textit{Which condition does $G_{\mathrm{base}}$ and mapping $f:\mathbb{R} \rightarrow \mathbb{R}$ need to satisfy for the mask $\mathbf{M} \overset{\mathrm{def}}{=} [f(\mathrm{dist}_{G_{\mathrm{base}}}(i,j))]_{i,j=1,...,L}$ to support sub-quadratic matrix-vector multiplication ?}

We call such a pair $(G_{\mathrm{base}}, f)$ \textit{tractable}.
From what we have said so far, we conclude that:
\begin{corollary}
If $G_{\mathrm{base}}$ is an unweighted grid (of any dimensionality) then $(G_{\mathrm{base}}, f)$ is tractable $\forall f:\mathbb{R} \rightarrow \mathbb{R}$.
\end{corollary}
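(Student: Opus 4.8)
The plan is to derive the corollary directly from Lemma~\ref{drpe-lemma} together with the known fast-multiplication property of multilevel block-Toeplitz matrices, since the statement is essentially a repackaging of that lemma. First I would observe that in an unweighted $d$-dimensional grid $G_{\mathrm{base}}$ (the Cartesian product of $d$ paths), the shortest-path distance between nodes with coordinate vectors $u=(u_1,\dots,u_d)$ and $v=(v_1,\dots,v_d)$ is the Manhattan distance $\mathrm{dist}_{G_{\mathrm{base}}}(u,v)=\sum_{k=1}^d |u_k-v_k|$, which takes only nonnegative integer values. Consequently the entry $f(\mathrm{dist}_{G_{\mathrm{base}}}(u,v))$ depends only on the tuple of coordinate differences $(u_1-v_1,\dots,u_d-v_d)$, and the value of $f$ outside the finite set of achievable distances is irrelevant; this is exactly why the statement holds for \emph{every} $f:\mathbb{R}\rightarrow\mathbb{R}$ with no further assumptions.

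Next I would invoke Lemma~\ref{drpe-lemma}: ordering the nodes lexicographically by their grid coordinates turns $\mathbf{M}=[f(\mathrm{dist}_{G_{\mathrm{base}}}(i,j))]_{i,j}$ into a $d$-level block-Toeplitz matrix, the outer Toeplitz level being indexed by the first coordinate difference and each block being recursively $(d-1)$-level block-Toeplitz in the remaining coordinates. Then I would apply the standard fact (see \citealp{block-toeplitz}) that, for any fixed constant $d$, a $d$-level block-Toeplitz matrix of size $L\times L$ admits matrix-vector multiplication in $O(L\log L)$ time, e.g.\ by embedding it into a $d$-level circulant matrix and using the $d$-dimensional FFT. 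Finally, since reordering the nodes corresponds to conjugating $\mathbf{M}$ by a permutation matrix $\mathbf{P}$, and $\mathbf{M}\mathbf{x}=\mathbf{P}^\top\bigl((\mathbf{P}\mathbf{M}\mathbf{P}^\top)(\mathbf{P}\mathbf{x})\bigr)$ costs only $O(L)$ extra work, the $O(L\log L)$ bound transfers back to the original ordering. Hence $\mathbf{M}$ supports near-linear (in particular sub-quadratic) matrix-vector multiplication, so $(G_{\mathrm{base}},f)$ is tractable, and by Lemma~\ref{first_mask_lemma} the corresponding masked low-rank attention is then immediately sub-quadratic.

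I do not expect a genuine obstacle here; the only points requiring care are (i) justifying that arbitrary $f$ is admissible because grid distances are integers drawn from a finite set, and (ii) noting that the node ordering used to expose the block-Toeplitz structure does not affect matrix-vector multiplication complexity. If one wanted a self-contained argument that does not route through Lemma~\ref{drpe-lemma}, the alternative is to construct the lexicographic ordering explicitly and verify the nesting of Toeplitz levels by induction on $d$, with the base case $d=1$ being the classical observation that a distance-based RPE mask on a path is Toeplitz.
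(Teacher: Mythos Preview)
Your proposal is correct and follows essentially the same route as the paper: the corollary is presented there as an immediate consequence of Lemma~\ref{drpe-lemma} together with the cited $O(L\log L)$ FFT-based multiplication for $d$-level block-Toeplitz matrices, which is exactly what you do. Your additional remarks on why arbitrary $f$ is admissible and on permutation invariance are sound elaborations that the paper leaves implicit.
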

In several bioinformatics applications, e.g. molecular assembly trees \cite{assembly-trees},  the underlying input's topology is a forest (e.g. a tree). We prove the following:
\begin{lemma}
\label{tree-lemma}
If $G_{\mathrm{base}}$ is a forest and $f(z)=\exp(\tau(z))$ for affine mapping $\tau$, then $(G_{\mathrm{base}},f)$ is tractable and the related mask supports linear matrix-vector multiplication.
\end{lemma}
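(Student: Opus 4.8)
The plan is to use the fact that an affine exponent converts the additive shortest‑path distance on a \emph{tree} into a quantity that is multiplicative along the edges of the (unique) path, and then to evaluate the mask–vector product by a two‑pass dynamic program over the rooted tree. Writing $\tau(z)=az+b$ and setting $\lambda \overset{\mathrm{def}}{=} e^{a}$, $c \overset{\mathrm{def}}{=} e^{b}$, the mask becomes $\mathbf{M}_{i,j}=c\,\lambda^{\mathrm{dist}_{G_{\mathrm{base}}}(i,j)}$. If $G_{\mathrm{base}}$ is a forest with several components then, with the convention $\mathrm{dist}=\infty\Rightarrow\mathbf{M}_{i,j}=0$ (equivalently, ordering the nodes component by component), $\mathbf{M}$ is block‑diagonal with one block per tree, so a mask–vector product reduces to the analogous products on the individual trees; it therefore suffices to treat a single tree $T$ on $n\le L$ nodes and prove an $O(n)$ bound. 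The structural point that makes everything work is that in a tree the path between $u$ and $v$ is unique, so $\lambda^{\mathrm{dist}_{T}(u,v)}=\prod_{e\in\mathrm{path}(u,v)}\lambda$ — this is precisely where the tree (rather than general‑graph) hypothesis enters, since on a general graph shortest‑path lengths do not compose in this multiplicative way.

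Given a target vector $\mathbf{x}\in\mathbb{R}^{n}$, we must produce $y_v=\sum_{u}\lambda^{\mathrm{dist}_{T}(u,v)}x_u$ for every node $v$. First I would root $T$ arbitrarily and run a leaves‑to‑root (``down'') pass computing
\[
S_{\downarrow}(v)\overset{\mathrm{def}}{=}\sum_{u\in\mathrm{subtree}(v)}\lambda^{\mathrm{dist}_{T}(v,u)}x_u = x_v+\lambda\!\!\sum_{c\text{ child of }v}\!\! S_{\downarrow}(c).
\]
Then I would run a root‑to‑leaves (``up'') pass computing the complementary sums $S_{\uparrow}(v)\overset{\mathrm{def}}{=}\sum_{u\notin\mathrm{subtree}(v)}\lambda^{\mathrm{dist}_{T}(v,u)}x_u$, with $S_{\uparrow}=0$ at the root and, for a non‑root $v$ with parent $p$, the rerooting recursion
\[
S_{\uparrow}(v)=\lambda\bigl(S_{\uparrow}(p)+S_{\downarrow}(p)-\lambda\,S_{\downarrow}(v)\bigr),
\]
obtained by partitioning the nodes outside $\mathrm{subtree}(v)$ into $\{p\}$, the nodes outside $\mathrm{subtree}(p)$, and the subtrees of the siblings of $v$, each of which is exactly one edge farther from $v$ than from $p$. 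Finally $y_v=c\,(S_{\downarrow}(v)+S_{\uparrow}(v))$. Each pass touches every edge a constant number of times with $O(1)$ arithmetic per edge, so the mask–vector product costs $O(n)$, hence $O(L)$ over the whole forest; with $T_{\mathbf{M}}(L)=O(L)$, Lemma~\ref{first_mask_lemma} then yields a masked low‑rank attention algorithm running in $O(Lmd)$ time, proving tractability.

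The only genuinely delicate steps are (i) verifying that the up‑pass recursion counts every node outside the subtree exactly once, which is the disjoint‑partition argument above and again relies on uniqueness of paths in a tree, and (ii) committing to a convention for pairs of nodes in distinct components of the forest, which I resolve by the block‑diagonal reduction. I expect (i) to be the main place where care is needed; everything else is routine bookkeeping. Note that no degeneracies arise: $\lambda=e^{a}>0$ for every $a$, so there are no sign or division‑by‑zero issues in the recursions, and the constant $c=e^{b}$ merely rescales the output.
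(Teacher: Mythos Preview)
Your proposal is correct and follows essentially the same approach as the paper: root the tree, do a leaves-to-root pass computing subtree sums $S_{\downarrow}$ (the paper's $s_i$), then a root-to-leaves rerooting pass to recover the full sums (the paper writes the single recurrence $\mathbf{w}_i=\lambda\,\mathbf{w}_{p(i)}+(1-\lambda^{2})s_i$, which is algebraically identical to your $S_{\uparrow}$ formula after adding $S_{\downarrow}$). The only cosmetic difference is that the paper carries arbitrary edge weights $W(i,p(i))$ throughout (so $\lambda$ becomes $e^{a\,W(i,p(i))}$ on each edge), whereas your writeup is for unit-weight edges; the extension is immediate and does not change the argument.
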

\vspace{-2.0mm}
\textit{Sketch of the proof:} The efficient algorithm for computing $\mathbf{w}=\mathbf{Mx}$ in this case is an application of the dynamic programming method for rooted trees. The algorithm first computes for each node $i$ the following expression: $s_{i} = \sum_{j \in \mathcal{T}_{i}} \exp(\tau(\mathrm{dist}(i,j)))\mathbf{x}_{j}$, where $\mathcal{T}_{i}$ stands for the subtree rooted in $i$ (in the bottom-up fashion from leaves to the fixed root). This is followed by the computation of the following expression: $\mathbf{w}_{i} = \sum_{j \in \mathcal{T}} \exp(\tau(\mathrm{dist}(i,j)))\mathbf{x}_{j}$ for every node in the order from the root to the leaves (leveraging already computed $s_{i}$). Details are given in the Appendix and computations are illustrated in Fig. \ref{fig:tree}.

We find a comprehensive description of tractable $(\mathrm{G}_{base}, f)$ an exciting analytic and combinatorial open problem.

\begin{figure}[h]
    \includegraphics[width=.99\linewidth]{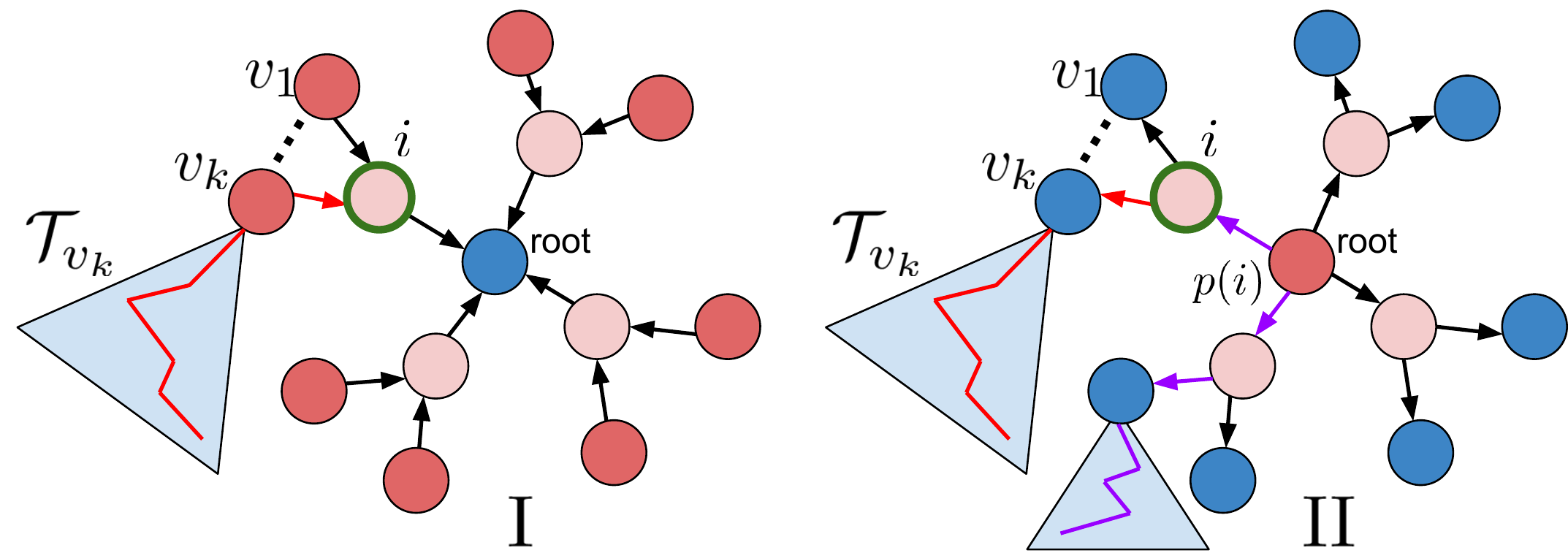}
    \caption{\small{Illustration of sketch of the proof of Lemma \ref{tree-lemma}. The directions of arrows show computation-flow. In phase I, $s_{i}$-terms are calculated in bottom-up fashion (from leaves to the root). The value of $s_{i}$ involving paths in $i$-rooted subtrees (red path with discarded directions) is updated based on $s_{v_{k}}$-terms involving paths in subtrees $\mathcal{T}_{v_{k}}$. To complete calculations, in phase II paths to nodes outside of the $i$-rooted tree are considered (purple path with directions discarded). Their contribution is calculated from the already computed $\mathbf{w}_{p(i)}$ for the parent $p(i)$ of node $i$ and $s_{i}$.}}
\label{fig:tree}
\end{figure}
\vspace{-3.5mm}
\subsection{Low-rank masking}
\label{sec:low-rank}
Note that in all previously considered cases, mask $\mathbf{M}$ is in general full-rank. However in several applications $\mathbf{M}$ can be assumed to have (at least in expectation) a low-rank decomposition, i.e.: $\mathbf{M}=\mathbb{E}[\mathbf{M}_{1}\mathbf{M}_{2}]$ for some (random) $\mathbf{M}_{1} \in \mathbb{R}^{L \times r}$, $\mathbf{M}_{2} \in \mathbb{R}^{r \times L}$ and some $0 < r \ll L$.
A flagship example is the \textit{stochastic RPE} mechanism presented in \cite{liutkus} corresponding to (logits-added) dot-product mask $\mathbf{N}$ translating to the softmax-kernel values mask $\mathbf{M}$. The latter one can be low-rank decomposed using any random feature map based softmax-kernel linearization mechanism, e.g. from \cite{choromanski}. In such a case, matrix-vector product $\mathbf{v}=\mathbf{Mx}$ can be computed (approximately) as: $\tilde{\mathbf{v}} = (\mathbf{M}_{1}(\mathbf{M}_{2}\mathbf{x}))$ in time $O(Lr)$, leading to overall time complexity $O(Lmrd)$ of the attention module using mask $\mathbf{M}$.

\section{Masking with graph kernels}
\label{sec:graphs}

Masks defined by shortest paths were shown to provide effective inductive bias for graph data (see \citealp{yingcai}), yet they cannot be interpreted as applying any valid kernel function on graph nodes and are very sensitive to small changes in the graph. A prominent class of kernel-functions $\mathrm{K}:V \times V \rightarrow \mathbb{R}$ defined on pairs of graphs nodes is the family of  \textit{graph-diffusion} or \textit{heat} kernels (GDKs). Thus it is natural to identify masks $\mathbf{M}$ for input graph data $G$ with the graph diffusion kernel matrices $\mathcal{K}_{\mathrm{K}}=[\mathrm{K}(i,j)]_{i,j=1,...,L}$. GDK is defined, for a hyperparameter $\lambda>0$ and $\mathbf{X}^{i}$ denoting the ith power of matrix $\mathbf{X}$, as:
\begin{equation}
\mathcal{K}_{\mathrm{K}} = \exp(-\lambda \mathbf{T}) \overset{\mathrm{def}}{=}\sum_{i=0}^{\infty} \frac{(-\lambda)^{i}\mathbf{T}^{i}}{i!},
\end{equation}
where either: $\mathbf{T}=\mathbf{L}$ for the Laplacian matrix $\mathbf{L} = \mathbf{D} - \mathrm{Adj}(G)$ and $\mathbf{D}=\mathrm{diag}([\mathrm{deg}(i)]_{i=1}^{L})$; or $\mathbf{T}=\mathbf{L}\mathbf{D}^{-1}$ (normalized Laplacian case) or $\mathbf{T} = - \mathrm{Adj}(G)$.

GDK is related to the diffusion process \cite{kondor} which describes in particular heat propagation. In a vacuum, the solution of the partial differential heat equation is the Gaussian-kernel, and in graphs it leads to GDK. Nodes better connected with each other (graph diffusion kernel quantifies it via the number of different-length walks with longer walks exponentially-deprioritized) give rise to larger kernel values. Finally, $t=\frac{1}{\lambda}$ can be interpreted as time when the solution is taken. GDK imprints topological signal of the propagation medium via left heat-signature. As $t \rightarrow \infty$ the kernel ``flattens" and the topological signal is lost.

Direct computation of the GDK matrix $\mathcal{K}_{\mathrm{K}}$ is of $O(L^{3})$ time complexity, thus prohibitively expensive even for sparse input graphs. However, a key observation is that Lemma \ref{first_mask_lemma} teaches us that for efficient masked low-rank attention we only need to compute efficiently the action $\mathrm{exp}(-\lambda \mathbf{T})\mathbf{x}$ of $\mathcal{K}_{\mathrm{K}}$ on a given $\mathbf{x} \in \mathbb{R}^{L}$. This leads to our next result.
\begin{theorem}[scalable Laplacian-GDK masking]
\label{lapl-theory}
Let a mask $\mathbf{M}$ be defined as $\mathbf{M} = \exp(-\lambda \mathbf{A})$,
for $\mathbf{A} = \mathbf{L}$ or $\mathbf{A}=\mathbf{L}\mathbf{D}^{-1}$, where $\mathbf{L}$ is the Laplacian of the input graph, and $\mathbf{D}=\mathrm{diag}([\mathrm{deg}(i)]_{i=1}^{L})$. Then low-rank masked attention can be computed in time $\tilde{O}((|E|+L)\log(2+\|\mathbf{A}\|_{\mathrm{F}})md)$, where $\tilde{O}$ hides $\mathrm{polylog}(L)$ factors, $|E|$ is the number of graph edges and $\|\cdot\|_{\mathrm{F}}$ is the Frobenius norm.
\end{theorem}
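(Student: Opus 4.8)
\emph{Proof plan.}
The entire statement reduces to Lemma~\ref{first_mask_lemma}: that lemma turns efficient masked low-rank attention into the single task of designing a routine $\mathrm{FastMult}_{\mathbf{M}}$ that, given $\mathbf{x}\in\mathbb{R}^{L}$, returns an approximation of $\mathbf{M}\mathbf{x}=\exp(-\lambda\mathbf{A})\mathbf{x}$ in time $T_{\mathbf{M}}(L)$, after which the total cost is $O((T_{\mathbf{M}}(L)+L)md)$. So the plan is to exhibit such a routine with $T_{\mathbf{M}}(L)=\tilde{O}\big((|E|+L)\log(2+\|\mathbf{A}\|_{\mathrm{F}})\big)$; substituting into Lemma~\ref{first_mask_lemma} then gives the claimed bound, since $(|E|+L)\log(2+\|\mathbf{A}\|_{\mathrm{F}})\ge L$. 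Two elementary observations drive this: (i) $\mathbf{L}=\mathbf{D}-\mathrm{Adj}(G)$ has only $O(|E|+L)$ nonzero entries, so a matrix--vector product with $\mathbf{L}$ (and with $\mathbf{D}^{\pm 1/2}$) costs $O(|E|+L)$; and (ii) $\lambda\mathbf{L}$ is symmetric positive semidefinite and is itself a (scaled) graph Laplacian, hence symmetric diagonally dominant (SDD), with $\|\exp(-\lambda\mathbf{L})\|_{2}\le 1$.

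For $\mathbf{A}=\mathbf{L}$ I would compute $\exp(-\lambda\mathbf{L})\mathbf{x}$ with a near-linear-time algorithm for the action of the exponential of a symmetric PSD, SDD matrix on a vector (the Orecchia--Sachdeva--Vishnoi-type approach built on fast Laplacian solvers): approximate $e^{-t}$ on $[0,\lambda\|\mathbf{L}\|_{2}]$ by a rational function with only $\mathrm{polylog}(\lambda\|\mathbf{L}\|_{2}/\varepsilon)$ poles (a sinc/Gaussian quadrature of the Laplace representation of $e^{-t}$), so that $\exp(-\lambda\mathbf{L})\mathbf{x}\approx\sum_{k}w_{k}(\sigma_{k}\mathbf{I}+\lambda\mathbf{L})^{-1}\mathbf{x}$, and solve each shifted system with a near-linear-time SDD/Laplacian solver — legitimate since $\sigma_{k}\mathbf{I}+\lambda\mathbf{L}$ is strictly SDD for $\sigma_{k}>0$. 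Each solve costs $\tilde{O}(|E|+L)$, and the number of solves is polylogarithmic, one factor of which is $\log(2+\lambda\|\mathbf{L}\|_{2})\le\log(2+\lambda)+\log(2+\|\mathbf{L}\|_{\mathrm{F}})$; the remaining logarithmic factors (in $L$, in $1/\varepsilon$, in $\lambda$) are absorbed by $\tilde{O}$. This yields $T_{\mathbf{M}}(L)=\tilde{O}\big((|E|+L)\log(2+\|\mathbf{L}\|_{\mathrm{F}})\big)$.

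For $\mathbf{A}=\mathbf{L}\mathbf{D}^{-1}$ (assume $G$ has no isolated vertices, else restrict to the non-isolated part) the matrix is not symmetric, but it is similar to the symmetric normalized Laplacian $\mathcal{L}=\mathbf{D}^{-1/2}\mathbf{L}\mathbf{D}^{-1/2}$ via $\mathbf{L}\mathbf{D}^{-1}=\mathbf{D}^{1/2}\mathcal{L}\mathbf{D}^{-1/2}$, whence $\exp(-\lambda\mathbf{L}\mathbf{D}^{-1})\mathbf{x}=\mathbf{D}^{1/2}\exp(-\lambda\mathcal{L})(\mathbf{D}^{-1/2}\mathbf{x})$. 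So $\mathbf{M}\mathbf{x}$ is obtained by two $O(L)$ diagonal rescalings around one call applying $\exp(-\lambda\mathcal{L})$ to a vector; since $\mathcal{L}$ is symmetric PSD with $\|\mathcal{L}\|_{2}\le 2$, even a plain Lanczos/Chebyshev approximation of $\exp$ runs in $\tilde{O}(|E|+L)$ here (and if one prefers the rational route, each shifted system $(\sigma\mathbf{I}+\lambda\mathcal{L})\mathbf{z}=\mathbf{b}$ reduces to the SDD system $(\sigma\mathbf{D}+\lambda\mathbf{L})(\mathbf{D}^{-1/2}\mathbf{z})=\mathbf{D}^{1/2}\mathbf{b}$). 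Combining both cases with Lemma~\ref{first_mask_lemma} gives the time $\tilde{O}\big((|E|+L)\log(2+\|\mathbf{A}\|_{\mathrm{F}})md\big)$.

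It remains to control error propagation, which is routine: $\|\exp(-\lambda\mathbf{L})\|_{2}\le 1$ (in the normalized case the conjugation inflates this only by $\sqrt{\max_{i}\mathrm{deg}(i)/\min_{i}\mathrm{deg}(i)}$), so choosing the per-solve accuracy $\varepsilon$ inverse-polynomial in $L$ keeps the approximate mask--vector products, hence the attention output of Algorithm~1, within any prescribed relative error while adding only $\mathrm{polylog}(L)$ factors already hidden by $\tilde{O}$; the denominators $\widehat{\mathbf{D}}$ of Eq.~(\ref{eq:attnorm2}) stay bounded away from zero because $\mathbf{M}=\exp(-\lambda\mathbf{A})$ has nonnegative entries with a strictly positive diagonal and the softmax kernel is positive. \textbf{The main obstacle} is the second step: importing a matrix-exponential-action algorithm whose cost depends only \emph{logarithmically} (not polynomially) on $\|\mathbf{A}\|$ — this is exactly where the Laplacian/SDD structure is indispensable, both to make each resolvent solve near-linear and to keep the number of resolvents polylogarithmic; the $\mathbf{L}\mathbf{D}^{-1}$ case additionally needs the diagonal similarity to recover that structure, since $\mathbf{L}\mathbf{D}^{-1}$ is neither symmetric nor SDD on its own.
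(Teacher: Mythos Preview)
Your proposal is correct and follows the same route as the paper: reduce to Lemma~\ref{first_mask_lemma} and then invoke the Orecchia--Sachdeva--Vishnoi near-linear-time algorithm for approximating $\exp(-\mathbf{A})\mathbf{x}$ when $\mathbf{A}$ is SDD. You are in fact more careful than the paper about the case $\mathbf{A}=\mathbf{L}\mathbf{D}^{-1}$, which is not symmetric (hence not SDD under the paper's own definition); your diagonal-similarity reduction to the symmetric normalized Laplacian, together with the observation that each shifted system becomes the genuinely SDD system $(\sigma\mathbf{D}+\lambda\mathbf{L})\mathbf{w}=\mathbf{D}^{1/2}\mathbf{b}$, cleanly fills that gap.
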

The theorem is a consequence of Lemma \ref{first_mask_lemma} and Theorem 1.2 from \cite{orecchia}. We see that if $|E|=o(L^{2})$, the masked attention mechanism is sub-quadratic in $L$.

\paragraph{Low-rank attention \& Markov processes with random initial distributions:} As noted by \citet{orecchia}, the heat kernel matrix for $\mathbf{T}=\mathbf{L}\mathbf{D}^{-1}$ can be interpreted as the
probability transition matrix of the discrete-time random walk where first the number of steps $i$ is sampled from a Poisson distribution with mean $\lambda$, and then $i$ steps of the natural random walk are performed on $G$. Looking at Algorithm 1, we conclude that here the low-rank structure of attention enables us to incorporate the GDK mask by solving that process in $md$ initial (randomized) distributions over nodes (randomization coming from mapping $\phi$) rather than in all $L$ one-hot initial distributions (that would correspond to the reconstruction of the entire transition matrix).

\paragraph{Remark:} The literature on efficiently computing the actions of matrix-exponentials (which is our main focus in this section) is very rich \cite{mohy}, partially because of straightforward applications in the theory of differential equations \cite{dongpingli}. In principle, each of these methods can be used by our algorithm. 

\subsection{Hypercubes with graph-diffusion kernels}

If the underlying graph is a hypercube, then GDK with $\mathbf{T}=\mathbf{L}$ has a closed-form formula. The following is true \cite{kondor}: 
$
\mathrm{K}_{\mathrm{GDK}}(i,j) \propto (\mathrm{tanh} \lambda)^{\mathrm{dist}(i,j)}
$
for the hyperbolic tangent $\mathrm{tanh}$.
Thus, as in Sec. \ref{sec:block-toeplitz}, the corresponding mask $\mathbf{M}$ is block-Toeplitz and hypercube-induced GDK-masking can be incorporated into low-rank attention in $O(Lmd\log(L))$ time.

\begin{figure*}[h]
    \centering
    \includegraphics[width=135mm,scale = 0.8]{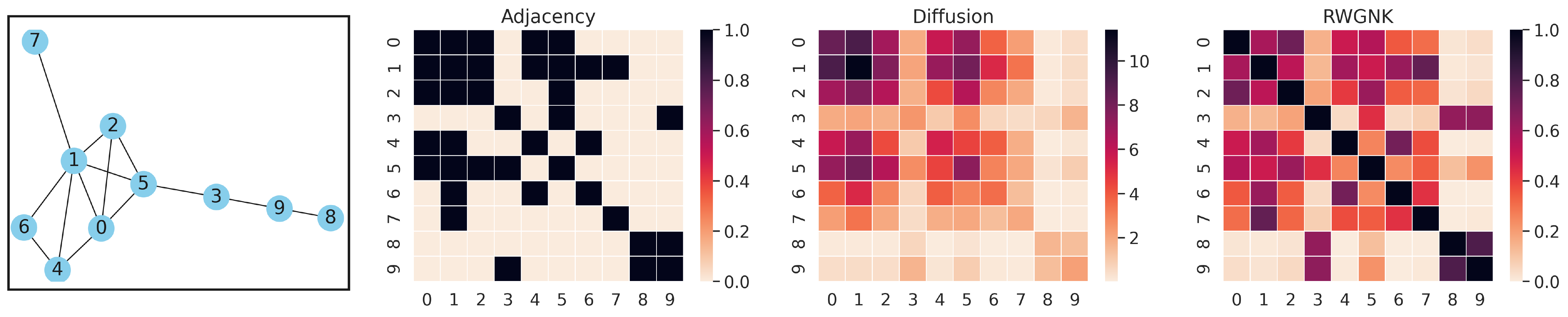}
    \vspace{-4.0mm}
    \caption{\small{From left to right: unweighted graph $\mathrm{G}$, its adjacency matrix $\mathrm{Adj}(\mathrm{G})$, its GDK matrix $\exp( \mathrm{Adj}(\mathrm{G}))$ and RWGNK-matrix with walk length of $3$ and $\alpha=1$. Colored cells measure the relationship among pairs of nodes (darker is stronger). The last two matrices can be thought of as continuous smoothings of $\mathrm{Adj}(\mathrm{G})$.}}
    \label{fig:heatmaps}
\end{figure*}

\subsection{Low-rank masking strikes back for GDKs}
\label{sec:gkat}

We now propose a proxy of the GDK with $\mathbf{T}=\mathrm{Adj}(G)$ such that the corresponding kernel matrix admits (in expectation) low-rank decomposition as in Sec. \ref{sec:low-rank}. Thus, based on the theory we developed, the corresponding mask $\mathbf{M}$ can be efficiently incorporated into low-rank attention with no need to call efficient solvers for the actions of matrix exponentials. We call our graph kernel the \textit{Random Walks Graph-Nodes Kernel} or RWGNK.

Intuitively, the value of the RWGNK for two nodes is given as a dot-product of two \textit{frequency vectors} that record visits in graph nodes of random walks beginning in the two nodes of interest. More formally, for the hyperparameters $\lambda, \alpha \geq 0,$ and two random walks $\omega(k)$, $\omega(l)$ with stopping probability $0 \leq p \leq 1$ (or of a fixed length) starting at $k$ and $l$ respectively, the RWGNK is given as:
\vspace{-3mm}
\begin{equation}
\label{rwgnk-eq}
\mathrm{K}^{\lambda,\alpha}_{p}(k,l)= \frac{\mathbb{E}_{\omega(k)}[f^{\omega(k),\lambda}_{k}]}{\|\mathbb{E}_{\omega(k)}[f^{\omega(k),\lambda}_{k}]\|_{2}^{\alpha}}\left(\frac{\mathbb{E}_{\omega(l)}[f^{\omega(l),\lambda}_{l}]}{\|\mathbb{E}_{\omega(l)}[f^{\omega(l), \lambda}_{l}]\|_{2}^{\alpha}}\right)^{\top}.     
\end{equation}
The (row) frequency vector $f^{\omega(h), \lambda}_{h}$ for $h \in \mathrm{V}$ is given as
$f^{\omega(h), \lambda}_{h}(i) \overset{\mathrm{def}}{=}\sum_{e \in \mathrm{L}^{\omega(h)}(i)} \lambda^{e}$, where $\mathrm{L}^{\omega(h)}(i)$ is the set of lengths of those \textit{prefix sub-walks} of a given random walk $\omega(h)$ that end at $i$ (where the prefix sub-walk of the walk $(j_{1},.j_{2},...,j_{t})$ is any walk of the form $(j_{1},...,j_{r})$ for some $r \leq t$ or an empty walk). Note that
Eq. \ref{rwgnk-eq} leads to the desired representation of $\mathrm{K}^{\lambda, \alpha}_{p}$ as
$\mathrm{K}^{\lambda, \alpha}_{p}(k,l)=\Psi(k)\Psi(l)^{\top}$, where $\Psi(h)$ is the renormalized expected frequency vector. In practice, expectations are replaced by Monte Carlo samplings over a few random walks, and vectors $\Psi(h)$ are not stored explicitly but in the form of weighted lookup tables. 

Figure \ref{fig:heatmaps} compares RWGNK-induced mask with the regular GDK-mask and the adjacency matrix mask. We call a Transformer applying low-rank masked attention via RWGNK, a \textit{Graph Kernel Attention Transformer} (or GKAT).

Next we explore the connection of RWGNKs with GDKs. We denote by $d_{\mathrm{max}},d_{\mathrm{min}}$ the maximum and minimum degree of a vertex in $\mathrm{G}$ respectively.
\begin{theorem}[RWGNKs count discounted numbers of walks]
\label{rwgnk_theorem}
The following is true for the kernel matrix $\mathcal{K}^{\lambda, \alpha}_ {p}(\mathrm{G})=[\mathrm{K}^{\lambda, \alpha}_{p}(k,l)]_{k,l \in \mathrm{V}(\mathrm{G})}$ of the $\mathrm{RWGNK}$ kernel with $0 \leq \lambda \leq 1$, $\alpha=0$ and $0 < p < 1$ for a graph $\mathrm{G}$ with vertex set $\mathrm{V}(\mathrm{G})$ of size $N$ (element-wise matrix inequality): 
\begin{equation}
\Gamma\left(\frac{\rho}{d_{\mathrm{max}}} \mathrm{Adj}(\mathrm{G})\right)
 \leq \mathcal{K}^{\lambda, \alpha}_{p}(\mathrm{G}) \leq \Gamma\left(\frac{\rho}{d_{\mathrm{min}}} \mathrm{Adj}(\mathrm{G})\right),
\end{equation}
\end{theorem}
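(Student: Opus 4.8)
\emph{Proof proposal.} The plan is to first rewrite the expected frequency vectors (with $\alpha=0$ there is no renormalization) as rows of an explicit resolvent matrix, turning $\mathcal{K}^{\lambda,0}_{p}(\mathrm{G})$ into a Gram matrix $\mathbf{M}\mathbf{M}^{\top}$; then to expand $\mathbf{M}\mathbf{M}^{\top}$ entrywise as a nonnegative sum over walks $k\to l$ in which every vertex but one carries a $1/\mathrm{deg}$ factor; and finally to bracket each such factor between $d_{\mathrm{max}}^{-n}$ and $d_{\mathrm{min}}^{-n}$ to recover the two matrix series in the statement. For Step 1 I would fix a node $h$ and let $\mathbf{P}=\mathbf{D}^{-1}\mathrm{Adj}(\mathrm{G})$ be the transition matrix of the natural random walk. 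For a walk with stopping probability $p$, the event that the length-$r$ prefix exists and ends at $i$ has probability $(1-p)^{r}(\mathbf{P}^{r})_{hi}$, and that prefix contributes weight $\lambda^{r}$ to $f^{\omega(h),\lambda}_{h}(i)$; hence by linearity $\mathbb{E}_{\omega(h)}[f^{\omega(h),\lambda}_{h}(i)]=\sum_{r\ge0}\rho^{r}(\mathbf{P}^{r})_{hi}$ with $\rho:=\lambda(1-p)$. Since $0\le\lambda\le1$ and $0<p<1$ force $\rho\in[0,1)$ (so the walk terminates a.s. and $\sum_{r\ge0}(\rho\mathbf{P})^{r}$ converges, the spectral radius of $\mathbf{P}$ being at most $1$), this sum is the $h$-th row of $\mathbf{M}:=(\mathbf{I}-\rho\,\mathbf{D}^{-1}\mathrm{Adj}(\mathrm{G}))^{-1}$. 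As $\alpha=0$ kills the normalizers, $\mathcal{K}^{\lambda,0}_{p}(\mathrm{G})=\mathbf{M}\mathbf{M}^{\top}$.

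Next I would expand $(\mathbf{M}\mathbf{M}^{\top})_{kl}=\sum_{r,s\ge0}\rho^{r+s}\sum_{i}(\mathbf{P}^{r})_{ki}(\mathbf{P}^{s})_{li}$, all terms being nonnegative so every reordering is legitimate by Tonelli. Reading $(\mathbf{P}^{r})_{ki}$ as the sum over length-$r$ walks $k\to i$ of the product of $1/\mathrm{deg}$ over all vertices of the walk except its endpoint $i$, gluing a $k\to i$ walk to a reversed $l\to i$ walk yields a length-$n$ walk $w=(w_{0},\dots,w_{n})$ from $k$ to $l$, $n=r+s$, of total weight $\rho^{n}\prod_{0\le t\le n,\,t\ne r}1/\mathrm{deg}(w_{t})$ --- only the meeting vertex $w_{r}$ is undiscounted. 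Summing over the split point $r$ gives $(\mathbf{M}\mathbf{M}^{\top})_{kl}=\sum_{n\ge0}\rho^{n}\sum_{w:\,k\to l,\,|w|=n}\ \sum_{r=0}^{n}\prod_{t\ne r}1/\mathrm{deg}(w_{t})$.

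Finally I would bound each product $\prod_{t\ne r}1/\mathrm{deg}(w_{t})$ (it has $n$ factors) between $d_{\mathrm{max}}^{-n}$ and $d_{\mathrm{min}}^{-n}$, note that the split-point sum contributes the multiplicity $n+1$ and that the number of length-$n$ walks from $k$ to $l$ is $(\mathrm{Adj}(\mathrm{G})^{n})_{kl}$, and sum over $n$: this sandwiches $(\mathbf{M}\mathbf{M}^{\top})_{kl}$ between $\sum_{n\ge0}(n+1)(\rho/d_{\mathrm{max}})^{n}(\mathrm{Adj}(\mathrm{G})^{n})_{kl}$ and $\sum_{n\ge0}(n+1)(\rho/d_{\mathrm{min}})^{n}(\mathrm{Adj}(\mathrm{G})^{n})_{kl}$, i.e. between $\Gamma(\tfrac{\rho}{d_{\mathrm{max}}}\mathrm{Adj}(\mathrm{G}))_{kl}$ and $\Gamma(\tfrac{\rho}{d_{\mathrm{min}}}\mathrm{Adj}(\mathrm{G}))_{kl}$, where $\Gamma(\mathbf{X})=\sum_{i\ge0}(i+1)\mathbf{X}^{i}=(\mathbf{I}-\mathbf{X})^{-2}$ and $\rho=\lambda(1-p)$; this is exactly the asserted element-wise inequality.

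I expect the crux to be the bookkeeping in the middle step: correctly identifying that, upon gluing the two half-walks, exactly the meeting vertex escapes the degree discount --- this is what collapses the two resolvent factors into a single $\rho^{n}$ with multiplicity $n+1$, and is precisely what makes the two-sided bound come out cleanly in terms of $d_{\mathrm{max}}$ and $d_{\mathrm{min}}$. The remaining points are routine: the interchange of the nested infinite sums is justified by nonnegativity, and for the upper bound one should either assume $(\rho/d_{\mathrm{min}})$ times the spectral radius of $\mathrm{Adj}(\mathrm{G})$ is $<1$ or read the inequality with entries in $[0,\infty]$.
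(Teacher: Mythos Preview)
Your argument is correct and follows essentially the same route as the paper's own proof: both decompose the kernel value as a sum over length-$n$ walks $k\to l$ with a distinguished ``meeting'' vertex (the endpoint of the $\omega(k)$-prefix), observe that this vertex is the unique one escaping the $1/\mathrm{deg}$ discount, pick up the multiplicity $n{+}1$ from the choice of split point, and then bracket the $n$ remaining degree factors by $d_{\mathrm{max}}^{-n}$ and $d_{\mathrm{min}}^{-n}$. The only cosmetic difference is that you first take expectations to identify the expected frequency vectors as rows of the resolvent $\mathbf{M}=(\mathbf{I}-\rho\mathbf{D}^{-1}\mathrm{Adj}(\mathrm{G}))^{-1}$ (and hence $\Gamma(\mathbf{X})=(\mathbf{I}-\mathbf{X})^{-2}$), whereas the paper keeps the random dot product $X=f_{k}(f_{l})^{\top}$ and takes the expectation at the end; the combinatorial bookkeeping is identical.
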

where $\rho=(1-p)\lambda$ and $\Gamma(\mathbf{A})=\sum_{i=0}^{\infty}(i+1)\mathbf{A}^{i}$. 
Using the fact that $\mathrm{Adj}^{i}(\mathrm{G})$ encodes the number of walks of length $i$ between pairs of vertices in $\mathrm{G}$, we conclude that
$\mathrm{K}_{p}^{\lambda, 0}(k,l)
=\sum_{i=0}^{\infty} c^{i}_{k,l} r_{k,l}(i)$, where: $r_{k,l}(i)$ is the number of walks of length $i$ between nodes: $k$ and $l$
and $\frac{\sqrt[i]{i+1}(1-p)\lambda}{d_{\mathrm{max}}} \leq c_{k,l} \leq \frac{\sqrt[i]{i+1}(1-p)\lambda}{d_{\mathrm{min}}}$. Note that values of GDK with parameter $\lambda$ satisfy: 
$\mathrm{GDK}^{\lambda}(k,l) = \sum_{i=0}^{\infty} \tilde{c}^{i}(k,l) r_{k,l}(i)$, where: 
$\tilde{c}(k,l) = \frac{\lambda}{\sqrt[i]{i!}}$.
In practice, it suffices to have random walks of fixed length (instead of taking $p>0$) (see: Sec \ref{sec:experiments}). Furthermore, by taking $\alpha>0$ (e.g. $\alpha=1$) we can guarantee that kernel values are bounded.

\vspace{-2mm}
\section{Experiments}
\label{sec:experiments}
%In our experiments, w
We focus on the GKAT architecture introduced in Sec. \ref{sec:gkat} as a prominent instantiation of the general mechanism presented in this paper and experiments with 2-level block-Toeplitz masking mechanisms introduced in  Sec. \ref{sec:block-toeplitz} for vision Transformers. 

Regarding GKAT, we conducted exhaustive evaluations on tasks ranging from purely combinatorial to bioinformatics, and benchmarked \textbf{10} different methods. 
%For GKAT, we used few random walks per node. 
All these experiments were run on a single Tesla P100 GPU with 16GB memory.
Experiments with vision Transformers were conducted on the ImageNet dataset.

\vspace{-3mm}
\subsection{Combinatorial Classification}
\label{sec:comb_c}
\vspace{-2mm}
In this section we focus on the problem of detecting local patterns in graphs. A model takes a graph $\mathrm{G}$ as an input and decides whether it contains some graph from the given family of graphs $\mathcal{H}$ as a subgraph (not necessarily induced) or is $\mathcal{H}$-free. This benchmark tests the abilities of different methods to solve purely combinatorial tasks.

\vspace{-1.0mm}
\small
\begin{figure}[h]
\vspace{-2.5mm}
    \begin{minipage}{1.0\textwidth}
    \includegraphics[width=.49\linewidth]{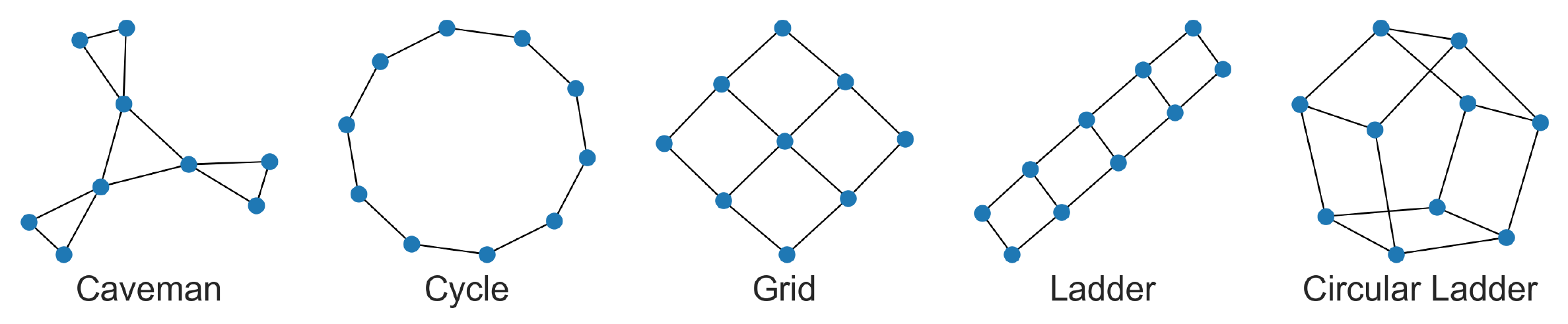}
    \end{minipage}
    \vspace{-4.5mm} 
    \caption{\small{Five motifs (patterns) used in the first class of combinatorial classification experiments. For each pattern $\mathrm{H}$, an algorithm is trained to distinguish between graphs $\mathrm{G}$ containing $\mathrm{H}$ and those that are $\mathrm{H}$-free. A naive brute-force algorithm for conducting this has time complexity $\Omega(N^{h})$, where $h$ is the number of nodes of the motif, prohibitively expensive for all these motifs (since $h \geq 9$).}}
\label{fig:five-motifs}
\end{figure}
\normalsize

\vspace{-4mm}
\subsubsection{Erd\H{o}s-R\'{e}nyi Random Graph with Motifs}
\label{sec:motif-detection}
%\vspace{-2mm}
\textbf{Data Generation:} Following the procedure from \cite{RWGNN2020}, we used five binary classification datasets
consisting of random Erd\H{o}s-R\'{e}nyi (ER) graphs connected with motifs (positive example) or other smaller ER graphs with the same average degree as a motif (negative example), see Fig. \ref{fig:five-motifs} (details in the Appendix, Sec. \ref{app:spantree_exp_details}). For each dataset we constructed $S=2048$ positive and $S$ negative examples.

\begin{figure*}[h]
    \centering
    \includegraphics[width=150mm,scale = 0.8]{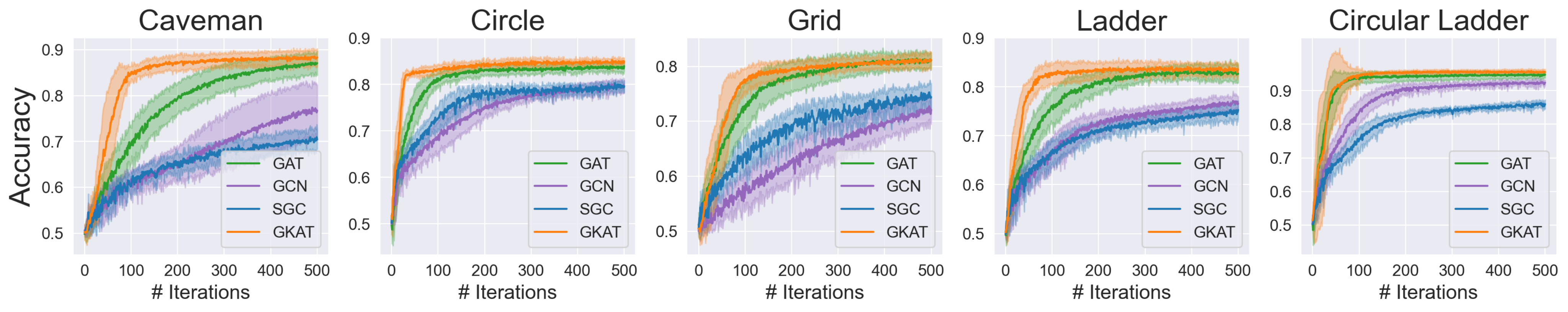}
    \vspace{-4.0mm}
    \caption{\small{Model accuracy comparison of all four methods: GKAT, GAT, GCN and SGC on the motif-detection task}. All architectures are $2$-layer. GKAT outperforms other algorithms on all the tasks. See also Appendix:Sec. \ref{large-graphs} for the tabular version with $100K$-size graphs.}
    \label{fig:motifs}
\end{figure*}

\textbf{Tested Algorithms \& Parameter Setting:} We tested our GKAT, graph convolution networks (GCNs, \citealp{kipf}), spectral graph convolution networks (SGCs, \citealp{bresson}) and graph attention networks (GATs, \citealp{velickovic}). A feature vector in each vertex was of length $l=5$ and contained top ordered $l$ degrees of its neighbors (if there were fewer than $l$ neighbors, we padded zeroes).
A dataset for each motif was randomly split into $75\%/25\%$ training/validation set. We chose: the number of epochs $E=500$, batch size $B=128$, used Adam optimizer with learning rate $\eta=0.001$ and early-stopped training if neither the validation loss nor validation accuracy improved for $c=80$ continuous epochs. 

We applied $2$-layer architectures.
For GCNs and SGCs, we used $h=32$ nodes in the hidden layer. For SGC, we furthermore bound each hidden layer with $2$ polynomial localized filters. For GAT and GKAT, we used $2$ attention heads, with $h=9$ nodes in the hidden layer to make all models of comparable sizes. In GKAT we used random walks of length $\tau=3$. The results are presented in Fig. \ref{fig:motifs}. GKAT outperforms all other methods for all %\textbf{all} 
the motifs.

% #####  THE FOLLOWING IS COMMENTED OUT. #####
\iffalse
\vspace{-1.0mm}
\small
\begin{figure}[h]
\vspace{-3mm}
    \begin{minipage}{1.0\textwidth}
    \includegraphics[width=.39\linewidth]{ErdosRanyi_Validation_Accuracy_526.pdf}
    \end{minipage}
    \vspace{-4.5mm}
    \caption{\small{Model accuracy comparison of all four methods: GKAT, GAT, GCN and SGC on the motif-detection task}. All tested architectures are $2$-layer. GKAT outperforms other algorithms on all the tasks. }
\label{fig:motifs}
\end{figure}
\normalsize
\fi
% #####. THE ABOVE IS COMMENTED OUT. #####

\vspace{-2mm}
\subsubsection{Global graph properties \& Deep vs Dense}
\label{sec:induced_cycles}
\vspace{-2mm}
Next we took as $\mathcal{H}$ an infinite family of motifs rather than just a single motif. The algorithm needs to decide whether a graph contains an induced cycle of length $>T$ for a given constant $T$. Thus the motif itself became a global property that cannot be detected by exploring just a close neighborhood of a node. In this experiment we focused also on the ``depth versus density" trade-off. Shallow neural networks with dense attention are capable of modeling deeper networks relying on sparse layers, yet the price is extra computational cost per layer. We test here whether architectures that apply RWGNK kernels leveraging efficient decomposable long-range attention from Sec. \ref{sec:gkat} can also replace deeper counterparts or if they lose their expressiveness.

\vspace{-1.0mm}
\small
\begin{figure}[h]
\vspace{-3mm}
    \begin{minipage}{1.0\textwidth}
    \includegraphics[width=.49\linewidth]{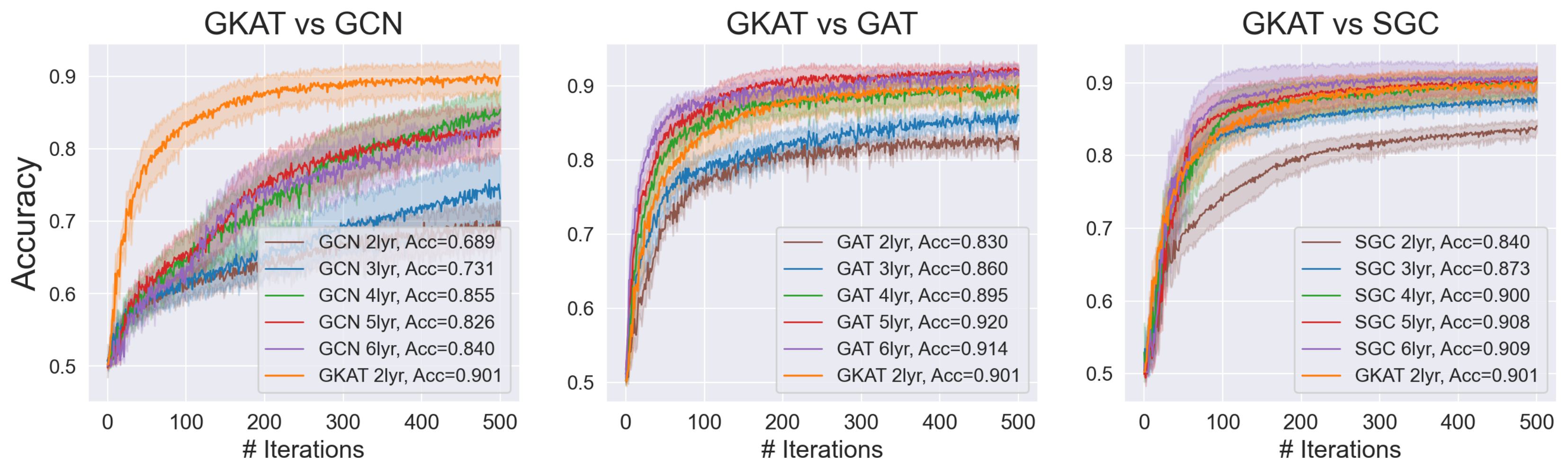}
    \end{minipage}
    \vspace{-4.5mm}
    \caption{\small{Comparison of the two-layer GKAT with different variants of GCNs, GATs and SGCs, varying by the number of hidden layers. Shallow GKAT architecture has the expressiveness of deeper version of its counterparts and in fact outperforms many of them (e.g. graph convolution networks.)}}
\label{fig:SpanTreeVal}
\end{figure}
\normalsize

\vspace{-2.5mm}
\textbf{Dataset Generation:} We created $S=2048$ random binary trees, each having $50$ nodes, with $75\%/25\%$ for training/validation. For each tree, we constructed a positive example, by connecting two nodes with the farthest distance from each other (a negative example was obtained by connecting two random vertices, but not farthest from each other). Note that a positive example constructed in such a way has shortest induced cycle of length $P+1$, where $P$ is the diameter of the tree.

\textbf{Tested Algorithms \& Parameter Setting:} We used the same algorithms as before and run detailed ablation studies on the depth of the GKAT competitors, by comparing two-layer GKAT with GATs, GCNs and SGCs of up to six layers. 

For a fair comparison, we used models with a comparable number of parameters. For the two-layer GKAT, we applied $8$ heads in the first layer, and $1$ head in the second layer. The dimension of each head was $d=4$. The last layer was fully-connected with output dimensionality $o=2$ for binary classification. We applied random walk length of $\tau=6$. For GCN, GAT and SGC, we tested number of layers ranging from $2$ to $6$. We controlled the number of nodes in the hidden layer(s) for GCN, GAT and SGC, and the number of attention heads in each head for GAT so that their total number of trainable parameters was comparable with that of our two-layer GKAT. All other parameters were chosen as in Sec. \ref{sec:motif-detection}.
More details on parameter settings and additional ablation tests over random walk length of GKAT are given in Table \ref{table:spantree_parameters_setting} and Fig. \ref{fig:spantree_GKAT_rwlen_ablation} in the Appendix (Sec. \ref{app:spantree_exp_details}). Our main results are presented in Fig. \ref{fig:SpanTreeVal}.

We see that a shallow two-layer GKAT beats all GCN-variants (also deeper ones) as well as GATs and SGCs with $<4$ layers by a wide margin. A two-layer GKAT is asymptotically equivalent to the four-layer GAT and SGC, yet as we show in Sec. \ref{sec:speed}, is faster to train and run inference on. 

\vspace{-3mm}
\subsection{Bioinformatics \& Social Networks experiments}
\label{sec:faircomparison}
\vspace{-1mm}
\textbf{Datasets:} We tested GKAT for graph classification tasks on 9 standard and publicly available bioinformatics and social networks datasets \citep{kersting2016benchmark} using a carefully designed model selection and assessment framework for a fair comparison \cite{FairComp}. The former include: D$\&$D \cite{DD_2003}, PROTEINS \cite{PROTEINS_2005}, NCI1 \cite{NCI1_2008} and ENZYMES \cite{ENZYMES_2004}, and the latter: IMDB-BINARY, IMDB-MULTI, REDDIT-BINARY, REDDIT-5K and COLLAB \cite{social_dataset_2015}, see also Sec. \ref{app:chem_social_dataset_descriptions}. 

\textbf{Tested Algorithms:} We compared GKAT with top GNN methods used previously for that data: DCGNN \cite{DGCNN_2018}, DiffPool \cite{DiffPool_2018}, ECC \cite{ECC_2017}, GraphSAGE \cite{GraphSAGE_2017} and RWNN \cite{RWGNN2020}, which are selected based on their popularity and architectural differences. For bioinformatics datasets, but ENZYMES, we used the Molecular Fingerprinting (MF, \citealp{Fingerprint_2005, Fingerprint_2019}) as a baseline. This first applies global sum pooling and then a single-layer MLP with ReLU activations. For social datasets and ENZYMES, we applied the DeepMultisets (DM) method \cite{Deepsets_2017} as a baseline. We did not add the numbers for the GIN method \cite{GIN_2019} since we could not reproduce its reported results for the models of size similar to GKAT.   %These two baselines use no topological signal, so are %good to quantify the benefits coming from GNNs.

\vspace{-4.5mm}
\begin{table}[h]
    \begin{center}
    \caption{\small{Performance of different algorithms on the bioinformatics data. For each dataset, we highlighted/underlined the best/second best method. GKAT is the best on three out of four tasks.}\vspace{-3mm}}
    \scalebox{0.78}{
     \begin{tabular}{ c c  c  c  c  c } 
    \toprule
     & $\textbf{\textrm{D\&D}}$ & $\textbf{\textrm{NCI1}}$ & $\textbf{\textrm{Proteins}}$ & $\textbf{\textrm{Enzymes}}$  \\ [0.5ex] 
    \toprule
     $\textbf{\textrm{Baseline}}$ & $\underline{\textrm{78.4 \textpm 4.5\%}}$  & $\textrm{69.8\textpm2.2\%}$ & $\textbf{\textrm{75.8\textpm3.7\%}}$ & $\underline{\textrm{65.2\textpm6.4\%}}$ \\
    \toprule
     $\textbf{\textrm{DGCNN}}$ & $\textrm{76.6\textpm4.3\%}$ & $\underline{\textrm{76.4\textpm1.7\%}}$ & $\textrm{72.9\textpm3.5\%}$ & $\textrm{38.9\textpm5.7\%}$ \\
     $\textbf{\textrm{DiffPool}}$ & $\textrm{75.0\textpm3.5\%}$ & $\textbf{76.9\textpm1.9\%}$ & $\textrm{73.7\textpm3.5\%}$ & $\textrm{59.5\textpm5.6\%}$ \\
     $\textbf{\textrm{ECC}}$ & $\textrm{72.6\textpm4.1\%}$ & $\textrm{76.2\textpm1.4\%}$ & $\textrm{72.3\textpm3.4\%}$ & $\textrm{29.5\textpm8.2\%}$ \\
    
     $\textbf{\textrm{GraphSAGE}}$ & $\textrm{72.9\textpm2.0\%}$ & $\textrm{76.0\textpm1.8\%}$ & $\textrm{73.0\textpm4.5\%}$ & $\textrm{58.2\textpm6.0\%}$ \\
     $\textbf{\textrm{RWNN}}$ & $\textrm{77.6\textpm4.7\%}$ & $\textrm{71.4\textpm1.8\%}$ & $\textrm{74.3\textpm3.3\%}$ & $\textrm{56.7\textpm5.2\%}$ \\
    \bottomrule
     $\textbf{\textrm{GKAT}}$ & $\textbf{\textrm{78.6\textpm3.4\%}}$ & $\textrm{75.2\textpm2.4\%}$ & $\textbf{\textrm{75.8 \textpm 3.8\%}}$ & $\textbf{\textrm{69.7 \textpm 6.0\%}}$ \\
    \bottomrule
    \end{tabular}}
    %}
    \label{table:real_dataset_results_chemical}
    \end{center}
\vspace{-4mm}
\end{table}
\normalsize

% For each dataset we marked the best performing methods.
\vspace{-4mm}
\begin{table}[h]
    \caption{\small{Performance of different algorithms on the social network data. GKAT is among two top methods for four out of five tasks.}\vspace{-5mm}}
    \begin{center}
    \scalebox{0.65}{
     \begin{tabular}[H]{ c c c c c c } 
    \toprule
     & $\textbf{\textrm{IMDB-B}}$ & $\textbf{\textrm{IMDB-M}}$ & $\textbf{\textrm{REDDIT-B}}$ & $\textbf{\textrm{REDDIT-5K}}$  & $\textbf{\textrm{COLLAB}}$\\ [0.5ex] 
    \toprule
     $\textbf{\textrm{Baseline}}$ & $\underline{\textrm{70.8\textpm5.0\%}}$  & $\textbf{\textrm{49.1 \textpm 3.5\%}}$ & $\textrm{82.2\textpm3.0\%}$ & $\textrm{52.2\textpm1.5\%}$ & $\textrm{70.2\textpm1.5\%}$ \\
    \toprule
     $\textbf{\textrm{DGCNN}}$ & $\textrm{69.2\textpm5.0\%}$ & $\textrm{45.6\textpm3.4\%}$ & $\textrm{87.8\textpm2.5\%}$ & $\textrm{49.2\textpm1.2\%}$ & $\textrm{71.2\textpm1.9\%}$ \\
     $\textbf{\textrm{DiffPool}}$ & $\textrm{68.4\textpm3.3\%}$ & $\textrm{45.6\textpm3.4\%}$ & $\textrm{89.1\textpm1.6\%}$ & $\underline{\textrm{53.8\textpm1.4\%}}$ & $\textrm{68.9\textpm2.0\%}$ \\
     $\textbf{\textrm{ECC}}$ & $\textrm{67.7\textpm2.8\%}$ & $\textrm{43.5\textpm3.1\%}$ & $\textrm{OOM}$ & $\textrm{OOM}$ & $\textrm{OOM}$ \\
     
     $\textbf{\textrm{GraphSAGE}}$ & $\textrm{68.8\textpm4.5\%}$ & $\textrm{47.6\textpm3.5\%}$ & $\textrm{84.3\textpm1.9\%}$ & $\textrm{50.0\textpm1.3\%}$ & $\textbf{\textrm{73.9\textpm1.7\%}}$ \\
     $\textbf{\textrm{RWNN}}$ & $\underline{\textrm{70.8\textpm4.8\%}}$ & $\underline{\textrm{47.8\textpm3.8\%}}$ & $\textbf{\textrm{90.4\textpm1.9\%}}$ & $\textrm{51.7\textpm1.5\%}$ & $\textrm{71.7\textpm2.1\%}$ \\
    \bottomrule
     $\textbf{\textrm{GKAT}}$ & $\textbf{\textrm{71.4\textpm2.6\%}}$ & $\textrm{47.5\textpm4.5\%}$ & $\underline{\textrm{89.3\textpm2.3\%}}$ & $\textbf{\textrm{55.3\textpm1.6\%}}$ & $\underline{\textrm{73.1\textpm2.0\%}}$ \\
    \bottomrule
     
    \end{tabular}}
     \label{table:real_dataset_results_social}
    \end{center}
\vspace{-3mm}
\end{table}
\normalsize
\vspace{-2mm}
\textbf{GKAT Setting:} We used a two-layer GKAT followed by the baseline layers: we first applied an attention layer with $k$ heads (a hyperparameter to be tuned), and then another one with one head to aggregate topological information on graphs. Next, we applied either the MF method or the DM method to further process the aggregated information. The random walk length $\tau$ in each GKAT layer satisfied $\tau \leq 4$ and depended on the evaluated datasets. The average graph diameter shown in Table \ref{table:dataset_statistics} in the Appendix helps to calibrate walk length. We chose it to balance the pros of using a shallow architecture and the cons of information loss from dense layer compression. GKAT increased the number of the baseline's parameters by a negligible fraction.
% A long random walk could in principle capture more % information, but at the cost of adding contributions also % from non-relevant nodes.

\textbf{Training Details:} We used a 10-fold CV for model assessment, and an inner holdout with $90\%/10\%$ training/validation split for model selection following the same settings \cite{FairComp}. We then trained the whole training-fold three times, randomly holding out $10\%$ of data for early stopping after model selection in each fold. The average score for these runs was reported in Table \ref{table:real_dataset_results_chemical} \&  \ref{table:real_dataset_results_social}. %$\mathrm{OOR}$ indicates out of memory error or exceeding the 72-hour limit for a single training pass.

The results from Table \ref{table:real_dataset_results_chemical} and Table \ref{table:real_dataset_results_social} show that GKAT is the best on three out of four bioinformatics datasets and is among two best methods on four out of five social network datasets. It is the only GNN method that consistently outperforms baseline on all but one bioinformatics dataset (bio-data benefits more than others from efficient longer-range attention modeling as showed by \citet{choromanski}).
In the Appendix (Sec. \ref{app:citation_exp_details}) we provide additional comparisons of GKAT with GAT on citation networks, where GKAT outperforms GAT on two out of three tested datasets.
\vspace{-7mm}
\subsection{Space \& Time Complexity Gains of GKAT}
\label{sec:speed}
%\vspace{-1.5mm}
We measured speed and memory improvements coming from GKAT as compared to GAT as well as accuracy loss in comparison to Transformer using GKAT masking, but explicitly computed (GKAT-0), see Table \ref{table:speed}. We decided to report relative rather than absolute numbers since the former are transferable across different computational setups. The accuracy gaps of the corresponding GKAT-0 and GKAT models (obtained after the same \# of epochs) are marginal, yet GKAT yields consistent speed and memory gains as compared to GAT per attention layer, particularly substantial for very large graphs as those from \textrm{Citeseer} and \textrm{Pubmed}.

\begin{table}[h]
\vspace{-4.0mm}
    \caption{\small{Speed \& Space Complexity gains provided by GKAT. \textbf{First row:} memory compression (lower better).} \textbf{Second \& third row:} speedup in training and inference respectively per one attention layer as compared to GAT. \textbf{Last row:} accuracy loss as compared to GKAT-0 applying brute-force RWGNK masking. We used four datasets from Sec. \ref{sec:motif-detection}, a dataset from Sec. \ref{sec:induced_cycles} ($\mathrm{Tree}$) and two citation network datasets (see: Sec. \ref{app:citation_exp_details}): $\mathrm{Citeseer}$ and $\mathrm{Pubmed}$ with graphs of much larger sizes and on which GKAT also outperforms GAT. We applied $r=256$ random features to linearize softmax kernel for features in nodes for citation network datasets, $r=16/8$ for datasets from Sec. \ref{sec:motif-detection}/ \ref{sec:induced_cycles}.\vspace{1mm}} 
    \label{table:speed}
    \scalebox{0.57}{
     \begin{tabular}{ c c c c c c c c c c c} 
    \toprule
       & $\textbf{\textrm{}}$ & $\textbf{\textrm{Cavem.}}$ & $\textbf{\textrm{Circle}}$ & $\textbf{\textrm{Grid}}$ & $\textbf{\textrm{Ladder}}$ & $\textbf{\textrm{Tree}}$ 
       & $\textbf{\textrm{Citeseer}}$ & $\textbf{\textrm{Pubmed}}$ 
       \\ [0.5ex] 
    \toprule
       & \hspace{-7mm}$\textbf{\textrm{GKAT }/\textbf{ GKAT-0 memory}}$ & $\textrm{0.54}$ & $\textrm{0.53}$ & $\textrm{0.55}$ & $\textrm{0.52}$ & $\textrm{0.95}$ 
       & $\textrm{0.18}$ & $\textrm{0.07}$ 
       \\ [0.5ex] 
    \toprule
       & \hspace{-8mm} $\textbf{\textrm{train speedup vs GAT}}$ & $\textrm{1.40x}$ & $\textrm{1.41x}$ & $\textrm{1.42x}$ & $\textrm{1.40x}$ & $\textrm{1.10x}$ 
       & $\textrm{5.10x}$ & $\textrm{9.50x}$ 
       \\ [0.5ex] 
    \toprule
       & \hspace{-8mm} $\textbf{\textrm{inf speedup vs GAT}}$ & $\textrm{1.46x}$ & $\textrm{1.49x}$ & $\textrm{1.49x}$ & $\textrm{1.47x}$ & $\textrm{1.12x}$ 
       & $\textrm{5.21x}$ & $\textrm{9.54x}$ 
       \\ [0.5ex] 
    \toprule
       & \hspace{-6mm} $\textbf{\textrm{GKAT-0 - GKAT (accur.)}}$ & $\textrm{0.07\%}$ & $\textrm{0.09\%}$ & $\textrm{0.08\%}$ & $\textrm{0.07\%}$ & $\textrm{0.06\%}$ 
       & $\textrm{0.05\%}$ & $\textrm{0.06\%}$ 
       \\ [0.5ex] 
    \bottomrule
    \end{tabular}}

\end{table}
\normalsize
\vspace{-2.0mm}

In Table \ref{table:clocktime} we show that GKAT is also faster that its counterparts (GCN, GAT, SGC) in terms of wall clock time needed to reach particular accuracy levels, by comparing accuracy levels reached by different models in a given wall clock time budget (time GKAT needs to complete first $100$ epochs).

\vspace{-3.0mm}
\small
\begin{table}[h]
    \begin{center}
    \caption{\small{Running time of training different networks on datasets from Sec. \ref{sec:motif-detection} and Sec. \ref{sec:induced_cycles}. For GCN, GAT and SGC, we reported the accuracy with 2 layers. For GKAT, we used a 2-layer architecture and reported the accuracy with a fixed walk length of $6$ for Induced Cycle Detection, and of $3$ for motifs from Sec. \ref{sec:motif-detection}.}\vspace{-2mm}}
    \vspace{-1mm}
    \scalebox{0.65}{
     \begin{tabular}{ c  c  c  c  c  c  c} 
    \toprule
      & $\textbf{\textrm{Induced Cycle}}$ & {$\textbf{\textrm{Caveman}}$} & {$\textbf{\textrm{Circle}}$} & {$\textbf{\textrm{Grid}}$}  & {$\textbf{\textrm{Ladder}}$}  & {$\textbf{\textrm{Circle Ladder}}$} \\ [0.5ex] 
    \toprule
     $\textbf{\textrm{GCN}}$ & $63.2\%$  & $62.1\%$ & $71.4\%$ & $59.3\%$ & $66.7\%$ & $87.4\%$\\
     $\textbf{\textrm{GAT}}$ & $77.0\%$ & $69,1\%$ & $80.6\%$ & $73.8\%$ & $75.9\%$ & $93.7\%$\\
     $\textbf{\textrm{SGC}}$ & $56.6\%$ & $55.4\%$ & $64.7\%$ & $58.2\%$ & $59.1\%$ & $66.5\%$\\
    \bottomrule
     $\textbf{\textrm{GKAT}}$ & $\textbf{83.6\%}$ & $\textbf{85.1\%}$ & $\textbf{83.3\%}$ & $\textbf{77.1\%}$ & $\textbf{82.4\%}$ & $\textbf{94.6\%}$ \\
    \bottomrule
    \end{tabular}}
    %}
\label{table:clocktime}    
\end{center}
\end{table}
\normalsize

\subsection{2-level block-Toeplitz masks for vision data}

In this section (see: Fig. \ref{fig:blockt}), we present additional results in the vision domain, showing large, \textbf{+}$\mathbf{2.5}$-$\mathbf{3.4}$ percentage point, gains in accuracy arising from applying the 2-level block Toeplitz masking introduced in the paper (see: Sec. \ref{sec:block-toeplitz}) on top of the regular vision Performer. As explained before, the price we pay for these gains is only a $\log(L)$ multiplicative factor in time complexity.

\begin{figure}[h]
\vspace{-3.5mm}
    \begin{minipage}{1.0\textwidth}
    \includegraphics[width=.48\linewidth]{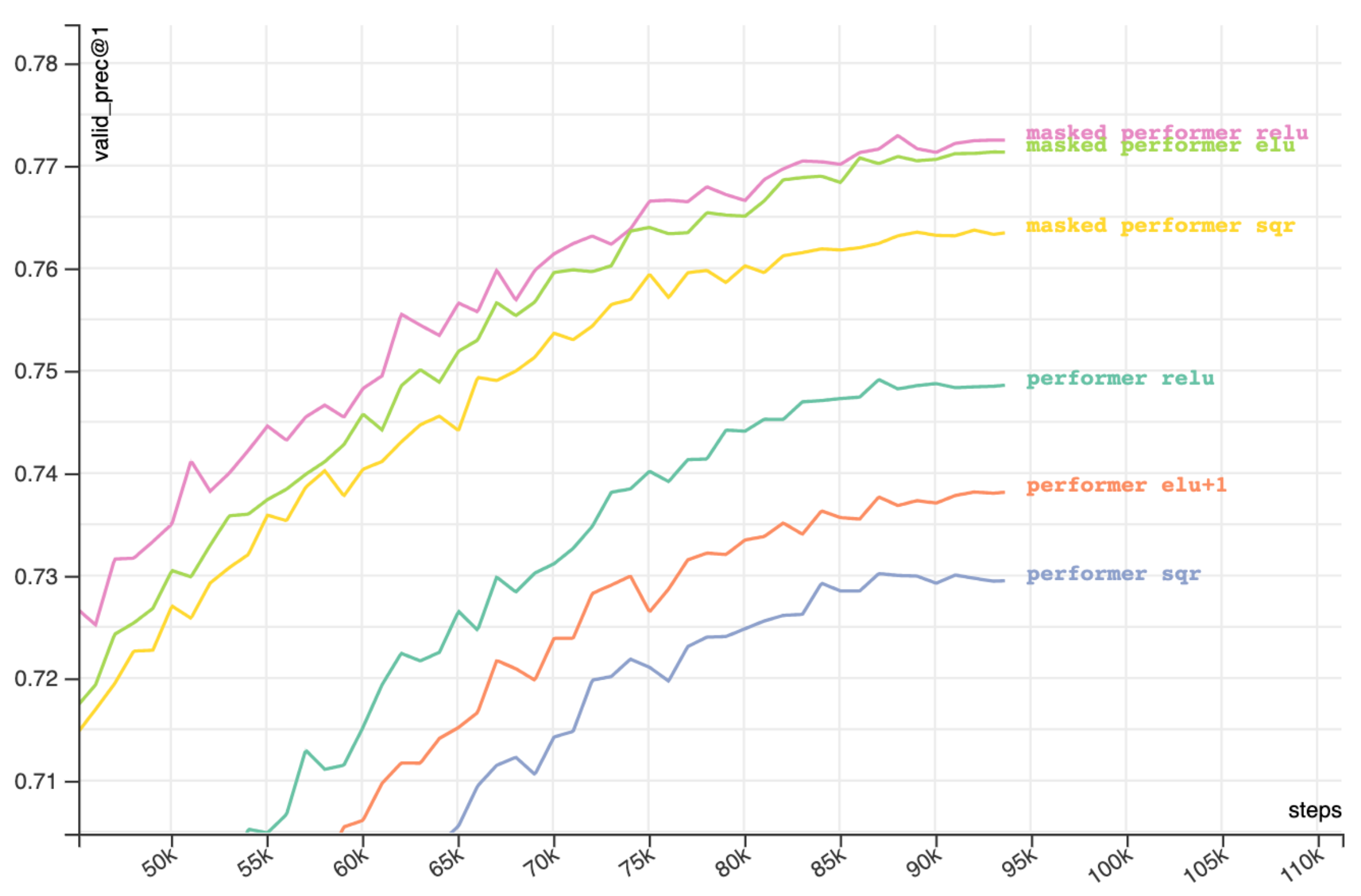}
    \end{minipage}
    \vspace{-4.5mm} 
    \caption{Comparison of regular Performers using $x^2$, $\mbox{ELU}+1$, and ReLU kernels, with their counterparts applying 2-level block Toeplitz masking from our paper on the ImageNet classification task (hidden size = 768, 12 layers \& heads, MLP dim = 3072).}
\label{fig:blockt}
\end{figure}

\vspace{-7mm}
\section{Additional results \& some open questions}
\label{sec:additional_results}
\vspace{-1mm}

In this section, we present additional theoretical results regarding the theory of the scalable efficient masked Transformers that we have developed. We focus on masking mechanisms for the tree-graph inputs. We also discuss some open problems for future work. 

In Section \ref{sec:trees} we showed that for the specific classes of functions $f$ (exponentials of affine mappings of the shortest-distance paths) and arbitrary weighted trees, pairs $(G_{\mathrm{base}}, f)$ are tractable. Here we will provide additional related results, but for arbitrary functions $f$. Our first result is as follows:

\begin{lemma}
\label{lemma:fft-tree}
If $\mathcal{T}=G_{\mathrm{base}}$ is an unweighted tree and $f$ is an arbitrary function then the corresponding mask matrix $\mathbf{M}=\mathbf{M}(G_{\mathrm{base}},f)$ supports matrix-vector multiplication in time $O(L \cdot \log^{2}(L))$. Thus $(G_{\mathrm{base}},f)$ is tractable.
\end{lemma}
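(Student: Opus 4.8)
The plan is to reduce the computation of $\mathbf{M}\mathbf{x}$ on an unweighted tree $\mathcal{T}$ to a collection of one-dimensional (Toeplitz) convolutions, organized along a \emph{centroid decomposition} of the tree. Observe that $(\mathbf{M}\mathbf{x})_i = \sum_{j} f(\mathrm{dist}(i,j))\,\mathbf{x}_j$, and for unweighted trees $\mathrm{dist}(i,j)$ is a nonnegative integer bounded by $L-1$. If we fix a ``center'' vertex $c$ and restrict attention to pairs $(i,j)$ whose tree-path passes through $c$, then $\mathrm{dist}(i,j) = \mathrm{dist}(i,c) + \mathrm{dist}(c,j)$, so the contribution of such $j$'s to $(\mathbf{M}\mathbf{x})_i$ depends on $i$ only through $a := \mathrm{dist}(i,c)$. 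Concretely, if $g(b) := \sum_{j:\,\mathrm{dist}(j,c)=b}\mathbf{x}_j$ is the ``depth-profile'' of $\mathbf{x}$ from $c$, then the through-$c$ contribution equals $\sum_{b\ge 0} f(a+b)\,g(b)$, which is exactly the $a$-th entry of a (correlation-type) convolution of the length-$O(L)$ vectors $(f(t))_t$ and $(g(b))_b$, computable for all relevant $a$ simultaneously in $O(L\log L)$ time via FFT.

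First I would make this precise via centroid decomposition: recursively pick a centroid $c$ of the current subtree (so each component of $\mathcal{T}\setminus\{c\}$ has at most half the vertices), and recurse on each component. This yields a recursion tree of depth $O(\log L)$ in which every original vertex of $\mathcal{T}$ appears in $O(\log L)$ nodes. At a centroid $c$ of a piece of size $s$, I would (i) compute, for every vertex $i$ in the piece, its distance $\mathrm{dist}(i,c)$ by BFS in $O(s)$; (ii) form the aggregated depth-profile $g$ and, via one FFT-based convolution with the fixed array $(f(0),f(1),\dots)$ truncated to length $s$, obtain for each $i$ the sum $\sum_{j}f(\mathrm{dist}(i,c)+\mathrm{dist}(j,c))\mathbf{x}_j$ over all $j$ in the piece, in $O(s\log s)$ time. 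The subtlety is avoiding double-counting: a pair $(i,j)$ with both in the same child-component of $c$ has its path not passing through $c$, yet the formula above would still add $f(\mathrm{dist}(i,c)+\mathrm{dist}(j,c))\mathbf{x}_j$ spuriously. The standard fix is the inclusion–exclusion used in centroid-decomposition convolution algorithms: for each child-component $C_k$ of $c$, also compute the ``same-component'' convolution using only the depth-profile of $C_k$ (with distances still measured from $c$) and \emph{subtract} it; what remains is exactly the sum over $j$ lying in a different component than $i$ (plus the $j=c$ term handled separately), i.e.\ precisely the pairs whose shortest path passes through $c$. Summing the per-centroid corrected contributions over the whole centroid decomposition counts every ordered pair $(i,j)$ exactly once, at the centroid that is the topmost separating vertex of their path, so the total equals $\mathbf{M}\mathbf{x}$.

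For the running time: at recursion depth $\ell$ the pieces are disjoint with total size $\le L$, each contributing $O(s\log s) = O(s\log L)$, so one level costs $O(L\log L)$; summing over $O(\log L)$ levels gives $O(L\log^2 L)$, matching the claimed bound. I would also note the convolutions need only be carried out with the fixed vector $(f(t))_{t=0}^{L-1}$, which depends on $f$ but not on $\mathbf{x}$, and that the algorithm is exact (no approximation), so $(G_{\mathrm{base}},f)$ is tractable in the sense of Section~\ref{sec:trees}.

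The main obstacle is getting the inclusion–exclusion bookkeeping exactly right so that each pair $(i,j)$ is counted once with the correct weight $f(\mathrm{dist}(i,j))$ — in particular handling the diagonal term ($i=j$, i.e.\ $f(0)\mathbf{x}_i$), the terms where $i$ or $j$ equals a centroid, and ensuring the ``subtract same-component'' step uses distances consistently measured from the centroid rather than within the component. Once the decomposition is set up, the per-node work (BFS plus one or a few FFT convolutions of total length proportional to the piece size) is routine; the depth-$O(\log L)$ guarantee of centroid decomposition and the $O(n\log n)$ cost of FFT then give the stated complexity.
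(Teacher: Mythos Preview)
Your proposal is correct and follows the same high-level strategy as the paper --- a balanced recursive partition of the tree combined with FFT-based convolutions at each level --- but the concrete decomposition and bookkeeping differ. You use a single-vertex \emph{centroid decomposition} with the standard inclusion--exclusion trick (compute the ``all pairs through $c$'' convolution, then subtract the spurious same-component contributions), which is the textbook competitive-programming formulation. The paper instead invokes a two-vertex $\tfrac{1}{2}$-balanced separator (Lemma~\ref{lemma:sep}) to split $\mathcal{T}$ into exactly two subtrees $\mathcal{T}_1,\mathcal{T}_2$ sharing a single vertex $v$, recurses on each, and then computes the cross-terms directly as two Hankel-matrix--vector products $\mathbf{H}\mathbf{y}^1,\mathbf{H}\mathbf{y}^2$ (with $\mathbf{y}^k$ the depth profiles from $v$), avoiding inclusion--exclusion entirely. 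Both routes give the same $O(L\log^2 L)$ bound via the same ``$O(\log L)$ levels times $O(L\log L)$ per level'' accounting; your version is arguably cleaner to state and closer to existing algorithmic folklore, while the paper's two-way split sidesteps the need to iterate over child components and the attendant diagonal/centroid edge cases you (correctly) flag.
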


\vspace{-1.5mm}
We now show that if the diameter $\mathrm{diam}(\mathcal{T})$ of the tree $\mathcal{T}$ is of the order of magnitude $o(\log^{2}(L))$, where $L=|V(\mathcal{T})|$, a more efficient algorithm can be used.

\begin{lemma}
\label{lemma:diam}
If $\mathcal{T}=G_{\mathrm{base}}$ is an unweighted tree and $f$ is an arbitrary function then the corresponding mask matrix $\mathbf{M}=\mathbf{M}(G_{\mathrm{base}},f)$ supports matrix-vector multiplication in time $O(L \cdot  \mathrm{diam}(G_{\mathrm{base}}))$. 
%Thus $(G_{\mathrm{base}},f)$ is tractable.
\end{lemma}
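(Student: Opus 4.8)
The plan is to exploit the fact that when the diameter is small, there are only a few distinct values of $f$ that can ever appear in $\mathbf{M}$, namely $f(0), f(1), \dots, f(\mathrm{diam}(\mathcal{T}))$. Write $D = \mathrm{diam}(G_{\mathrm{base}})$. Then $\mathbf{M} = \sum_{r=0}^{D} f(r) \mathbf{P}^{(r)}$, where $\mathbf{P}^{(r)} \in \{0,1\}^{L \times L}$ is the indicator matrix of the relation ``$\mathrm{dist}(i,j) = r$'' in the tree. Hence $\mathbf{Mx} = \sum_{r=0}^{D} f(r)\,(\mathbf{P}^{(r)}\mathbf{x})$, and it suffices to compute all $D+1$ vectors $\mathbf{y}^{(r)} \overset{\mathrm{def}}{=} \mathbf{P}^{(r)}\mathbf{x}$ simultaneously in total time $O(LD)$; the final weighted sum then costs another $O(LD)$.

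First I would root the tree $\mathcal{T}$ arbitrarily and, as in the sketch of Lemma \ref{tree-lemma}, run a two-phase dynamic program, but now tracking a whole length-profile rather than a single scalar. In phase I (leaves to root) I would compute, for every node $i$ and every $\ell \in \{0,\dots,D\}$, the quantity $s_i[\ell] = \sum_{j \in \mathcal{T}_i,\ \mathrm{dist}(i,j)=\ell} \mathbf{x}_j$; the recurrence is $s_i[\ell] = [\ell=0]\,\mathbf{x}_i + \sum_{c \in \mathrm{children}(i)} s_c[\ell-1]$, which costs $O(D)$ per node since each node's profile is updated once from each child's profile. In phase II (root to leaves) I would compute, for every node $i$, the full profile $w_i[\ell] = \sum_{j \in V,\ \mathrm{dist}(i,j)=\ell} \mathbf{x}_j$, using the parent's already-computed profile: a path from $i$ to a node $j$ either stays inside $\mathcal{T}_i$ (contributing $s_i[\ell]$) or leaves through the parent $p(i)$, in which case its length is $1$ plus the distance from $p(i)$ to $j$ measured in the tree with $\mathcal{T}_i$ deleted; that latter count is $w_{p(i)}[\ell-1] - s_i[\ell-2]$ (subtracting the paths from $p(i)$ that re-enter $\mathcal{T}_i$, whose distance from $p(i)$ is one more than their distance from $i$). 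So $w_i[\ell] = s_i[\ell] + w_{p(i)}[\ell-1] - s_i[\ell-2]$, again $O(D)$ per node. Finally $\mathbf{y}^{(r)}_i = w_i[r]$, and $(\mathbf{Mx})_i = \sum_{r=0}^{D} f(r) w_i[r]$. The total running time is $O(LD) = O(L\cdot\mathrm{diam}(G_{\mathrm{base}}))$ as claimed.

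The main obstacle is getting the phase-II subtraction exactly right: the term $w_{p(i)}[\ell-1]$ overcounts precisely the nodes $j \in \mathcal{T}_i$, and for such $j$ we have $\mathrm{dist}(p(i),j) = \mathrm{dist}(i,j)+1$, so their total contribution at level $\ell-1$ of $w_{p(i)}$ equals their contribution at level $\ell-2$ of $s_i$ — this is where the index shift by two comes from, and it must be checked carefully for the boundary cases $\ell=0,1$ (where the subtracted or parent terms are simply zero) and for the root (where $w_{\mathrm{root}} = s_{\mathrm{root}}$). A secondary point worth a line in the appendix is that $D+1 \le L$, so storing a length-$(D+1)$ profile at every node uses $O(LD) \le O(L^2)$ space, but in fact the $O(LD)$ bound is exactly what we want; and that all profile arithmetic is over $\mathbb{R}$ with no numerical issues since $f$ is applied only at the very end. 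With these boundary checks in place the recurrences telescope correctly and the time bound follows immediately.
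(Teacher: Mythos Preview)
Your proposal is correct and is essentially identical to the paper's proof: the paper also roots the tree, computes the same subtree distance-profiles $g_i[\ell]=\sum_{j\in\mathcal{T}_i:\mathrm{dist}(i,j)=\ell}\mathbf{x}_j$ bottom-up, then the full profiles $h_i[\ell]=\sum_{j\in\mathcal{T}:\mathrm{dist}(i,j)=\ell}\mathbf{x}_j$ top-down via $h_i=g_i+\mathrm{circ}(h_{p(i)}-\mathrm{circ}(g_i))$ (which unrolls exactly to your $w_i[\ell]=s_i[\ell]+w_{p(i)}[\ell-1]-s_i[\ell-2]$), and finishes with $(\mathbf{Mx})_i=\sum_{\ell}f(\ell)h_i[\ell]$. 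Your explicit treatment of the boundary cases $\ell\in\{0,1\}$ and of the root is a welcome addition.
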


\begin{corollary}
From the above, we obtain an $O(L \log(L))$ algorithm for computing the action of $\mathbf{M}$ on $\mathbf{x}$ if $G_{\mathrm{base}}$ is a tree with: (a) a node of degree $\geq 2$ that has same-length paths to all the leaves and (b) all other nodes of degree $\geq 3$ (e.g. complete binary tree).
\end{corollary}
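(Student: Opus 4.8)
The plan is to observe that the corollary follows from Lemma~\ref{lemma:diam} alone, once we show that the two structural hypotheses (a) and (b) force the tree to be ``bushy'' enough that its diameter is only logarithmic in $L$. So the whole content is a counting argument bounding $\mathrm{diam}(G_{\mathrm{base}})$, after which we simply substitute $\mathrm{diam}(G_{\mathrm{base}})=O(\log L)$ into the $O(L\cdot\mathrm{diam}(G_{\mathrm{base}}))$ bound of Lemma~\ref{lemma:diam}. (Lemma~\ref{lemma:fft-tree} already guarantees tractability for an arbitrary unweighted tree at the price of an extra $\log L$ factor; the point of the corollary is that for this restricted class we recover the sharper $O(L\log L)$.)

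\textbf{Step 1: the tree has logarithmic height from the distinguished node.} Root $\mathcal{T}=G_{\mathrm{base}}$ at the node $c$ from hypothesis~(a), and let $h$ be the common length of the paths from $c$ to every leaf. Since all leaves sit at depth exactly $h$, no node of depth $<h$ can be a leaf, so every node of depth $0,1,\dots,h-1$ is internal. By hypothesis~(b) each such node other than $c$ has degree $\ge 3$, hence at least two children in the rooted tree; and $c$ itself has degree $\ge 2$ and no parent, so it has at least two children too. Thus the number of nodes at depth $t+1$ is at least twice the number at depth $t$ for each $t\le h-1$, which by induction gives at least $2^{h}$ leaves, hence $L=|V(\mathcal{T})|\ge 2^{h}$ and $h\le\log_2 L$. (The degenerate cases $h\in\{0,1\}$ are immediate.)

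\textbf{Step 2: bound the diameter, then apply Lemma~\ref{lemma:diam}.} Any diametral path in a tree must have both endpoints of degree $1$ (otherwise it could be extended), hence both endpoints are leaves; for two leaves $\ell_1,\ell_2$ with lowest common ancestor $a$ we have $\mathrm{dist}(\ell_1,\ell_2)=2\bigl(h-\mathrm{depth}(a)\bigr)\le 2h$. Therefore $\mathrm{diam}(\mathcal{T})\le 2h\le 2\log_2 L=O(\log L)$. Plugging this into Lemma~\ref{lemma:diam} shows that $\mathbf{M}=\mathbf{M}(G_{\mathrm{base}},f)$ supports matrix--vector multiplication in time $O(L\cdot\mathrm{diam}(G_{\mathrm{base}}))=O(L\log L)$, so $(G_{\mathrm{base}},f)$ is tractable. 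The complete binary tree is the canonical instance: its root has degree $2$ with equal-length root-to-leaf paths, every other internal node has degree $3$, and indeed $\mathrm{diam}=2h=\Theta(\log L)$.

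\textbf{Expected main obstacle.} There is no serious difficulty here; the only point that needs a moment's care is justifying that the endpoints of a longest path are leaves (so that the depth-$h$ bound on leaf--leaf distances actually controls the diameter) and handling the trivial small-$h$ edge cases, after which the exponential-growth count in Step~1 is routine.
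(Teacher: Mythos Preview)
Your proposal is correct and is exactly the argument the paper has in mind: the corollary is stated without proof in the paper, with the phrase ``From the above'' pointing to Lemma~\ref{lemma:diam}, so the only content is precisely your counting argument showing $\mathrm{diam}(G_{\mathrm{base}})=O(\log L)$ and then invoking the $O(L\cdot\mathrm{diam})$ bound. Your handling of hypothesis~(b)---reading it as a condition on internal nodes, since leaves have degree~$1$---matches the intended interpretation (as confirmed by the complete-binary-tree example), and Steps~1 and~2 are both sound.
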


\begin{corollary}
The algorithm from the proof of Lemma \ref{lemma:diam} (see:Appendix) can be used to improve algorithm from Lemma \ref{lemma:fft-tree} (that works for graphs with arbitrary diameters) if the input $\mathcal{T}$ to that algorithm satisfies: $\mathrm{diam}(\mathcal{T})=o(\log^{2}(|V(\mathcal{T})|))$. Note that computing the diameter of any tree $\mathcal{T}$ can be done in time $O(|V(\mathcal{T})|)$ by running two depth-first-search  procedures: the first one from an arbitrary nodes $v$ of $\mathcal{T}$ and the second one from the node farthest from $v$ in $\mathcal{T}$ (obtained via the first depth-first-search procedure).
\vspace{-3.0mm}
\end{corollary}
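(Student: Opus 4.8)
The plan is to split the statement into two nearly independent claims: first, that running the $O(L\cdot\mathrm{diam}(\mathcal{T}))$ procedure of Lemma~\ref{lemma:diam} in place of the $O(L\log^{2}(L))$ procedure of Lemma~\ref{lemma:fft-tree} is asymptotically advantageous precisely when $\mathrm{diam}(\mathcal{T})=o(\log^{2}(|V(\mathcal{T})|))$; and second, that the diameter of a tree can be computed in $O(|V(\mathcal{T})|)$ time, so that deciding which of the two subroutines to invoke costs no more than either subroutine itself.

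For the first claim I would simply compare the two time bounds. With $L=|V(\mathcal{T})|$, Lemma~\ref{lemma:diam} yields a mask--vector product $\mathbf{Mx}$ in time $O(L\cdot\mathrm{diam}(\mathcal{T}))$ and Lemma~\ref{lemma:fft-tree} yields one in time $O(L\log^{2}(L))$; hence under the hypothesis $\mathrm{diam}(\mathcal{T})=o(\log^{2}(L))$ we get $L\cdot\mathrm{diam}(\mathcal{T})=o(L\log^{2}(L))$, so the former strictly dominates. The combined algorithm is therefore: compute $\mathrm{diam}(\mathcal{T})$, compare it against $\log^{2}(L)$, and dispatch to whichever subroutine has the smaller guarantee. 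Since both subroutines already cost $\Omega(L)$ and the diameter computation costs $O(L)$ (the content of the second claim), this preprocessing is absorbed into the running time and does not change the asymptotics.

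For the second claim I would invoke the classical ``double-sweep'' algorithm and prove its correctness on trees. Run a depth-first search from an arbitrary vertex $v$, recording $\mathrm{dist}(v,\cdot)$, and let $u$ attain $\max_{x}\mathrm{dist}(v,x)$; run a second depth-first search from $u$, recording $\mathrm{dist}(u,\cdot)$, and let $w$ attain $\max_{x}\mathrm{dist}(u,x)$; output $\mathrm{dist}(u,w)$. The only substantive step is the lemma that, in a tree, any farthest vertex $u$ from an arbitrary start $v$ is an endpoint of some diametral path: fix a diametral path $P$ from $a$ to $b$, observe that the unique $v$--$u$ path intersects the vertex set of $P$ in a (possibly empty) subpath, and perform a short case analysis on the location of that meeting piece; combining the maximality of $\mathrm{dist}(v,u)$ with the maximality of the length of $P$ shows that $\mathrm{dist}(u,a)$ or $\mathrm{dist}(u,b)$ is at least $\mathrm{dist}(a,b)$, so $u$ can replace an endpoint of $P$ without shortening it. Given this lemma, the second sweep from $u$ necessarily reaches the far endpoint of a diametral path, so $\mathrm{dist}(u,w)=\mathrm{diam}(\mathcal{T})$; each sweep visits the $L-1$ tree edges once, for total time $O(L)$.

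I expect the time-bound comparison to be completely routine, and the only mildly delicate point to be the correctness proof of the double-sweep routine --- in particular the case analysis showing that the farthest vertex from an arbitrary start is a diameter endpoint, which genuinely uses acyclicity (it fails on graphs with cycles). I would therefore state the intersection-of-paths case split carefully; no estimates beyond the single comparison of $\mathrm{diam}(\mathcal{T})$ with $\log^{2}(L)$ are required.
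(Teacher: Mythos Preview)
Your proposal is correct and follows the paper's intended reasoning. The paper does not give a separate proof of this corollary; the first claim is meant to follow immediately by comparing the $O(L\cdot\mathrm{diam}(\mathcal{T}))$ and $O(L\log^{2}(L))$ bounds of the two lemmas, and the second claim (the double-DFS diameter computation) is simply asserted as a well-known fact within the corollary statement itself. You have supplied more detail than the paper---in particular the correctness argument for the double-sweep routine---but the overall approach is the same.
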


We leave the Reader with an interesting open problem:

\textit{Can we improve Lemma \ref{lemma:fft-tree} to obtain $O(L\log(L))$ running time, i.e. replace the $\log^{2}(L)$ factor with a $\log(L)$ factor?}

Note that if the unweighted tree is a single path, the answer to the above question is: Yes. Indeed, this is precisely the 1D-RPE setting that we have discussed before. Furthermore, since in that setting the problem reduced to the multiplication with Toeplitz matrices, inherently relying on the FFT, the $\log(L)$ factor in all likelihood cannot be improved (unless FFT can be replaced with a faster algorithm or multiplication with  Toeplitz matrices is conducted approximately). Still, we do not know whether for general unweighted trees (or even nontrivial tree-extensions of the path) we can reach the $O(L\log(L))$ running time.

It might be also interesting to analyze how those of our presented methods that work for tree input data can be extended to non-tree graphs, but with low treewidth \cite{pilipczuk}, that can be thought of as relaxations of trees.

\vspace{-3.0mm}
\section{Conclusion}
\label{sec:conclusion}
\vspace{-1.0mm}
We presented a holistic approach to incorporating masking into scalable low-rank Transformers. We provided general theoretical results which include %recent 
earlier results as special cases. %instantiations. 
We conducted comprehensive empirical evaluations of the new instantiations of the mechanism for graph data. 

We focused in the paper not only on scalable variants, but have introduced several new masking meethods that can be used on their own, even in regular Transformers. These include in particular d-level block-Toeplitz masking mechanisms with applications in vision and video processing, that we believe might lead to new vision-Transformers architectures. We show that topological masking is a powerful inductive bias and that corresponding ``topological Transformers'' turn out to be effective in various domains such as bioinformatics and vision.

\vspace{-3.0mm}
\section{Acknowledgements}
\vspace{-1.5mm}
AW acknowledges support from a Turing AI Fellowship under grant EP/V025279/1, The Alan Turing Institute, and the Leverhulme Trust via CFI.
\label{sec:acknowledgements}

\bibliographystyle{icml2022}
\newpage
\appendix
\onecolumn

\section{Appendix}

\subsection{Several Pointers}
\label{sec:code-pointers}

We include pointers to this part of the code that does not include sensitive/proprietary information. The core GKAT framework (with the additional analysis of new graph sketches that GKAT leads to, called \textit{graphots}, mixing regular feature vectors in nodes with the topological features) is here: \href{https://github.com/HL-hanlin/GKAT}{\textcolor{blue}{https://github.com/HL-hanlin/GKAT}}.  We used (deterministic and random) feature map mechanisms corresponding to the features defined in graph nodes from this repository: \href{https://github.com/google-research/google-research/tree/master/performer}{\textcolor{blue}{https://github.com/google-research/google-research/tree/master/performer}}.

% removed the paragraph below from the appendix, will state it instead in the github repo.
\iffalse
The initial version of this paper could be found here: \href{https://github.com/HL-hanlin/GKAT}{\textcolor{blue}{https://github.com/HL-hanlin/GKAT}}, where a new neural network graph sketches together with additional theoretical guarantees
on GKAT are included.
\fi

\subsection{Combinatorial Classification Experiments: Additional Details}
\label{app:spantree_exp_details}

The data for the motif detection task from Section \ref{sec:motif-detection} was generated as follows:

\begin{itemize}
    \item Firstly we created five simple motifs as shown in Fig. \ref{fig:five-motifs}.
    Note that each motif has $\geq 9$ vertices so a brute-force combinatorial algorithm for motif-detection would take time $\Omega(N^{9})$, prohibitively expensive even for small graphs $\mathrm{G}$.
    \item We then generated for each motif $S$ small Erd\H{o}s-R\'{e}nyi graphs with the same number of nodes as that motif and the same average degree.
    \item For each motif, we also generated $S$ larger Erd\H{o}s-R\'{e}nyi random graphs, each of 100 vertices, again of the same average degree.
    \item We obtained positive/negative samples by connecting each larger Erd\H{o}s-R\'{e}nyi random graph with the motif/previously generated smaller Erd\H{o}s-R\'{e}nyi random graph (with certain edge probability).
    
\end{itemize}

In Table \ref{table:spantree_parameters_setting} we present additional details regarding architectures used in the experiments from Section \ref{sec:induced_cycles}, in particular the number of parameters and heads / polynomial filters used in different layers.
Ablation tests over GKAT random walk length for Section \ref{sec:induced_cycles} are presented in Fig. \ref{fig:spantree_GKAT_rwlen_ablation}.

\small
\begin{figure}[h]
\vspace{-3mm}
    \begin{minipage}{1.0\textwidth}
    \begin{center}
    \includegraphics[width=.50\linewidth]{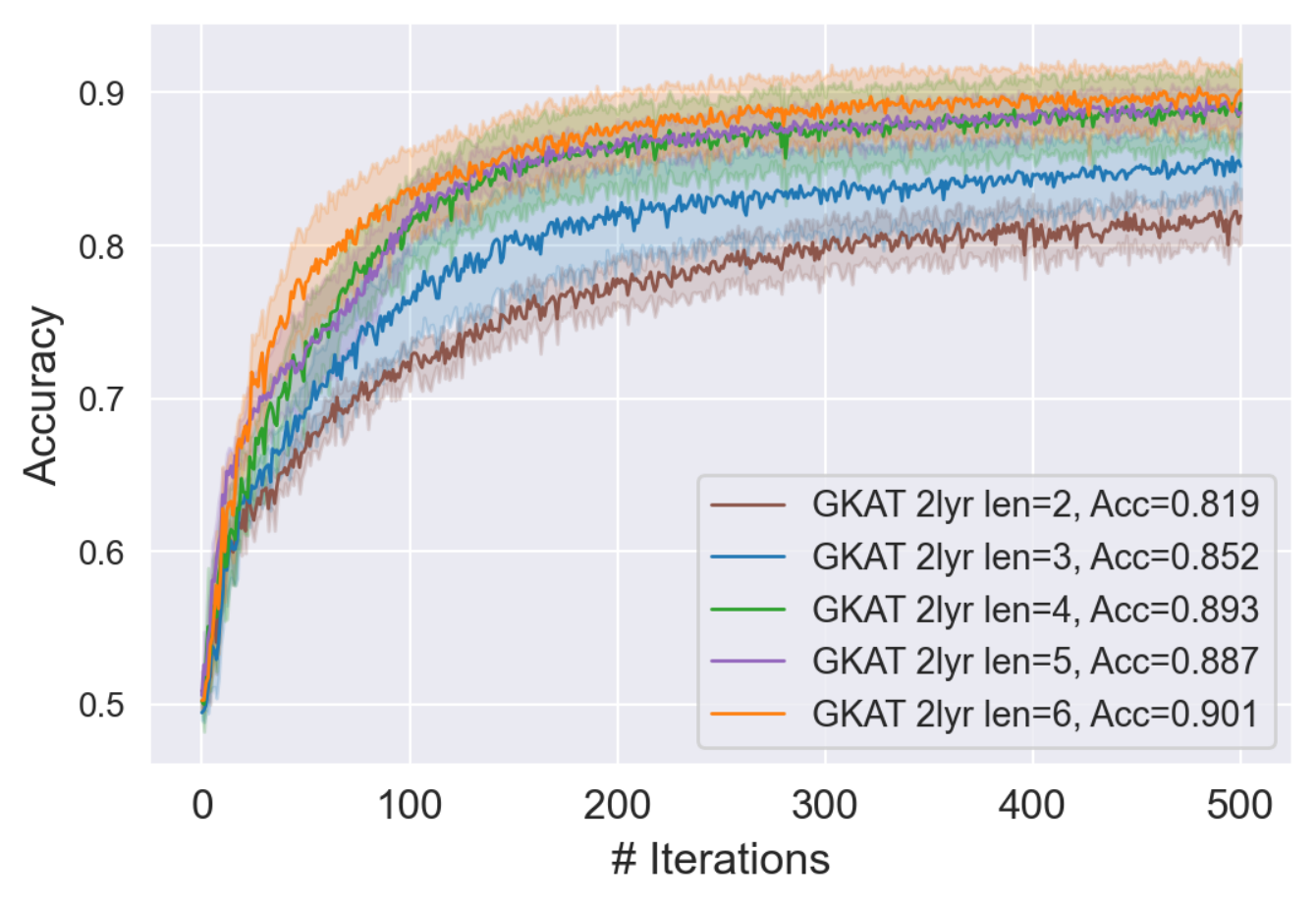}
    \end{center}
    \end{minipage}
    \vspace{-4.5mm}
\caption{\small{Ablation tests over random walk length of GKAT in Sec. \ref{sec:induced_cycles}.}}
\label{fig:spantree_GKAT_rwlen_ablation}
\end{figure}
\normalsize

\vspace{-1mm}
\begin{table}[h]
    \begin{center}
    \caption{\small{Additional details regarding architectures used in Section \ref{sec:induced_cycles}. For GKAT, we applied $8$ heads in the first layer, and $1$ head in the second layer, with $4$ hidden units in each attention head. The total number of trainable parameters was $242$. For GAT, we tested the number of layers from $2$ to $6$, changing the number of attention heads in each layer, but with the same number of hidden units in each attention head. For GCN, we modified the number of hidden units in each layer. For SGC, we modified the number of polynomial filters and the number of hidden units in each layer. The number of attention heads in GAT, as well as the number of hidden units in each layer in GCN and SGC were chosen to make their total number of trainable parameters comparable with the corresponding number of GKAT.}}
    \label{table:spantree_parameters_setting}
    \scalebox{0.9}{
     \begin{tabular}{ c c c c c} 
    \toprule
      & & $\textbf{\textrm{\#Heads}}$ & $\textbf{\textrm{Dim. Head}}$ & $\textbf{\textrm{\#Parameters}}$  \\ [0.5ex] 
    \toprule
    \textbf{GKAT}
    &  $\textrm{\textbf{2 layers}}$ & $[8, 1]$  & $4$ & $242$\\
    \toprule
    \multirow{3}*{\textbf{GAT}} 
    &  $\textrm{\textbf{2 layers}}$ & $[8, 1]$     & $4$ & $242$ \\
     & $\textrm{\textbf{3 layers}}$ & $[4, 2, 1]$  & $4$ & $242$ \\
     & $\textrm{\textbf{4 layers}}$ & $[4, 2, 1, 1]$ & $4$ & $266$ \\
     & $\textrm{\textbf{5 layers}}$ & $ [2, 2, 2, 1, 1]$  & $4$ & $258$ \\
     & $\textrm{\textbf{6 layers}}$ & $[3, 2, 1, 1, 1, 1] $ & $4$ & $270$ \\
    \toprule
     &  &  & {$\textbf{\textrm{Dim. Layer}}$}  & $\textbf{\textrm{\#Parameters}}$  \\ [0.5ex] 
    \toprule
    \multirow{3}*{\textbf{GCN}} 
    &  $\textrm{\textbf{2 layers}}$ & & {$[14, 14]$} & $268$\\
     & $\textrm{\textbf{3 layers}}$ & & {$[10, 10, 10]$} & $262$\\
     & $\textrm{\textbf{4 layers}}$ & & {$[8, 8, 8, 8]$}  & $250$\\
     & $\textrm{\textbf{5 layers}}$ & & {$[10, 8, 6, 6, 6]$} & $260 $\\
     & $\textrm{\textbf{6 layers}}$ & & {$[10, 6, 6, 6, 6, 6]$}  & $268$\\
    \toprule
     &  & $\textbf{\textrm{\#Polynomial\ Filters}}$ & {$\textbf{\textrm{Dim. Layer}}$}  & $\textbf{\textrm{\#Parameters}}$  \\ [0.5ex] 
    \toprule
    \multirow{3}*{\textbf{SGC}} 
    &  $\textrm{\textbf{2 layers}}$ & {$[4, 2]$} & $[10, 8]$ & $236$\\
     & $\textrm{\textbf{3 layers}}$ & {$[4, 2, 2]$} & $[8, 6, 6]$ & $234$\\
     & $\textrm{\textbf{4 layers}}$ & {$[8, 2, 2, 2]$} & {$[8, 5, 4, 4]$}  & $247$\\
     & $\textrm{\textbf{5 layers}}$ & {$[8, 2, 2, 2, 2]$} & $[6, 5, 4, 4, 4]$ & $245$\\
     & $\textrm{\textbf{6 layers}}$ & {$[8, 2, 2, 2, 2, 2]$} & {$[6, 4, 4, 4, 4, 3]$}  & $249$\\
    \bottomrule
    \end{tabular}}
    %}
    \end{center}
\end{table}
\normalsize

\subsection{GNNs for Bioinformatics Tasks \& Social Networks Data: Additional Details}
\label{app:chem_social_exp_details}

\subsubsection{Datasets Descriptions}
\label{app:chem_social_dataset_descriptions}
Detailed profiles of the datasets used in the experiments from Sec. \ref{sec:faircomparison} are given in Table \ref{table:dataset_statistics}.

\vspace{-1mm}
\begin{table}[h]
    \begin{center}
    \caption{\small{Bioinformatics and Social Dataset descriptions. \#NODES, \#EDGES and $\mathrm{Diameter}$ columns contain values averaged over all graphs in a given dataset.}    \vspace{2mm}}
    \label{table:dataset_statistics}
    \scalebox{0.9}{
     \begin{tabular}{ c  c  c  c  c  c  c  c  c } 
    \toprule
     & & $\textbf{\textrm{\#Graphs}}$ & $\textbf{\textrm{\#Classes}}$ & $\textbf{\textrm{\#Nodes}}$ & $\textbf{\textrm{\#Edges}}$ & $\textbf{\textrm{Diameter}}$ & $\textbf{\textrm{\#Features}}$  \\ [0.5ex] 
    \midrule 
     \multirow{4}*{\rotatebox[origin=c]{90}{\textbf{BIOINF.}}}
     & $\textbf{\textrm{D\&D}}$ & $1178$ & $2$ & $284.32$ & $715.66$ & $19.90$ & $89$  \\
     & $\textbf{\textrm{ENZYMES}}$ &  $600$ &  $6$ &  $32.63$ &  $64.14$  & $10.86$ &  $3$ \\
     & $\textbf{\textrm{NCI1}}$ & $4110$ &  $2$ &  $29.87$ &  $32.30$ & $13.26$  &  $37$ \\
     & $\textbf{\textrm{PROTEINS}}$ & $1113$ & $2$ & $39.06$ & $72.82$ & $11.48$ & $3$ \\
    \toprule
     \multirow{5}*{\rotatebox[origin=c]{90}{\textbf{SOCIAL}}} 
     & $\textbf{\textrm{COLLAB}}$ & $5000$ & $3$ & $74.49$ & $2457.78$ & $1.86$ & $1$ \\
     & $\textbf{\textrm{IMDB-BINARY}}$ & $1000$ & $2$ & $19.77$ & $96.53$ & $1.86$ & $1$ \\
     & $\textbf{\textrm{IMDB-MULTI}}$ & $1500$ & $3$ & $13.00$ & $65.94$ & $1.47$ & $1$ \\
     & $\textbf{\textrm{REDDIT-BINARY}}$ & $2000$ & $2$ & $429.63$ & $497.75$ & $9.72$ & $1$ \\
     & $\textbf{\textrm{REDDIT-5K}}$ &  $4999$ & $5$ & $508.82$ & $594.87$ & $11.96$ & $1$ \\
    \bottomrule
     
    \end{tabular}}
    %}
    \end{center}
\end{table}
\normalsize

For each dataset, we chose graphs with the number of nodes close to the average number of nodes shown in Table \ref{table:dataset_statistics}. Examples of bioinformatics-graphs from these datasets are given in Fig. \ref{fig:RepresentChemical}. Examples of social network graphs from these datasets are given in Fig. \ref{fig:RepresentSocial}.

\small
\begin{figure}[h]
\vspace{-3mm}
    \begin{minipage}{1.0\textwidth}
    \begin{center}
    \includegraphics[width=.85\linewidth]{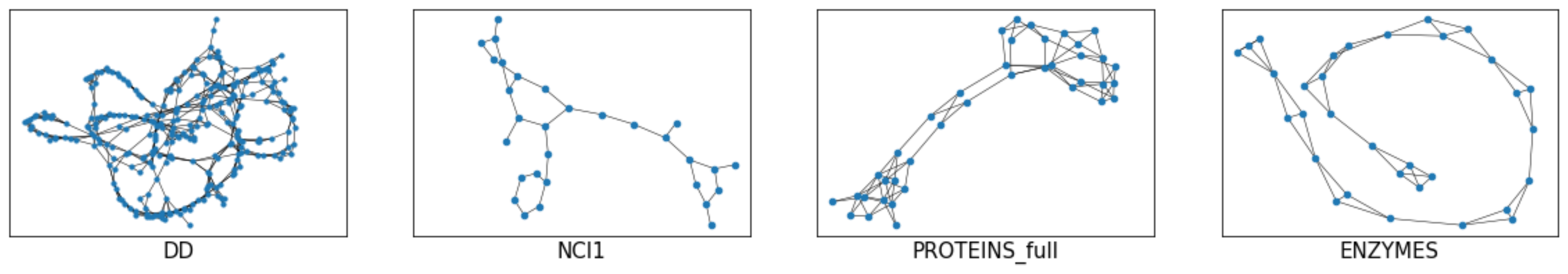}
    \end{center}
    \end{minipage}
    \vspace{-4.5mm}
\caption{\small{Representative plots for bioinformatics datasets. For each bioinformatics dataset, we chose the graph with number of nodes most similar to the average number of nodes shown in Table \ref{table:dataset_statistics}. }}
\label{fig:RepresentChemical}
\end{figure}
\normalsize

\small
\begin{figure}[h]
\vspace{3mm}
    \begin{minipage}{1.0\textwidth}
    \begin{center}
    \includegraphics[width=.95\linewidth]{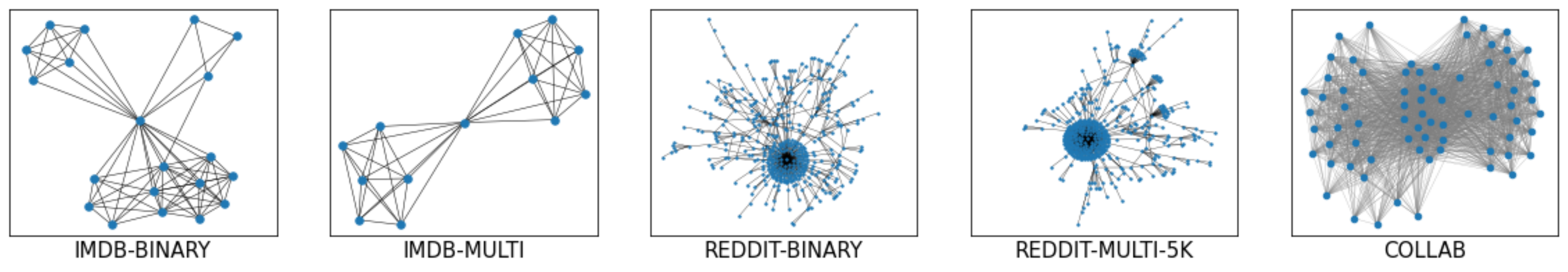}
    \end{center}
    \end{minipage}
    \vspace{-4.5mm}
\caption{\small{Representative plots for social datasets. For each social dataset, we chose the graph with number of nodes most similar to the average number of nodes shown in Table \ref{table:dataset_statistics}. }}
\label{fig:RepresentSocial}
\end{figure}
\normalsize

\subsubsection{Hyperparameter Selection}
\label{app:chem_social_hyper_parameter_settings}

In this section, we present details regarding  hyperparameter selection in Section \ref{sec:faircomparison} (see: Table \ref{table:real_dataset_parameters}). 

The tunable parameters included: general parameters like batch size, learning rate, dropout ratio, global pooling methods, regularization rate, data normalization methods, as well as parameters specific to our GKAT layers, which included: number of GKAT layers, number of attention heads and dimension of each head in a GKAT layer. We also tuned other options: whether to add a fully-connected layer after data normalization, but before GKAT layers, and dimension of fully-connected layers (both preceding and coming after GKAT layers). Due to the large amount of tunable parameters, we decided to first conduct a rough search for each parameter using only one random CV fold, select one/several parameter combination(s) with best performance, and then reused on all other folds.

For all other methods, we reported the best scores conducted via an extensive hyperparameters grid search \cite{FairComp}. For GKAT, we fixed the number of epochs to $E=1000$, early stopping patience as $500$ epochs, the criterion for early stopping as validation accuracy, global pooling method as summation, and used Adam optimizer. Then we performed hyperparameter tuning for: batch size $B \in \{32, 128\}$, learning rate $\eta \in \{0.01, 0.001, 0.0001\}$, dropout ratio $\in \{0.0, 0.1, 0.2, 0.4\}$, $L_{2}$-regularization rate $\in \{0.001, 0.005\}$, dimension of attention head in the first GKAT layer $h \in \{4, 8, 16, 32\}$, number of attention heads in the first GKAT layer $\in \{1, 4, 8, 12\}$, number of nodes in the MLP layer $\in \{32, 64, 128\}$, GKAT random walk length $\tau \in\{1,2,3,4\}$, whether to use a fully-connected layer before the first GKAT layer, and whether to apply batch normalization to pre-prosess data before feeding the model with it. 

For some of the datasets (e.g. D$\&$D), we selected the best hyperparameter set optimized over one random CV fold, and used it across all cross-validation outer folds.

\small
\vspace{-1mm}
\begin{center}
\begin{table}[h]
\caption{\small{Hyperparameter settings for the bioinformatics and social network datasets from Section \ref{sec:faircomparison}.} \vspace{1mm}}
\label{table:real_dataset_parameters}
\centering
\begin{tabular}{ c  c  c  c  c  c  c  c  c c c c} 
    \toprule
     & & \textbf{\textrm{BS}} &  \textbf{\textrm{\#Heads}} & \textbf{\textrm{d\textsubscript{Head}}} & \textbf{\textrm{d\textsubscript{FC}}} & \textbf{\textrm{Len\textsubscript{rw}}} & \textbf{\textrm{Drop}} & \textbf{\textrm{L2}} & \textbf{\textrm{add FC}} & \textbf{\textrm{Norm}}  \\ [0.5ex]
    \midrule 
     \multirow{4}*{\rotatebox[origin=c]{90}{\textbf{CHEM.}}}
     & \textbf{\textrm{D\&D}} & $32$ & $8$ & $16$ & $128$ & $2$ & $-$ & \textrm{0.005} &  \textrm{No} & \textrm{BN}  \\
     \cline{2-11}
     & \textbf{\textrm{NCI1}} & $32$ & $8$ & $32$ & $64$  & $4$ & $0.1$ & \textrm{0.001} &  \textrm{Yes} & \textrm{BN} \\
     \cline{2-11}
     & \multirow{2}*{\textbf{\textrm{PROTEINS}}} 
       & $32$  & $4$ & $8$  & $32$  & \multirow{2}*{$3$} & \multirow{2}*{$-$} & \multirow{2}*{$0.001$} &  \multirow{2}*{\textrm{Yes}} & \multirow{2}*{\textrm{BN}} \\
     & & $128$ & $8$ & $32$ & $128$ &            &                  & \\
     \cline{2-11}
     & \multirow{2}*{\textbf{\textrm{ENZYMES}}} 
     & \multirow{2}*{$32$} & $4$ & $16$ & $32$ & \multirow{2}*{$3$} & \multirow{2}*{$0.1$} &  \multirow{2}*{$0.001$}  & \multirow{2}*{\textrm{Yes}} & \multirow{2}*{\textrm{BN}} \\
     &          &          & $8$ & $32$ & $64$ &            &                  & \\
    \toprule
     \multirow{5}*{\rotatebox[origin=c]{90}{\textbf{SOCIAL}}} 
     & \textbf{\textrm{COLLAB}} & $32$ & $12$ & $4$ & $128$ & $2$ & $-$& $0.005$ & \textrm{No} & \textrm{No} \\
      \cline{2-11}
     & \multirow{2}*{\textbf{\textrm{IMDB-BINARY}}}
     & \multirow{2}*{$32$} & $8$ & $4$ & $64$ & \multirow{2}*{$1$} & \multirow{2}*{$-$}& \multirow{2}*{$0.005$} & \multirow{2}*{\textrm{No}} & \textrm{No} \\
     &          &          & $12$ & $8$ & $128$ &            &         &&         & \textrm{BN} \\
      \cline{2-11}
     & \multirow{2}*{\textbf{\textrm{IMDB-MULTI}}}
     & \multirow{2}*{$32$} & $8$ & $4$ & $64$ & \multirow{2}*{$1$} & \multirow{2}*{$-$} & \multirow{2}*{$0.005$} & \multirow{2}*{\textrm{No}} & \textrm{No} \\
     &          &          & $12$ & $8$ & $128$ &            &   & &               & \textrm{BN} \\
     \cline{2-11}
     & {\textbf{\textrm{REDDIT-BINARY}}} 
     & $32$ & {$4$} & $4$ & {$128$} & {$2$} & {$-$} & {$0.005$} & {\textrm{No}} & {\textrm{BN}}  \\
     \cline{2-11}
     & \textbf{\textrm{REDDIT-5K}} & $32$ & $8$ & $8$ & $64$ & $2$ & $-$& \textrm{0.005} & \textrm{No} & \textrm{BN}  \\
    \bottomrule
\end{tabular}
\label{tab:my_label}
\end{table} %}
\end{center}
\vspace{-5mm}
\normalsize

\vspace{-10mm}
\subsection{Space \& Time Complexity Gains of GKAT: Additional Experiments}
\label{large-graphs}
Additionally, we have conducted experiments on much larger Erd\H{o}s-R\'{e}nyi graphs with motifs, see: Section \ref{sec:motif-detection}. The average number of nodes of each ER graph was 100K. Tested architectures had the same characteristics as in Section \ref{sec:motif-detection}. The results (final model accuracy) are presented in Table \ref{table:clocktime-large} for datasets: Caveman, Circle, Grid, Ladder and Circular-Ladder respectively.

\small
\begin{table}[h]
    \begin{center}
    \caption{\small{Running time of training different networks on datasets from Sec \ref{sec:motif-detection} but with much larger number of nodes ($\sim100\text{K}$). For GCN, GAT and SGC, we reported the accuracy with 2 layers. For GKAT, we used a 2-layer architecture and reported the accuracy with a fixed walk length of $3$ for motifs from Sec. \ref{sec:motif-detection}.}}
    {
     \begin{tabular}{ c  c  c  c  c  c  c} 
    \toprule
       & {$\textbf{\textrm{Caveman}}$} & {$\textbf{\textrm{Circle}}$} & {$\textbf{\textrm{Grid}}$}  & {$\textbf{\textrm{Ladder}}$}  & {$\textbf{\textrm{Circle Ladder}}$} \\ [0.5ex] 
    \toprule
     $\textbf{\textrm{GCN}}$ & $88.3\%$ & $82.7\%$ & $80.4\%$ & $80.6\%$ & $91.4\%$\\
     $\textbf{\textrm{GAT}}$ & $75.0\%$ & $81.0\%$ & $69.8\%$ & $77.0\%$ & $ 89.2\%$\\
     $\textbf{\textrm{SGC}}$ & $70.0\%$ & $80.4\%$ & $72.3\%$ & $76.1\%$ & $82.4\%$\\
    \bottomrule
     $\textbf{\textrm{GKAT}}$ & $\textbf{89.3\%}$ & $\textbf{83.2\%}$ & $\textbf{80.7\%}$ & $\textbf{81.5\%}$ & $\textbf{92.3\%}$ \\
    \bottomrule
    \end{tabular}}
    %}
\label{table:clocktime-large}    
\end{center}
\end{table}

\subsection{Experiments with Citation Networks Datasets}
\label{app:citation_exp_details}

\subsubsection{Datasets Descriptions}
\label{app:citation_dataset_descriptions}

\textbf{Datasets:} To directly compare GKAT with GAT, we also tested both algorithms on three publicly available citation networks datasets: Cora, Citeseer and Pubmed (\citep{sencollective}) with the same data splits as in \cite{velickovic}. Datasets descriptions are given in Table \ref{table:citation_datasets_description}.

\vspace{-1mm}
\begin{table}[h!]
    \begin{center}
    \caption{\small{Citation Networks Datasets Descriptions.}}
    \label{table:citation_datasets_description}
    \scalebox{0.9}{
     \begin{tabular}{ c c c c } 
    \toprule
     & $\textbf{\textrm{Cora}}$ & $\textbf{\textrm{Citeseer}}$ & $\textbf{\textrm{Pubmed}}$ \\ [0.5ex] 
    \toprule
     $\textbf{\textrm{\#Nodes}}$ & $2708$  & $3327$ & $19717$ \\
     $\textbf{\textrm{\#Edges}}$ & $5419$ & $4732$ & $44338$ \\
     $\textbf{\textrm{\#Features}}$ & $1433$ & $3703$ & $500$ \\
     $\textbf{\textrm{\#Classes}}$ & $7$ & $6$ & $3$ \\
     $\textbf{\textrm{\#Training Nodes}}$ & $140$ & $120$ & $60$ \\
     $\textbf{\textrm{\#Validation Nodes}}$ & $500$ & $500$ & $500$ \\
     $\textbf{\textrm{\#Test Nodes}}$ & $1000$ & $1000$ & $1000$ \\
    \bottomrule
    \end{tabular}}
    %}
    \end{center}
\end{table}
\normalsize

\subsubsection{Comparison with GAT}
\label{app:comparison_with_GAT}

\textbf{Experiment Settings:} We used the same model architecture and parameters as in GAT for our GKAT to make the comparison as accurate as possible. The only difference is that we replaced the adjacency matrix masking in GAT by the normalized dot-product based similarity matrix generated from random walks, as described in Section \ref{sec:gkat}. Both models used two-layer attention, with $8$ attention heads in the first layer, and $1$ head in the second layer. We used $8$ hidden units in the first layer, and the number of output units in the second layer was the same as number of classes. Each layer was followed by an exponential linear unit (ELU) activation. We applied $L_{2}$-regularization with $\lambda = 0.0005$, dropout with $p = 0.6$ for inputs and normalized attention coefficients in both layers for all three datasets.

\textbf{Results:} The results are shown in Table \ref{table:real_dataset_results_citation}. Our GKAT algorithm achieved lower accuracy on Cora dataset, but higher on the remaining two.

\begin{table}[h!]
    \begin{center}
    \caption{\small{Comparison of GAT and GKAT on citation networks datasets. For Cora and Citeseer, we reported the results for GAT from \citep{velickovic}. For GAT and Pubmed dataset, we reported the results averaged over 15 runs with the same parameter settings as in Cora and Citeseer. GKAT was run $15$ times over multiple random walk lengths up to $7$, and the best was reported. }}
    \label{table:real_dataset_results_citation}
    \scalebox{0.9}{
     \begin{tabular}{ c c c c} 
    \toprule
       & $\textbf{\textrm{Cora}}$ & $\textbf{\textrm{Citeseer}}$ & $\textbf{\textrm{Pubmed}}$  \\ [0.5ex] 
    \toprule
      $\textrm{\textbf{GAT}}$ & $\textbf{83.0\textpm0.7\%}$  & $72.5\pm0.7\%$ & $77.2\pm0.6\%$\\
    \toprule
      $\textrm{\textbf{GKAT}}$ & $82.1\pm0.7\%$  & $\textbf{73.0\textpm0.7\%}$ & $\textbf{78.0 \textpm 0.7\%}$ \\
    \bottomrule
    \end{tabular}}
    \end{center}
\end{table}
\normalsize

\textbf{Dynamic Generator of Random Walks: } 
We also tried the so-called \textit{dynamic}-$\mathrm{GKAT}$. The dynamic variant generated random walks from scratch in each training epoch, thus requiring additional compute. However one advantage of the dynamic version is that we could assign different transition probabilities for adjacent nodes (rather than sampling next point of the walk uniformly at random). The transition probability matrix can be a masked attention matrix and we only need its actions on the $L$-dimensional vectors to compute probability vectors in the visited nodes. Since the GKAT-masked attention can be interpreted as a kernelizable attention of a product-kernel (the product of the kernel between feature vectors in nodes and the nodes in the graph) and each factor-kernel admits on expectation a linearization, the product-kernel also does it via the mechanism of the Cartesian-product random features (see: \cite{daniely}). Thus this matrix admits on expectation a lower-rank decomposition and thus based on our analysis from Sec. \ref{sec:low-rank}, the actions of that matrix on the input vectors can be efficiently computed.

An intuition behind that particular variant is that we assign higher transition probabilities for neighbors with higher attention coefficients.
The dynamic variant enabled us to improve accuracy of GKAT on Citeseer to \textbf{73.3}\% (with reduced \textbf{0.6}\% standard deviation).

\subsubsection{Ablation Tests on Random Walk Length for GKAT}
\label{app:ablation_test_on_rw_length}

Figure \ref{fig:citation_pathlen} compares the effect of random walk path length of GKAT algorithms on training for Cora, Citeseer and Pubmed datasets. We run GKAT with multiple random walk lengths up to $7$. The results show that a small path length no longer than $4$ is enough for GKAT and dynamic-GKAT, which supports our claim that short walks are sufficient for GKAT.

\vspace{-1.0mm}
\small
\begin{figure}[h]
\vspace{-3mm}
    \begin{minipage}{1.0\textwidth}
    \begin{center}
    \includegraphics[width=.80\linewidth]{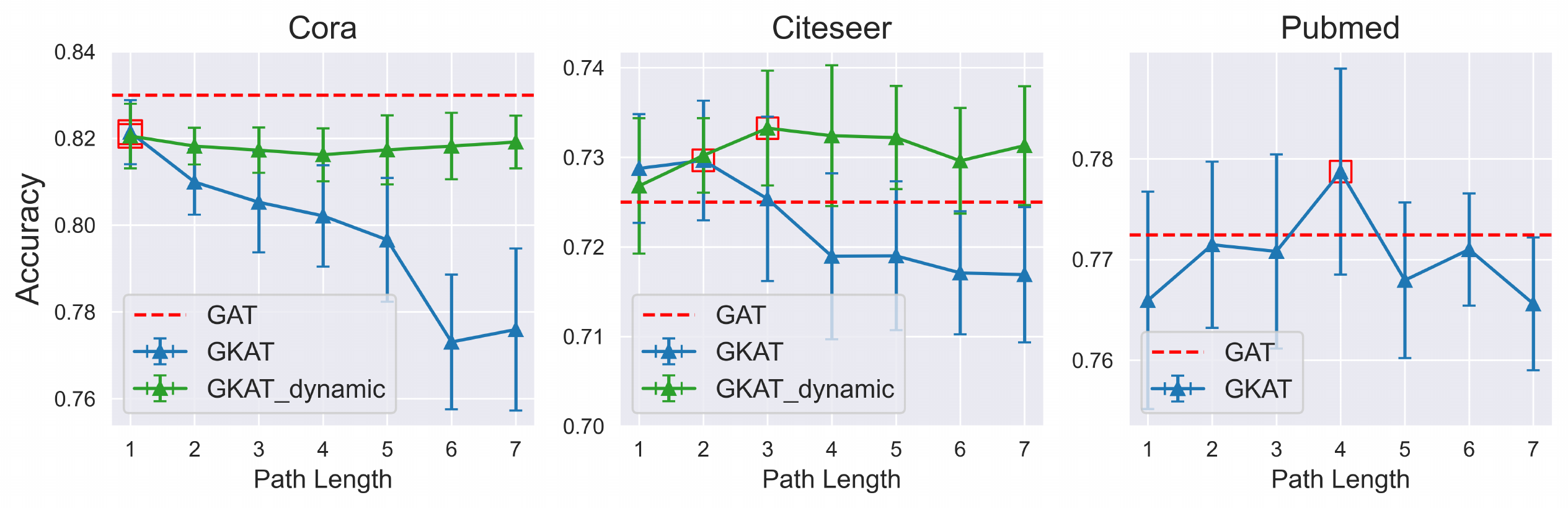}
    \end{center}
    \end{minipage}
    \vspace{-4.5mm}
    \caption{\small{Ablation tests over random walk path lengths of GKAT and dynamic-GKAT on Cora, Citeseer and Pubmed datasets. The errorbar represents 1 standard deviation. Test-accuracies for the optimized GAT were shown as horizontal red dotted lines in each subplot. Best test accuracies of GKAT algorithms were highlighted with red squares. The dynamic-GKAT was tested only on datasets, where accuracy advantage of the regular optimized GKAT over optimized GAT was $\leq 0.5\%$.}}
\label{fig:citation_pathlen}
\end{figure}
\normalsize

%%%%%%%%%%%%%%%%%%%%%%%%%%%%%%%%%%%%%%%%%%%%%%%%%%%%
\subsection{Proof of Lemma \ref{first_mask_lemma} and Algorithm 1}

\begin{proof}
Note that the $ith$ token representation $\mathbf{r}_{i}$ obtained from the general masked kernel attention is of the following form:
\begin{equation}
\mathbf{r}_{i}=
\frac{\phi(\mathbf{q}_{i}^{\top})^{\top}\sum_{j=1}^{L}\mathbf{M}_{i,j}\phi(\mathbf{k}_{j}^{\top})\mathbf{v}_{j}}{\phi(\mathbf{q}_{i}^{\top})^{\top}\sum_{j=1}^{L}\mathbf{M}_{i,j}\phi(\mathbf{k}_{j}^{\top})},
\end{equation}
where $\mathbf{q}_{i},\mathbf{k}_{j},\mathbf{v}_{j}$ stand for the $ith$ query and $jth$ key/value row vectors respectively.
As in \citep{rpe-performers}, we define the following two sequences of matrices and vectors of shapes $\mathbf{R}^{m \times d}$ and $\mathbf{R}^{1 \times m}$ respectively:
\begin{align}
\begin{split}
\mathbf{D}^{1} = \left( \sum_{j=1}^{L}\mathbf{M}_{i,j}\phi(\mathbf{k}_{j}^{\top})\mathbf{v}_{j}\right)_{i=1}^{L}, \textrm{     } 
\mathbf{D}^{2} = \left( \sum_{j=1}^{L}\mathbf{M}_{i,j}\phi(\mathbf{k}_{j}^{\top})^{\top}\right)_{i=1}^{L}. 
\end{split}    
\end{align}
If we define $\tilde{\mathbf{D}}^{1}$ and $\tilde{\mathbf{D}}^{2}$ as the vectorized variants of $\mathbf{D}_{1}$ and $\mathbf{D}_{2}$, where each element of the sequence is (row) vectorized (note that the elements of $\mathbf{D}^{2}$ are already row-vectorized) and the resulting vectors are stacked into matrices, then: 
\begin{align}
\begin{split}
\tilde{\mathbf{D}}^{1} = \mathbf{M}\mathbf{V}^{1}, \textrm{  }
\tilde{\mathbf{D}}^{2} = \mathbf{M}\mathbf{V}^{2},
\end{split}    
\end{align}
with the $ith$ rows of $\mathbf{V}^{1}$ and $\mathbf{V}^{2}$ given as: 
$\mathbf{V}^{1}_{i}=\mathrm{vec}(\phi(\mathbf{k}_{i})^{\top}\mathbf{v}_{i})$, $\mathbf{V}^{2}_{i}=\phi(\mathbf{k}_{i}^{\top})^{\top}$.

We conclude that the computation of $\mathbf{D}_{1}$ and $\mathbf{D}_{2}$ takes time $T_{\mathbf{N}}(L)md$ and consequently all representations $\mathbf{r}_{i}$ can be computed in time $O((T_{\mathbf{N}}(L)+L)md)$.
That completes the proof of Lemma \ref{first_mask_lemma}.
\end{proof}
Note that Algorithm 1 follows immediately from the above proof. The algorithm consists of two phases. In the first phase, sequences $\mathbf{D}^{1}$ and $\mathbf{D}^{2}$ are computed (in the form of $\tilde{\mathbf{D}}^{1}$ and $\tilde{\mathbf{D}}^{2}$). In the second phase, they are used to get new token-representations (different tokens apply different elements of $\mathbf{D}^{1}$ and $\mathbf{D}^{2}$ for the computation of their new representations). 

\subsection{Proof of Lemma \ref{drpe-lemma}}
\begin{proof}
We define the so-called \textit{grid-ordering} on the nodes of the $d$-dimensional grid recursively.
For $d=1$ we take the natural ordering on the line.
For the $d$-dimensional grid with $d>1$, we take the $(d-1)$-dimensional slices: $\mathcal{S}_{1},\mathcal{S}_{2},...$ ordered according to their $d$th-dimension index. In each slice we recursively order all the tokens We then combine the orderings of all the slices to get the ordering on the $d$-dimensional grid.

We claim that the mask corresponding to the $d$-dimensional grid with the grid-ordering of tokens is $d$-level block-Toeplitz.

We will proceed by an induction on $d$. For $d=1$ the corresponding mask is of the form: $\mathbf{M}=[f(i-j)]_{i,j=1,...,L}$ for some learnable function $f$ and a natural ordering on the line. Therefore $\mathbf{M}$ is constant on each diagonal thus it is Toeplitz (e.g. 1-level block-Toeplitz). Now let us assume that $d>1$ and the result holds for $d-1$. We take the grid-ordering for the $d$-dimensional grid and the partitioning of the tokens given by the $(d-1)$-dimensional slices $\mathcal{S}_{1},\mathcal{S}_{2},...$ ordered according to their $d$th-dimension index. This partitioning induces the block-partitioning of the mask $\mathbf{M}$ (using grid-ordering) on the $d$-dimensional grid. Now note that the shortest-path distance between two nodes $v_{1}$ and $v_{2}$ from slices $\mathcal{S}_{i_{1}}$ and $\mathcal{S}_{i_{2}}$ respectively can be computed as: $\mathrm{dist}(v_{1},v_{2}^{\prime}) + |i_{1}-i_{2}|$, where $v_{2}^{\prime}$ is the projection of $v_{2}$ into slice $\mathcal{S}_{i_{1}}$. This observation combined with the inductive assumption implies that defined above block-partitioning of $\mathbf{M}$ produces matrices $\mathbf{B}^{i,j}$ from Definition \ref{def:block-toeplitz} and that completes the proof.
\end{proof}

\subsection{Proof of Lemma \ref{tree-lemma}}
\begin{proof}
Without loss of generality we can assume that $G_{\mathrm{base}}$ is a tree.
Assume that $\tau$ is of the form: $\tau(z) = az + b$ for some $a,b \in \mathbb{R}$.
Take some $\mathbf{x} \in \mathbb{R}^{L}$.
Let us root the tree $\mathcal{T}$ in one of its vertices that we denote as $v_{0}$. Denote by $v_{1},...,v_{k}$ for some $k \geq 0$ its neighbors. For every node $i$ we define $s_{i}$ as follows:
\begin{equation}
s_{i} = \sum_{j \in \mathcal{T}_{i}} \exp(\tau(\mathrm{dist}(i,j)))\mathbf{x}_{j},    
\end{equation}
where $\mathcal{T}_{i}$ denotes a subtree of $\mathcal{T}$ rooted in $i$. Our first observation is that all $s_{i}$ for $i=1,...,L$ can be computed in $O(L)$ time. To see this, note that:
\begin{align}
\begin{split}
s_{v_{0}} = \exp(\tau(\mathrm{dist}(v_{0},v_{0})))\mathbf{x}_{0} + \sum_{l=1,...,k} \sum_{j \in \mathcal{T}_{v_{l}}}
\exp(\tau(\mathrm{dist}(v_{0},j)))\mathbf{x}_{j} = \\
e^{b}\mathbf{x}_{0} + \sum_{l=1,...,k}\sum_{j \in \mathcal{T}_{v_{l}}}e^{a \cdot \mathrm{dist}(v_{0},j)+b}\mathbf{x}_{j} = 
e^{b}\mathbf{x}_{0} + \sum_{l=1,...,k}\sum_{j \in \mathcal{T}_{v_{l}}}e^{a \cdot (\mathrm{dist}(v_{l},j)+W(v_{0},v_{l}))+b}\mathbf{x}_{j} = \\
e^{b}\mathbf{x}_{0} + \sum_{l=1,...,k}e^{W(v_{0},v_{l})}\sum_{j \in \mathcal{T}_{v_{l}}}e^{a \cdot \mathrm{dist}(v_{l},j)+b}\mathbf{x}_{j}=
e^{b}\mathbf{x}_{0} + \sum_{l=1,...,k}e^{W(v_{0},v_{l})}s_{v_{l}}
\end{split}    
\end{align}
Thus we see that computing $s_{v_{0}}$ requires computing each $s_{v_{l}}$, followed by additional addition/multiplication operations that take time $O(\mathrm{deg}(v_{0}))$.
We conclude that we can recursively compute all $s_{i}$ in time $O(L)$.
Let us note that the entry of $\mathbf{w}=\mathbf{Mx}$ corresponding to node $i$ is of the form:
\begin{equation}
\mathbf{w}_{i} = \sum_{j \in \mathcal{T}} \exp(\tau(\mathrm{dist}(i,j)))\mathbf{x}_{j},
\end{equation}
Therefore ultimately we aim to compute all $\mathbf{w}_{i}$ for $i=1,...,L$ in time $O(L)$.
We observe that:
\begin{equation}
\mathbf{w}_{v_{0}} = s_{v_{0}}    
\end{equation}
Now take node $i \neq v_{0}$. Denote by $p(i)$ the predecessor of $i$ in $\mathcal{T}$. We have:
\begin{align}
\begin{split}
\label{r_eq}
\mathbf{w}_{i} = \sum_{j \in \mathcal{T}_{i}} \exp(\tau(\mathrm{dist}(i,j)))\mathbf{x}_{j}
+ \sum_{j \notin \mathcal{T}_{i}} \exp(\tau(\mathrm{dist}(i,j)))\mathbf{x}_{j} = \\
s_{i} + \sum_{j \notin \mathcal{T}_{i}}e^{a \cdot \mathrm{dist}(i,j)+b}\mathbf{x}_{j} = s_{i} + \sum_{j \notin \mathcal{T}_{i}} e^{a(W(i,p(i))+\mathrm{dist}(p(i),j))+b}\mathbf{x}_{j} = \\ s_{i} + e^{aW(i,p(i))}\sum_{j \notin \mathcal{T}_{i}}e^{a\mathrm{dist}(p(i),j)+b}\mathbf{x}_{j} = 
s_{i} + e^{aW(i,p(i))}t_{i},
\end{split}
\end{align}
where $t_{i} = \sum_{j \notin \mathcal{T}_{i}}e^{a\mathrm{dist}(p(i),j)+b}\mathbf{x}_{j}$.
Now note that:
\begin{equation}
\mathbf{w}_{p(i)} = t_{i} + e^{W(i,p(i))}s_{i}    
\end{equation}
and thus: $t_{i} = \mathbf{w}_{p(i)} - e^{W(i,p(i))}s_{i}$.
Plugging in the formula for $t_{i}$ into Equation \ref{r_eq}, we get:
\begin{equation}
\label{r-final-eq}
\mathbf{w}_{i} = e^{W(i,p(i))}\mathbf{w}_{p(i)} + (1-e^{2W(i,p(i))})s_{i}    
\end{equation}
We conclude that having computed all $s_{i}$ for $i=1,...,L$ in time $O(L)$, we can compute all $\mathbf{w}_{i}$ for $i=1,...,L$ in time $O(L)$ by ordering vertices in their increasing distance from the root $v_{0}$, setting up $\mathbf{w}_{v_{0}} = s_{v_{0}}$ and applying Equation \ref{r-final-eq}.
\end{proof}

\subsection{Proof of Theorem \ref{lapl-theory}}
\begin{proof}
We need the following definition.
\begin{definition}
A matrix $\mathbf{A}$ is Symmetric and Diagonally Dominant (SDD) if $\mathbf{A}_{i,j}=\mathbf{A}_{j,i}$ for all $i,j$ and $\mathbf{A}_{i,i} \geq \sum_{j \neq i}|\mathbf{A_{i,j}}|$.
\end{definition}

The results is a straightforward consequence of Theorem 1.2 from \cite{orecchia} and Lemma \ref{first_mask_lemma}. For Reader's convenience we restate that theorem here:
\begin{theorem}[SDD Matrix Exponential Computation]
Given an $L \times L$ SDD matrix $\mathbf{A}$, a vector $\mathbf{x}$ and a parameter $\delta \leq 1$, there is an algorithm that
computes a vector $\mathbf{u}$ such that $\|\exp(-\mathbf{A})\mathbf{x}-\mathbf{u}\| \leq \delta \|\mathbf{x}\|$ in time $\tilde{O}((|E|+L)\log(2+\|\mathbf{A}\|))$, Here tilde hides $\mathrm{poly}(\log(L))$ and $\mathrm{poly}(\mathrm{\log(\frac{1}{\delta})})$ factors.
\end{theorem}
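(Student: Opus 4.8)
The plan is to prove the statement by reducing the computation of $\exp(-\mathbf{A})\mathbf{x}$ to a small number of near-linear-time solves of SDD linear systems, and then controlling both the number of such solves and the accuracy demanded of each. First I would record two structural facts. Since $\mathbf{A}$ is SDD with nonnegative diagonal (the Laplacian and its relatives satisfy this), it is symmetric positive semidefinite, so its spectrum lies in $[0,\|\mathbf{A}\|]$ and $\exp(-\mathbf{A})=f(\mathbf{A})$ for the scalar function $f(t)=e^{-t}$; thus it suffices to approximate $f$ uniformly on $[0,\|\mathbf{A}\|]$ by a function of $\mathbf{A}$ that we can apply to $\mathbf{x}$ cheaply. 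Second, I would invoke as a black box the existence of near-linear-time SDD solvers (Spielman--Teng and successors): for an SDD matrix $\mathbf{S}$ with $|E|$ off-diagonal nonzeros, a system $\mathbf{S}\mathbf{y}=\mathbf{b}$ can be solved to relative accuracy $\epsilon$ in time $\tilde{O}((|E|+L)\log(1/\epsilon))$.

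The central mechanism I would use is shift-and-invert. Let $\mathbf{B}=(\mathbf{I}+\mathbf{A})^{-1}$; the matrix $\mathbf{I}+\mathbf{A}$ is itself SDD (adding the identity only strengthens diagonal dominance), so each application of $\mathbf{B}$ to a vector is one SDD solve and costs $\tilde{O}(|E|+L)$. The spectrum of $\mathbf{B}$ lies in the bounded interval $[1/(1+\|\mathbf{A}\|),\,1]$, and since $\mathbf{A}=\mathbf{B}^{-1}-\mathbf{I}$ we may write $\exp(-\mathbf{A})=g(\mathbf{B})$ for $g(s)=e^{\,1-1/s}$. I would then approximate $g$ on this bounded interval by a polynomial $P_k$ of degree $k$, so that $\exp(-\mathbf{A})\mathbf{x}\approx P_k(\mathbf{B})\mathbf{x}$, computed by a Horner-type recurrence using exactly $k$ applications of $\mathbf{B}$, i.e. $k$ SDD solves, for a total of $\tilde{O}((|E|+L)\,k)$. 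The crux is the claim that $k=\tilde{O}(\log(2+\|\mathbf{A}\|))$ suffices: because $g$ is analytic and extremely flat where the spectrum of $\mathbf{B}$ approaches $1/(1+\|\mathbf{A}\|)$ (there $g$ is of order $e^{-\|\mathbf{A}\|}$), the transformed problem has effective condition governed by $\log(1+\|\mathbf{A}\|)$ rather than by $\|\mathbf{A}\|$ itself; this is exactly the gain of working with the resolvent instead of with $\mathbf{A}$ directly, where a best-polynomial bound would force degree $\Theta(\sqrt{\|\mathbf{A}\|})$. I would import the quantitative estimate (equivalently, the approximation lemma of \cite{orecchia}, or a Chebyshev/Lanczos bound for $g$ on $[1/(1+\|\mathbf{A}\|),1]$) as the key analytic input.

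The error analysis is where I expect the real work, and it is the main obstacle. There are two error sources to combine. The truncation error $\|g(\mathbf{B})-P_k(\mathbf{B})\|$ is controlled spectrally: since $\mathbf{B}$ is symmetric, the operator error equals $\max_{s}\,|g(s)-P_k(s)|$ over the spectrum, so the degree bound of the previous paragraph yields truncation error below $\delta/2$. The second source is that the solves are inexact: each application of $\mathbf{B}$ returns a vector within relative error $\epsilon$, and these errors propagate through the $k$-fold Horner recurrence. I would show that taking $\log(1/\epsilon)=\mathrm{poly}\log(L)+O(\log(1/\delta)+\log(2+\|\mathbf{A}\|))$ keeps the accumulated solve error below $\delta/2$; crucially this $\log(1/\epsilon)$ is absorbed into the $\tilde{O}(\cdot)$ of each individual SDD solve and does not multiply the solve count. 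Combining the two halves gives $\|\exp(-\mathbf{A})\mathbf{x}-\mathbf{u}\|\le \delta\|\mathbf{x}\|$ within the stated budget $\tilde{O}((|E|+L)\log(2+\|\mathbf{A}\|))$. The delicate points are (i) establishing the $\log(2+\|\mathbf{A}\|)$ degree bound for the resolvent-transformed function rather than a $\sqrt{\|\mathbf{A}\|}$ bound, and (ii) proving numerical stability of the composed inexact solves, i.e. that the coefficients of $P_k$ and the Horner iterates do not amplify the per-solve error by more than a polynomial factor.

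Finally, I would remark that an alternative route replaces the shift-and-invert polynomial by a genuine rational approximation $r(x)=\sum_j w_j/(x-z_j)$ of $e^{-x}$ on $[0,\infty)$, whose best-approximation degree is $O(\log(1/\delta))$ by classical rational-approximation theory, so that $\exp(-\mathbf{A})\mathbf{x}\approx\sum_j w_j(\mathbf{A}-z_j\mathbf{I})^{-1}\mathbf{x}$ becomes a handful of shifted solves; the technical price there is that the optimal poles $z_j$ are complex, so one must pair conjugate poles and argue that the resulting real shifted systems remain solvable in near-linear time, again with an $O(\log(2+\|\mathbf{A}\|))$ overhead from their conditioning. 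Either way, all $\mathrm{poly}\log(L)$ and $\log(1/\delta)$ factors land inside $\tilde{O}$, and the only genuine dependence on the magnitude of $\mathbf{A}$ is the promised $\log(2+\|\mathbf{A}\|)$.
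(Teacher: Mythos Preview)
The paper does not prove this statement at all: it is quoted verbatim as Theorem~1.2 of \cite{orecchia} (Orecchia--Sachdeva--Vishnoi, STOC 2012) and used purely as a black box inside the proof of Theorem~\ref{lapl-theory}. Your proposal therefore goes far beyond what the paper does, and is really an attempt to reconstruct the Orecchia--Sachdeva--Vishnoi argument itself.

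On the substance of your sketch: the overall architecture---reduce applications of $\exp(-\mathbf{A})$ to a small number of SDD solves via near-linear-time Laplacian solvers, control truncation and inexact-solve errors separately---is indeed the skeleton of the cited result. However, the specific mechanism you lead with is the fragile part. A single shift-and-invert $\mathbf{B}=(\mathbf{I}+\mathbf{A})^{-1}$ followed by \emph{polynomial} approximation of $g(s)=e^{1-1/s}$ on $[1/(1+\|\mathbf{A}\|),1]$ does not obviously yield degree $\tilde O(\log(2+\|\mathbf{A}\|))$: the singularity of $g$ at $s=0$ sits at distance $\sim 1/\|\mathbf{A}\|$ from the interval, so the Bernstein-ellipse rate gives only $\rho\approx 1+O(1/\sqrt{\|\mathbf{A}\|})$, i.e.\ degree $\Theta(\sqrt{\|\mathbf{A}\|})$ again---exactly what you were trying to avoid. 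Your heuristic that $g$ is ``extremely flat'' near the left endpoint is suggestive but is not by itself a degree bound; turning it into one requires a two-zone argument (small on $[a,b]$, accurate on $[b,1]$) whose details you have not supplied, and you ultimately defer this step back to \cite{orecchia} anyway. Your alternative route via genuine rational approximation of $e^{-x}$ with several (complex-conjugate-paired) poles is the cleaner and more standard path to the logarithmic $\|\mathbf{A}\|$-dependence, and is closer to how such results are actually proved; if you want an independent proof rather than a citation, that is the version to flesh out.
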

It suffices to notice that both Laplacian matrix and its renormalized version are SDD. Furthermore, by Lemma \ref{first_mask_lemma}, fast (approximate) computation of $\exp(-\mathbf{\lambda A})\mathbf{x}$ for any $\mathbf{x} \in \mathbb{R}^{L}$ and $\mathbf{A}$ as in Theorem \ref{lapl-theory} leads to fast computation of the low-ranked attention with mask $\mathbf{M}=\exp(-\lambda\mathbf{A})$, as explained in Algorithm 1.
\end{proof}

\subsection{Proof of Theorem \ref{rwgnk_theorem}}
\label{sec:rwgnk_proof}
\begin{proof}
Note first that since $\omega(k)$ and $\omega(l)$ are chosen independently, we have:
\begin{equation}
\mathrm{K}_{p}^{\lambda,0}(k,l)=
\mathbb{E}_{\omega(k)}[f_{k}^{\omega(k),\lambda}] \cdot
(\mathbb{E}_{\omega(l)}[f_{l}^{\omega(l),\lambda}])^{\top} = 
\mathbb{E}_{\mathrm{\omega}(k),\omega(l)}[f_{k}^{\omega(k),\lambda}(f_{l}^{\omega(l),\lambda})^{\top}]
\end{equation}
Denote: $X = f_{k}^{\omega(k),\lambda}(f_{l}^{\omega(l),\lambda})^{\top}$. The key observation is that $X$ can be rewritten as:
\begin{equation}
X = \sum_{u \in \mathrm{V}(\mathrm{G})} \sum_{\substack{(j_{1}=k,...,j_{a+1}=u) =  \mathrm{pref}(\omega(k)),\\
(j^{\prime}_{1}=l,...,j^{\prime}_{b+1}=u) = \mathrm{pref}(\omega(l))}} \lambda^{a} \lambda^{b} = \sum_{(j_{1}=k,...,j_{a+b+1}=l) \in \Omega(k,l)} \lambda^{a+b}    
\end{equation}
where $\Omega(k,l)$ is the multi-set of walks from $k$ to $l$ that are built from some prefix of $\omega(k)$ concatenated with some prefix of $\omega(l)$.
Therefore we can write $X$ as:
\begin{equation}
X = \sum_{r \in R(k,l)}\sum_{i=0}^{\mathrm{len}(r)}\lambda^{\mathrm{len}(r)}
1[\mathcal{E}(r,i)],
\end{equation}
where $R(k,l)$ is the set of walks from $k$ to $l$, $\mathrm{len}(r)$ stands for the length (number of edges) of walk $r$ and $\mathcal{E}(r,i)$ is an event that first $i$ edges of the walk $r$ (counting from $k$) form the prefix sub-walk of $\omega(k)$ and the remaining ones form the prefix sub-walk of $\omega(l)$.
Therefore we have:
\begin{align}
\begin{split}
\mathrm{K}_{p}^{\lambda,0}(k,l)=
\mathbb{E}_{\omega(k),\omega(l)}
\left[
\sum_{r \in R(k,l)}\sum_{i=0}^{\mathrm{len}(r)}\lambda^{\mathrm{len}(r)}
1[\mathcal{E}(r,i)]
\right]
= \\ \sum_{r \in R(k,l)}\sum_{i=0}^{\mathrm{len}(r)}\lambda^{\mathrm{len}(r)} \mathbb{P}_{\omega(k),\omega(l)}[\mathcal{E}(r,i)]
= \sum_{r \in R(k,l)}\sum_{i=0}^{\mathrm{len}(r)}\lambda^{\mathrm{len}(r)} \prod_{j=0}^{i-1}\frac{1-p}{\mathrm{deg}(r^{j})}
\prod_{t=0}^{\mathrm{len}(r)-i-1}\frac{1-p}{\mathrm{deg}(r^{\mathrm{len}(r)-1-t})},
\end{split}
\end{align}
where $r^{y}$ stands for the $y^{th}$ vertex of the walk $r$ starting from $k$ and $\mathrm{deg}(v)$ denotes the degree of a vertex $v$.

Therefore we obtain:
\begin{equation}
\sum_{r \in R(k,l)}\sum_{i=0}^{\mathrm{len}(r)}\left(\frac{(1-p)\lambda}{d_{\mathrm{max}}}\right)^{\mathrm{len}(r)} \leq \mathrm{K}_{p}^{\lambda,0}(k,l) \leq   
\sum_{r \in R(k,l)}\sum_{i=0}^{\mathrm{len}(r)}\left(\frac{(1-p)\lambda}{d_{\mathrm{min}}}\right)^{\mathrm{len}(r)}
\end{equation}

We conclude that:
\begin{equation}
\sum_{i=0}^{\infty}r_{k,l}(i) \left(\frac{(1-p)\lambda}{d_{\mathrm{max}}}\right)^{i}(i+1) \leq \mathrm{K}_{p}^{\lambda,0}(k,l) \leq 
\sum_{i=0}^{\infty}r_{k,l}(i) \left(\frac{(1-p)\lambda}{d_{\mathrm{min}}}\right)^{i}(i+1)
\end{equation}

To complete the proof, it suffices to notice that matrix $\mathrm{Adj}^{i}(\mathrm{G})$ encodes the number of walks of length $i$ between pairs of vertices in $\mathrm{G}$. 
\end{proof}

\subsection{Extensions of the results from Section \ref{sec:trees}}

We will provide here the proofs of the results presented in Section \ref{sec:additional_results}. We first introduce additional concepts wee will leverage in the proofs.

\begin{definition}[balanced separators]
Take some function $\mathbf{w}:V(G) \rightarrow \mathbb{R}_{\geq 0}$ and some $\alpha>0$. We say that a subset $\mathcal{S} \subseteq V(G)$ is the $\alpha$-balanced separator with respect to $\mathbf{w}$, if the set of vertices $\mathcal{C}$ of every connected component of the subgraph graph $G_{|V(G) \backslash \mathcal{S}}$ of $G$, induced by $V(G) \backslash \mathcal{S}$, satisfies: $\mathbf{w}(\mathcal{C}) \leq \alpha \cdot \mathbf{w}(V(G))$, where $\mathbf{w}(\mathcal{X}) \overset{\mathrm{def}}{=} \sum_{x \in \mathcal{X}} \mathbf{w}(x)$.
\end{definition}

\begin{lemma}
\label{lemma:sep}
If $G$ is a tree then for an arbitrary function $\mathbf{w}:V(G) \rightarrow \mathbb{R}_{\geq 0}$ the $\frac{1}{2}$-balanced separator consisting of two adjacent vertices can be found in time $O(L)$. 
\end{lemma}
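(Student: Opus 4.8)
The plan is to reduce the statement to the classical weighted-centroid lemma for trees and then observe that adjoining one neighbour of the centroid does no harm. Throughout write $W:=\mathbf{w}(V(G))$; we may assume $|V(G)|=L\ge 2$, since for $L\le 1$ there is no pair of adjacent vertices and nothing to separate (this is the only regime in which the lemma is invoked). First I would root $G$ at an arbitrary vertex $r$, run one depth-first search, and compute for every vertex $v$ the subtree weight $\sigma(v):=\mathbf{w}(\mathcal{T}_v)$, where $\mathcal{T}_v$ is the subtree rooted at $v$; this costs $O(L)$ and gives $\sigma(r)=W$.

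Next I would locate a \emph{centroid} by a top-down walk. Keep a current vertex $v$, initialised to $r$, and while some child $c$ of $v$ satisfies $\sigma(c)>W/2$, set $v\leftarrow c$. This is well defined because the subtree weights of the children of $v$ sum to at most $\sigma(v)\le W$, so at most one child is ``heavy''; and it runs in $O(L)$ because the children-sets of the successive current vertices are pairwise disjoint, so at most $L$ vertices are ever inspected. When the walk halts at a vertex $v$, the components of $G_{|V(G)\setminus\{v\}}$ are exactly the subtrees $\mathcal{T}_c$ over children $c$ of $v$, together with the ``upward'' component $V(G)\setminus\mathcal{T}_v$ when $v\ne r$. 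Each $\mathcal{T}_c$ has $\mathbf{w}$-weight $\sigma(c)\le W/2$ by the stopping rule, and the upward component has weight $W-\sigma(v)$, which is $< W/2$ because either $v=r$ (empty) or we entered $v$ from its parent precisely because $\sigma(v)>W/2$. Hence $\{v\}$ is already a $\tfrac12$-balanced separator with respect to $\mathbf{w}$.

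Finally I would pad to two adjacent vertices: choose any neighbour $u$ of $v$ (one exists since $G$ is connected and $L\ge 2$) and output $\mathcal{S}=\{v,u\}$. Because $V(G)\setminus\{v,u\}\subseteq V(G)\setminus\{v\}$, the induced subgraph $G_{|V(G)\setminus\{v,u\}}$ is a subgraph of $G_{|V(G)\setminus\{v\}}$, so every connected component $\mathcal{C}$ of the former is contained in a connected component of the latter and therefore still satisfies $\mathbf{w}(\mathcal{C})\le W/2=\tfrac12\,\mathbf{w}(V(G))$. All three phases — the DFS, the centroid walk, and selecting $u$ — run in $O(L)$, which proves the claim.

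The one step needing genuine care is the correctness of the centroid walk, specifically the assertion that the walk only ever needs to descend (never to back up towards the root): this is exactly what forces the upward component at the stopping vertex to be light, and it follows from the invariant that we move to a child only when that child's subtree already carries more than half of the total weight. The remaining arguments — disjointness of the inspected children-sets for the time bound, and monotonicity of component weights under vertex deletion for the padding step — are routine bookkeeping.
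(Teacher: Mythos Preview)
Your proposal is correct and is essentially the same approach the paper has in mind: the paper's own proof merely cites Lemma~7.19 of \cite{pilipczuk} and remarks that it ``relies on the standard tree-search,'' which is exactly the weighted-centroid walk you spell out. Your additional padding step (adjoining any neighbour of the centroid and using monotonicity of component weights under vertex deletion) is the obvious way to upgrade the one-vertex centroid separator to the two-adjacent-vertex form required by the statement.
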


\begin{proof}
The proof is given in the proof of Lemma 7.19 in \cite{pilipczuk} and relies on the standard tree-search.
\end{proof}

\subsubsection{The proof of Lemma \ref{lemma:fft-tree}}

\begin{proof}
Take some vector $\mathbf{x} \in \mathbb{R}^{L}$. The goal is to compute $\mathbf{Mx}$ in time $O(L\log^{2}(L))$. 
For the node $i$ in a tree $\mathcal{T}$, denote: 
\begin{equation}
s_{i}=\sum_{j \in \mathcal{T}} f(\mathrm{dist}(i,j))\mathbf{x}_{j}
\end{equation}
Thus we want to compute all $s_{i}$ in time $O(L\log^{2}(L))$. 
If $|V(\mathcal{T})| \leq 2$ then all the calculations can be trivially done in $O(1)$ time, so we will assume now that $|V(\mathcal{T})| > 2$.
Take the $\frac{1}{2}$-balanced separator $\{a,b\}$ in $\mathcal{T}$ (with respect to the standard measure that counts the number of vertices) that exists and can be found in time $O(L)$ by Lemma \ref{lemma:sep}. Denote by $T_{a}$ the set of those trees in $\mathcal{T}_{|V(\mathcal{T}) \backslash \{a,b\}}$ that are are incident to $a$ in $\mathcal{T}$ and by $T_{b}$ the set of those trees in $\mathcal{T}_{|V(\mathcal{T}) \backslash \{a,b\}}$ that are are incident to $b$ in $\mathcal{T}$. Note that one of these sets might be potentially empty. Let us assume, without loss of generality that $T_{a}$ is not empty.
Denote by $V_{a}$ the union of the set of all the vertices of all the elements of $T_{a}$ and by $V_{b}$ the corresponding set for $T_{b}$.
If $\frac{1}{10} \leq |V_{a}| \leq \frac{9}{10}|V(\mathcal{T})|$, take:
$\mathcal{T}_{1}$ to be the subtree of $\mathcal{T}$ induced by $V_{a} \cup \{a\}$ and $\mathcal{T}_{2}$ to be the subtree of $\mathcal{T}$ induced by $V_{b} \cup \{a,b\}$. Otherwise take this $c \in \{a,b\}$ such that $|V_{c}| > \frac{9}{10}|V(\mathcal{T})|$. Denote:
$T_{c}=\{T^{1},...,T^{m}\}$. Note that $m>0$ ($T_{c}$ is not empty).
By the definition of the balanced separator, we have: $|V(T^{i})| \leq \frac{1}{2}|V(\mathcal{T})|$ for $i=1,...,m$. On the other hand: $|V(T^{1})|+...+|V(T^{m})| \geq \frac{9}{10}|V(\mathcal{T})|$.
Denote by $i^{*}$ the smallest $i \in \{1,...,m\}$ such that $|V(T^{1})|+...+|V(T^{i^{*}})| \geq \frac{9}{10}|V(\mathcal{T})|$. Note that $i^{*} > 1$.
We have:
\begin{align}
\begin{split}
\frac{2}{5}|V(\mathcal{T})| = \frac{9}{10}|V(\mathcal{T})| - \frac{1}{2} |V(\mathcal{T})| \leq |V(T^{1})| + ... + |V(T^{i^{*}})| -|V(T^{i^{*}})|  \\ = |V(T^{1})| + ... + |V(T^{i^{*}-1})| \leq \frac{9}{10}|V(\mathcal{T})|
\end{split}
\end{align}

Denote by $\mathcal{T}_{1}$ a subtree of $\mathcal{T}$ induced by $V(T^{1}) \cup ... \cup V(T^{i^{*}-1}) \cup \{c\}$ and by $\mathcal{T}_{2}$ a subtree of $\mathcal{T}$ induced by $V(\mathcal{T}) \backslash (V(T^{1}) \cup ... \cup V(T^{i^{*}-1}))$.
Note that in both cases we obtain two trees: $\mathcal{T}_{1}$ and $\mathcal{T}_{2}$ sharing a single vertex and such that: $V(\mathcal{T}_{1}) \cup V(\mathcal{T}_{2})=V(\mathcal{T})$.
Furthermore, we have:
\begin{equation}
\label{eq:rec}
|V(\mathcal{T}_{1})| = f|V(\mathcal{T})| + c_{1},
|V(\mathcal{T}_{2})| = (1-f)|V(\mathcal{T})| + c_{2},
\end{equation}
for $c_{1},c_{2} \in \{0,1\}$ and $\frac{2}{5} \leq f \leq \frac{9}{10}$.
Denote: $\{v\} = V(\mathcal{T}_{1}) \cap V(\mathcal{T}_{2})$.

Denote: $y^{1}_{i} = \sum_{j \in Z^{1}_{i}} \mathbf{x}_{j}$ and $y^{2}_{i} = \sum_{j \in Z^{2}_{i}} \mathbf{x}_{j}$ for $i=1,...,|V(\mathcal{T})|$, where $Z^{k}_{i}$ for $k \in \{1,2\}$ stands for the set of vertices in $\mathcal{T}_{k}$ with distance $i$ from $v$. Note that all $y^{k}_{i}$ can be trivially computed in time $O(|V(\mathcal{T})|)$.

To compute all $s_{i}$ for $i=1,...,|V(\mathcal{T})|$, we first compute recursively the following expressions:
\begin{equation}
s^{k}_{i} = \sum_{j \in \mathcal{T}_{k}} f(\mathrm{dist}(i,j))\mathbf{x}_{j}
\end{equation}
for $i \in V(\mathcal{T}_{k})$ and $k \in \{1,2\}$.
In order to compute expressions $s_{i}$, in addition to expressions $s^{k}_{i}$, we need to include the cross-term contributions (for pairs of vertices where one is from $\mathcal{T}_{1}$ and the other from $\mathcal{T}_{2}$). Note that this can be trivially done in time $O(V(\mathcal{T}))$ as long as we have computed the following two vectors: $\mathbf{H}\mathbf{y}^{1}$ and $\mathbf{H}\mathbf{y}^{2}$, where $\mathbf{y}^{k}=(y^{k}_{1},...,y^{k}_{|V(\mathcal{T})|})^{\top}$ for $k \in \{1,2\}$ and $\mathbf{H}$ is the Hankel matrix with the first row of the form: $(f(2),f(3),...,f(|V(\mathcal{T})|+1))$ and the last column of the form: $(f(|V(\mathcal{T})|+1),...,f(|V(\mathcal{T})|+|V(\mathcal{T})|))^{\top}$. This can be done in time $O(|V(\mathcal{T})|\log(|V(\mathcal{T}|)))$ with Fast Fourier Transform.
We conclude that our algorithm needs two recursive calls for subproblems of sizes which are constant fractions of $|V(\mathcal{T})|$ and given in Eq. \ref{eq:rec}, as well as additional computations conducted in time $O(|V(\mathcal{T})|\log(|V(\mathcal{T}|)))$. That leads to the total time complexity $O(|V(\mathcal{T})|\log^{2}(|V(\mathcal{T}|)))$ which completes the proof.
\end{proof}

\subsubsection{The proof of Lemma \ref{lemma:diam}}

\begin{proof}
Let us root $\mathcal{T}$ in a fixed vertex $v_{0}$.
We denote by $\mathcal{T}_{i}$ the subtree of $\mathcal{T}$ rooted in $i$.
For every node $i$, we maintain an array $g_{i}$ of length $\mathrm{diam}(\mathcal{T})+1$, where: $g_{i}[l] = \sum_{j \in \mathcal{T}_{i}:\mathrm{dist}(i,j)=l} \mathbf{x}_{j}$.
Computing $g_{i}$ for vertices $i$ which are the leaves of the tree $\mathcal{T}$ rooted in $v_{0}$ can be trivially done in time $O(1)$ per vertex. Now assume that $i$ is not a leaf and denote by: $q_{1},...,q_{k}$ (for some $k>0$) its children.
Note that:  $g_{i}$ can be computed as follows:
\[
g_{i}[l] = \left\{\begin{matrix} \sum_{p=1}^{k} g_{q_{p}}[l-1], \textrm{ if } l \geq 1
\\ \mathbf{x}_{i}, \textrm{ if } l=0 \end{matrix}\right.
\]

We also define an array $h_{i}$ for every node $i$ as follows:
\begin{equation}
h_{i}[l] = \sum_{j \in \mathcal{T}:\mathrm{dist}(i,j)=l} \mathbf{x}_{j}    
\end{equation}

For a given array $\mathbf{z}$, denote by $\mathrm{circ}(\mathbf{z})$ its \textit{circulant-shift} given as:
$\mathrm{circ}(\mathbf{z})[l]=\mathbf{z}[l-1]$ for $l>0$ and $\mathrm{circ}(\mathbf{z})[0]=0$.
Note that: $h_{v_{0}}=g_{v_{0}}$. Furthermore, for $i \neq v_{0}$, we can compute $h_{i}$ from $g_{i}$ and $h_{p(i)}$, where $p(i)$ stands for the parent of $i$ in $\mathcal{T}$ (rooted in $v_{0}$), as follows:

\begin{equation}
h_{i} = g_{i} + \mathrm{circ}(h_{p(i)}-\mathrm{circ}(g_{i})),    
\end{equation}
where addition and subtraction are dimension-wise. Thus having computed all $g_{i}$, we can compute all $h_{i}$ by proceeding from the root $v_{0}$ in the order induced by the distance from the root. 
We conclude that calculating $h_{i}$ for all vertices $i$ takes time $O(L \cdot \mathrm{diam(\mathcal{T})})$. Therefore, as in the case of our proof for the unweighted tree with $f$ given as the exponential mapping of the affine transform, effectively we perform in two stages - bottom-up to compute $g_{i}$-arrays and from the root to the leaves to compute arrays $h_{i}$ (with the use of already computed arrays $g_{i}$). 

Denote: $\mathbf{w}=\mathbf{Mx}$. Note that:
\begin{equation}
\mathbf{w}_{i}=\sum_{l=0}^{\mathrm{diam}(\mathcal{T})}f(l)h_{i}(l)    
\end{equation}

Thus computing all $\mathbf{w}_{i}$ can be also conducted in time $O(L \cdot \mathrm{diam}(\mathcal{T}))$ and that completes the proof.
\end{proof}

\end{document}